\definecolor{rltred}{rgb}{0.5,0,0}
\definecolor{rltgreen}{rgb}{0,0.5,0}
\definecolor{rltblue}{rgb}{0,0,0.5}
\begin{document}


\title{Second-Order Non-Stationary
  Online Learning \\for Regression}


%
%

\author{Nina~Vaits, 
        Edward~Moroshko,
        and~Koby~Crammer,~\IEEEmembership{Fellow,~IEEE}
\thanks{All authors are with the Department
of Electrical Engineering, The Technion - Israel Institute of
Technology, Haifa 32000, Israel}
\thanks{Vaits and Moroshko have equal contribution to the paper.}
\thanks{Manuscript received Month date, 2013; revised month date,
  2013.}
\thanks{grant}}


%
%


\maketitle
%


\newtheorem{theorem}{Theorem}
\newtheorem{lemma}[theorem]{Lemma}
\newtheorem{corollary}[theorem]{Corollary}
\def\proofsketch{\par\penalty-1000\vskip .5 pt\noindent{\bf Proof sketch\/: }}
\def\ProofSketch{\par\penalty-1000\vskip .1 pt\noindent{\bf Proof sketch\/: }}

\newcommand{\todo}[1]{{~\\\bf TODO: {#1}}~\\}

\newfont{\msym}{msbm10}
\newcommand{\reals}{\mathbb{R}}
\newcommand{\half}{\frac{1}{2}}
\newcommand{\sign}{{\rm sign}}
\newcommand{\paren}[1]{\left({#1}\right)}
\newcommand{\brackets}[1]{\left[{#1}\right]}
\newcommand{\braces}[1]{\left\{{#1}\right\}}
\newcommand{\ceiling}[1]{\left\lceil{#1}\right\rceil}
\newcommand{\abs}[1]{\left\vert{#1}\right\vert}
\newcommand{\tr}{{\rm Tr}}
\newcommand{\pr}[1]{{\rm Pr}\left[{#1}\right]}
\newcommand{\prp}[2]{{\rm Pr}_{#1}\left[{#2}\right]}
\newcommand{\Exp}[1]{{\rm E}\left[{#1}\right]}
\newcommand{\Expp}[2]{{\rm E}_{#1}\left[{#2}\right]}
\newcommand{\eqdef}{\stackrel{\rm def}{=}}
\newcommand{\comdots}{, \ldots ,}
\newcommand{\true}{\texttt{True}}
\newcommand{\false}{\texttt{False}}
\newcommand{\mcal}[1]{{\mathcal{#1}}}
\newcommand{\argmin}[1]{\underset{#1}{\mathrm{argmin}} \:}
\newcommand{\normt}[1]{\left\Vert {#1} \right\Vert^2}
\newcommand{\step}[1]{\left[#1\right]_+}
\newcommand{\1}[1]{[\![{#1}]\!]}
\newcommand{\diag}{{\textrm{diag}}}
\newcommand{\KL}{{\textrm{D}_{\textrm{KL}}}}
\newcommand{\IS}{{\textrm{D}_{\textrm{IS}}}}
\newcommand{\EU}{{\textrm{D}_{\textrm{EU}}}}

\newcommand{\leftmarginpar}[1]{\marginpar[#1]{}}
\newcommand{\figline}{\rule{0.50\textwidth}{0.5pt}}
\newcommand{\pseudocodefont}{\normalsize}
\newcommand{\nolineskips}{
\setlength{\parskip}{0pt}
\setlength{\parsep}{0pt}
\setlength{\topsep}{0pt}
\setlength{\partopsep}{0pt}
\setlength{\itemsep}{0pt}}

\newcommand{\beq}[1]{\begin{equation}\label{#1}}
\newcommand{\eeq}{\end{equation}}
\newcommand{\beqa}{\begin{eqnarray}}
\newcommand{\eeqa}{\end{eqnarray}}
\newcommand{\secref}[1]{Sec.~\ref{#1}}
\newcommand{\figref}[1]{Fig.~\ref{#1}}
\newcommand{\exmref}[1]{Example~\ref{#1}}
\newcommand{\thmref}[1]{Theorem~\ref{#1}}
\newcommand{\sthmref}[1]{Thm.~\ref{#1}}
\newcommand{\defref}[1]{Definition~\ref{#1}}
\newcommand{\remref}[1]{Remark~\ref{#1}}
\newcommand{\chapref}[1]{Chapter~\ref{#1}}
\newcommand{\appref}[1]{Appendix~\ref{#1}}
\newcommand{\lemref}[1]{Lemma~\ref{#1}}
\newcommand{\propref}[1]{Proposition~\ref{#1}}
\newcommand{\claimref}[1]{Claim~\ref{#1}}

\newcommand{\corref}[1]{Corollary~\ref{#1}}
\newcommand{\scorref}[1]{Cor.~\ref{#1}}
\newcommand{\tabref}[1]{Table~\ref{#1}}
\newcommand{\tran}[1]{{#1}^{\top}}
\newcommand{\norm}{\mcal{N}}
\newcommand{\eqsref}[1]{Eqns.~(\ref{#1})}

\newcommand{\mb}[1]{{\boldsymbol{#1}}}
\newcommand{\up}[2]{{#1}^{#2}}
\newcommand{\dn}[2]{{#1}_{#2}}
\newcommand{\du}[3]{{#1}_{#2}^{#3}}
\renewcommand{\star}[1]{\up{#1}{*}}
\newcommand{\textl}[2]{{$\textrm{#1}_{\textrm{#2}}$}}

\newcommand{\vx}{\mb{x}}
\newcommand{\vxi}[1]{\vx_{#1}}
\newcommand{\vxii}{\vxi{t}}

\newcommand{\ve}{\mb{e}}
\newcommand{\vei}[1]{\ve_{#1}}
\newcommand{\veii}{\vei{t}}
\newcommand{\vet}{\ve^{\top}}
\newcommand{\veti}[1]{\vet_{#1}}
\newcommand{\vetii}{\veti{i}}

\newcommand{\yi}[1]{y_{#1}}
\newcommand{\yii}{\yi{t}}
\newcommand{\hyi}[1]{\hat{y}_{#1}}
\newcommand{\hyii}{\hyi{i}}

\newcommand{\vy}{\mb{y}}
\newcommand{\vyi}[1]{\vy_{#1}}
\newcommand{\vyii}{\vyi{i}}

\newcommand{\vn}{\mb{\nu}}
\newcommand{\vni}[1]{\vn_{#1}}
\newcommand{\vnii}{\vni{i}}

\newcommand{\vmu}{\mb{\mu}}
\newcommand{\vmus}{{\vmu^*}}
\newcommand{\vmuts}{{\vmus}^{\top}}
\newcommand{\vmui}[1]{\vmu_{#1}}
\newcommand{\vmuii}{\vmui{i}}

\newcommand{\vmut}{\vmu^{\top}}
\newcommand{\vmuti}[1]{\vmut_{#1}}
\newcommand{\vmutii}{\vmuti{i}}

\newcommand{\vdelta}{\mb{\delta}}
\newcommand{\vdeltat}{\tran{\vdelta}}
\newcommand{\vdeltai}[1]{\vdelta_{#1}}

\newcommand{\vsigma}{\mb \sigma}
\newcommand{\msigma}{\Sigma}
\newcommand{\msigmas}{{\msigma^*}}
\newcommand{\msigmai}[1]{\msigma_{#1}}
\newcommand{\msigmaii}{\msigmai{t}}

\newcommand{\mups}{\Upsilon}
\newcommand{\mupss}{{\mups^*}}
\newcommand{\mupsi}[1]{\mups_{#1}}
\newcommand{\mupsii}{\mupsi{i}}
\newcommand{\upssl}{\upsilon^*_l}

\newcommand{\vu}{\mb{u}}
\newcommand{\vut}{\tran{\vu}}
\newcommand{\vui}[1]{\vu_{#1}}
\newcommand{\vuti}[1]{\vut_{#1}}
\newcommand{\hvu}{\hat{\vu}}
\newcommand{\hvut}{\tran{\hvu}}
\newcommand{\hvur}[1]{\hvu_{#1}}
\newcommand{\hvutr}[1]{\hvut_{#1}}
\newcommand{\vw}{\mb{w}}
\newcommand{\vwi}[1]{\vw_{#1}}
\newcommand{\vwii}{\vwi{t}}
\newcommand{\vwti}[1]{\vwt_{#1}}
\newcommand{\vwt}{\tran{\vw}}

\newcommand{\tvw}{\tilde{\mb{w}}}
\newcommand{\tvwi}[1]{\tvw_{#1}}
\newcommand{\tvwii}{\tvwi{t}}

\newcommand{\vv}{\mb{v}}
\newcommand{\vvt}{\tran{\vv}}

\newcommand{\vvi}[1]{\vv_{#1}}
\newcommand{\vvti}[1]{\vvt_{#1}}
\newcommand{\lambdai}[1]{\lambda_{#1}}
\newcommand{\Lambdai}[1]{\Lambda_{#1}}

\newcommand{\vxt}{\tran{\vx}}
\newcommand{\hvx}{\hat{\vx}}
\newcommand{\hvxi}[1]{\hvx_{#1}}
\newcommand{\hvxii}{\hvxi{i}}
\newcommand{\hvxt}{\tran{\hvx}}
\newcommand{\hvxti}[1]{\hvxt_{#1}}
\newcommand{\hvxtii}{\hvxti{i}}
\newcommand{\vxti}[1]{\vxt_{#1}}
\newcommand{\vxtii}{\vxti{i}}

\newcommand{\vb}{\mb{b}}
\newcommand{\vbt}{\tran{\vb}}
\newcommand{\vbi}[1]{\vb_{#1}}

\newcommand{\hvy}{\hat{\vy}}
\newcommand{\hvyi}[1]{\hvy_{#1}}


\renewcommand{\mp}{P}
\newcommand{\mpd}{\mp^{(d)}}
\newcommand{\mpt}{\mp^T}
\newcommand{\tmp}{\tilde{\mp}}
\newcommand{\mpi}[1]{\mp_{#1}}
\newcommand{\mpti}[1]{\mpt_{#1}}
\newcommand{\mptii}{\mpti{i}}
\newcommand{\mpii}{\mpi{i}}
\newcommand{\mps}{Q}
\newcommand{\mpsi}[1]{\mps_{#1}}
\newcommand{\mpsii}{\mpsi{i}}
\newcommand{\tmpt}{\tmp^T}
\newcommand{\mz}{Z}
\newcommand{\mv}{V}
\newcommand{\mvi}[1]{\mv_{#1}}
\newcommand{\mvt}{V^T}
\newcommand{\mvti}[1]{\mvt_{#1}}
\newcommand{\mzt}{\mz^T}
\newcommand{\tmz}{\tilde{\mz}}
\newcommand{\tmzt}{\tmz^T}
\newcommand{\mx}{\mathbf{X}}
\newcommand{\ma}{\mathbf{A}}
\newcommand{\mxs}[1]{\mx_{#1}}

\newcommand{\mai}[1]{\ma_{#1}}
\newcommand{\mat}{\tran{\ma}}
\newcommand{\mati}[1]{\mat_{#1}}

\newcommand{\mc}{{C}}
\newcommand{\mci}[1]{\mc_{#1}}
\newcommand{\mcti}[1]{\mct_{#1}}

\newcommand{\md}{{\mathbf{D}}}
\newcommand{\mdi}[1]{\md_{#1}}

\newcommand{\mxi}[1]{\textrm{diag}^2\paren{\vxi{#1}}}
\newcommand{\mxii}{\mxi{i}}

\newcommand{\hmx}{\hat{\mx}}
\newcommand{\hmxi}[1]{\hmx_{#1}}
\newcommand{\hmxii}{\hmxi{i}}
\newcommand{\hmxt}{\hmx^T}
\newcommand{\mxt}{\mx^\top}
\newcommand{\mi}{I}
\newcommand{\mq}{Q}
\newcommand{\mqt}{\mq^T}
\newcommand{\mlam}{\Lambda}

\renewcommand{\L}{\mcal{L}}
\newcommand{\R}{\mcal{R}}
\newcommand{\X}{\mcal{X}}
\newcommand{\Y}{\mcal{Y}}
\newcommand{\F}{\mcal{F}}
\newcommand{\nur}[1]{\nu_{#1}}
\newcommand{\lambdar}[1]{\lambda_{#1}}
\newcommand{\gammai}[1]{\gamma_{#1}}
\newcommand{\gammaii}{\gammai{i}}
\newcommand{\alphai}[1]{\alpha_{#1}}
\newcommand{\alphaii}{\alphai{i}}
\newcommand{\lossp}[1]{\ell_{#1}}
\newcommand{\eps}{\epsilon}
\newcommand{\epss}{\eps^*}
\newcommand{\lsep}{\lossp{\eps}}
\newcommand{\lseps}{\lossp{\epss}}
\newcommand{\T}{\mcal{T}}

\newcommand{\kc}[1]{\begin{center}\fbox{\parbox{3in}{{\textcolor{green}{KC: #1}}}}\end{center}}
\newcommand{\nv}[1]{\begin{center}\fbox{\parbox{3in}{{\textcolor{blue}{NV: #1}}}}\end{center}}
\newcommand{\edward}[1]{\begin{center}\fbox{\parbox{3in}{{\textcolor{red}{EM: #1}}}}\end{center}}

\newcommand{\newstuffa}[2]{#2}
\newcommand{\newstufffroma}[1]{}
\newcommand{\newstufftoa}{}

\newcommand{\newstuff}[2]{#2}
\newcommand{\newstufffrom}[1]{}
\newcommand{\newstuffto}{}
\newcommand{\oldnote}[2]{}

\newcommand{\commentout}[1]{}
\newcommand{\mypar}[1]{\medskip\noindent{\bf #1}}

\newcommand{\inner}[2]{\left< {#1} , {#2} \right>}
\newcommand{\kernel}[2]{K\left({#1},{#2} \right)}
\newcommand{\tprr}{\tilde{p}_{rr}}
\newcommand{\hxr}{\hat{x}_{r}}
\newcommand{\projalg}{{PST }}
\newcommand{\projealg}[1]{$\textrm{PST}_{#1}~$}
\newcommand{\gradalg}{{GST }}

\newcounter {mySubCounter}
\newcommand {\twocoleqn}[4]{
  \setcounter {mySubCounter}{0} %
  \let\OldTheEquation \theequation %
  \renewcommand {\theequation }{\OldTheEquation \alph {mySubCounter}}%
  \noindent \hfill%
  \begin{minipage}{.40\textwidth}
\vspace{-0.6cm}
    \begin{equation}\refstepcounter{mySubCounter}
      #1
    \end {equation}
  \end {minipage}
~~~~~~
  \addtocounter {equation}{ -1}%
  \begin{minipage}{.40\textwidth}
\vspace{-0.6cm}
    \begin{equation}\refstepcounter{mySubCounter}
      #3
    \end{equation}
  \end{minipage}%
  \let\theequation\OldTheEquation
}

\newcommand{\vzero}{\mb{0}}

\newcommand{\smargin}{\mcal{M}}

\newcommand{\ai}[1]{A_{#1}}
\newcommand{\bi}[1]{B_{#1}}
\newcommand{\aii}{\ai{i}}
\newcommand{\bii}{\bi{i}}
\newcommand{\betai}[1]{\beta_{#1}}
\newcommand{\betaii}{\betai{i}}
\newcommand{\mar}{M}
\newcommand{\mari}[1]{\mar_{#1}}
\newcommand{\marii}{\mari{i}}
\newcommand{\nmari}[1]{m_{#1}}
\newcommand{\nmarii}{\nmari{i}}

\newcommand{\erf}{\Phi}

\newcommand{\var}{V}
\newcommand{\vari}[1]{\var_{#1}}
\newcommand{\varii}{\vari{i}}

\newcommand{\varb}{v}
\newcommand{\varbi}[1]{\varb_{#1}}
\newcommand{\varbii}{\varbi{i}}

\newcommand{\vara}{u}
\newcommand{\varai}[1]{\vara_{#1}}
\newcommand{\varaii}{\varai{i}}

\newcommand{\marb}{m}
\newcommand{\marbi}[1]{\marb_{#1}}
\newcommand{\marbii}{\marbi{i}}

\newcommand{\algname}{{AROW}}
\newcommand{\rlsname}{{RLS}}
\newcommand{\mrlsname}{{MRLS}}

\newcommand{\phia}{\psi}
\newcommand{\phib}{\xi}

\newcommand{\amsigmaii}{\tilde{\msigma}_t}
\newcommand{\amsigmai}[1]{\tilde{\msigma}_{#1}}
\newcommand{\avmuii}{\tilde{\vmu}_i}
\newcommand{\avmui}[1]{\tilde{\vmu}_{#1}}
\newcommand{\amarbii}{\tilde{\marb}_i}
\newcommand{\avarbii}{\tilde{\varb}_i}
\newcommand{\avaraii}{\tilde{\vara}_i}
\newcommand{\aalphaii}{\tilde{\alpha}_i}

\newcommand{\svar}{v}
\newcommand{\smar}{m}
\newcommand{\nsmar}{\bar{m}}

\newcommand{\vnu}{\mb{\nu}}
\newcommand{\vnut}{\vnu^\top}
\newcommand{\vz}{\mb{z}}
\newcommand{\vZ}{\mb{Z}}
\newcommand{\fphi}{f_{\phi}}
\newcommand{\gphi}{g_{\phi}}


\newcommand{\vtmui}[1]{\tilde{\vmu}_{#1}}
\newcommand{\vtmuii}{\vtmui{i}}

\newcommand{\zetai}[1]{\zeta_{#1}}
\newcommand{\zetaii}{\zetai{i}}


\newcommand{\vstate}{\bf{s}}
\newcommand{\vstatet}[1]{\vstate_{#1}}
\newcommand{\vstatett}{\vstatet{t}}

\newcommand{\mtran}{\bf{\Phi}}
\newcommand{\mtrant}[1]{\mtran_{#1}}
\newcommand{\mtrantt}{\mtrant{t}}

\newcommand{\vstatenoise}{\bf{\eta}}
\newcommand{\vstatenoiset}[1]{\vstatenoise_{#1}}
\newcommand{\vstatenoisett}{\vstatenoiset{t}}

\newcommand{\vobser}{\bf{o}}
\newcommand{\vobsert}[1]{\vobser_{#1}}
\newcommand{\vobsertt}{\vobsert{t}}

\newcommand{\mobser}{\bf{H}}
\newcommand{\mobsert}[1]{\mobser_{#1}}
\newcommand{\mobsertt}{\mobsert{t}}

\newcommand{\vobsernoise}{\bf{\nu}}
\newcommand{\vobsernoiset}[1]{\vobsernoise_{#1}}
\newcommand{\vobsernoisett}{\vobsernoiset{t}}

\newcommand{\mstatenoisecov}{\bf{Q}}
\newcommand{\mstatenoisecovt}[1]{\mstatenoisecov_{#1}}
\newcommand{\mstatenoisecovtt}{\mstatenoisecovt{t}}

\newcommand{\mobsernoisecov}{\bf{R}}
\newcommand{\mobsernoisecovt}[1]{\mobsernoisecov_{#1}}
\newcommand{\mobsernoisecovtt}{\mobsernoisecovt{t}}

\newcommand{\vestate}{\bf{\hat{s}}}
\newcommand{\vestatet}[1]{\vestate_{#1}}
\newcommand{\vestatett}{\vestatet{t}}
\newcommand{\vestatept}[1]{\vestatet{#1}^+}
\newcommand{\vestatent}[1]{\vestatet{#1}^-}

\newcommand{\mcovar}{\bf{P}}
\newcommand{\mcovart}[1]{\mcovar_{#1}}
\newcommand{\mcovarpt}[1]{\mcovart{#1}^+}
\newcommand{\mcovarnt}[1]{\mcovart{#1}^-}

\newcommand{\mkalmangain}{\bf{K}}
\newcommand{\mkalmangaint}[1]{\mkalmangain_{#1}}

\newcommand{\vkalmangain}{\bf{\kappa}}
\newcommand{\vkalmangaint}[1]{\vkalmangain_{#1}}

\newcommand{\obsernoise}{{\nu}}
\newcommand{\obsernoiset}[1]{\obsernoise_{#1}}
\newcommand{\obsernoisett}{\obsernoiset{t}}

\newcommand{\obsernoisecov}{r}
\newcommand{\obsernoisecovt}[1]{\obsernoisecov_{#1}}
\newcommand{\obsernoisecovtt}{\obsernoisecov}

\newcommand{\obsnscv}{s}
\newcommand{\obsnscvt}[1]{\obsnscv_{#1}}
\newcommand{\obsnscvtt}{\obsnscvt{t}}

\newcommand{\Psit}[1]{\Psi_{#1}}
\newcommand{\Psitt}{\Psit{t}}

\newcommand{\Omegat}[1]{\Omega_{#1}}
\newcommand{\Omegatt}{\Omegat{t}}

\newcommand{\ellt}[1]{\ell_{#1}}
\newcommand{\gllt}[1]{g_{#1}}

\newcommand{\chit}[1]{\chi_{#1}}

\newcommand{\ms}{\mathcal{M}}
\newcommand{\us}{\mathcal{U}}
\newcommand{\as}{\mathcal{A}}

\newcommand{\mn}{M}
\newcommand{\un}{U}

\newcommand{\set}{S}
\newcommand{\seti}[1]{S_{#1}}

\newcommand{\obj}{\mcal{C}}

\newcommand{\dta}[3]{d_{#3}\paren{#1,#2}}

\newcommand{\coa}{a}
\newcommand{\coc}{c}
\newcommand{\cob}{b}
\newcommand{\cor}{r}
\newcommand{\conu}{\nu}

\newcommand{\coat}[1]{\coa_{#1}}
\newcommand{\coct}[1]{\coc_{#1}}
\newcommand{\cobt}[1]{\cob_{#1}}
\newcommand{\cort}[1]{\cor_{#1}}
\newcommand{\conut}[1]{\conu_{#1}}

\newcommand{\coatt}{\coat{t}}
\newcommand{\coctt}{\coct{t}}
\newcommand{\cobtt}{\cobt{t}}
\newcommand{\cortt}{\cort{t}}
\newcommand{\conutt}{\conut{t}}

\newcommand{\rb}{R_B}
\newcommand{\proj}{\textrm{proj}}

\begin{abstract}
  The goal of a learner, in standard online learning, is to have the
  cumulative loss not much larger compared with the best-performing
  function from some fixed class. Numerous algorithms were shown to
  have this gap arbitrarily close to zero, compared with the best
  function that is chosen off-line. Nevertheless, many real-world
  applications, such as adaptive filtering, are non-stationary in
  nature, and the best prediction function may drift over time. We
  introduce two novel algorithms for online regression, designed to work well in non-stationary environment. Our first algorithm performs adaptive resets to forget the history,  
  while the second is last-step min-max optimal in context of a drift.
  We analyze both algorithms in the worst-case regret framework and
  show that they maintain an average loss close to that of the best
  slowly changing sequence of linear functions, as long as the
  cumulative drift is sublinear.  In addition, in the stationary case,
  when no drift occurs, our algorithms suffer logarithmic regret, as
  for previous algorithms.  Our bounds improve over the existing ones,
  and simulations demonstrate the usefulness of these algorithms
  compared with other state-of-the-art approaches.
\end{abstract}

\section{Introduction}
We consider the classical problem of online learning for
regression. On each iteration, an algorithm receives a new instance
(for example, input from an array of antennas) and outputs a
prediction of a real value (for example distance to the source). The
correct value is then revealed, and the algorithm suffers a loss based
on both its prediction and the correct output value.

In the past half a century many algorithms were proposed~(see e.g. a
comprehensive book~\cite{CesaBiGa06}) for this problem, some of which
are able to achieve an average loss arbitrarily close to that of the
best function in retrospect. Furthermore, such guarantees hold even if
the input and output pairs are chosen in a fully adversarial manner
with no distributional assumptions. Many of these algorithms exploit
first-order information (e.g. gradients).

Recently there is an increased amount of interest in algorithms that
exploit second order information. For example the second order
perceptron algorithm~\cite{CesaBianchiCoGe05}, confidence-weighted
learning~\cite{DredzeCrPe08,CrammerDrPe08}, adaptive regularization of
weights (AROW)~\cite{CrammerKuDr09}, all designed for classification;
and AdaGrad~\cite{DuchiHS10} and FTPRL~\cite{McMahanS10}  for
general loss functions.

Despite the extensive and impressive guarantees that can be made for
algorithms in such settings, competing with the best {\em fixed}
function is not always good enough. In many real-world applications,
the true target function is not {\em fixed}, but is {\em slowly} changing
over time. Consider a filter designed to cancel echoes in a hall. Over
time, people enter and leave the hall, furniture are being moved,
microphones are replaced and so on. When this drift occurs, the
predictor itself must also change in order to remain relevant.

With such properties in mind, we develop new learning algorithms,
based on second-order quantities, designed to work with target
drift. The goal of an algorithm is to maintain an average loss close
to that of the best slowly changing sequence of functions, rather than
compete well with a single function. We focus on problems for which
this sequence consists only of linear functions. Most previous
algorithms~(e.g. \cite{LittlestoneW94,ECCC-TR00-070,HerbsterW01,KivinenSW01})
designed for this problem are based on first-order information, such
as gradient descent, with additional control on the norm of the weight-vector used for
prediction~\cite{KivinenSW01} or the number of inputs used to define
it~\cite{CavallantiCG07}.

In \secref{sec:problem_setting} we
review three second-order learning algorithms: the recursive least
squares (RLS)~\cite{Hayes} algorithm, the Aggregating Algorithm for
regression (AAR)~(\cite{vovkAS,Vovk01}), which can be shown to be
derived based on a last-step min-max approach~\cite{Forster}, and the AROWR
algorithm~\cite{VaitsCr11} which is a modification of the AROW
algorithm~\cite{CrammerKuDr09} for regression.  All three algorithms
obtain logarithmic regret in the stationary setting, although derived
using different approaches, and they are not equivalent in general.
In \secref{sec:non_stat} we formally present the non-stationary
setting both in terms of algorithms and in terms of theoretical
analysis.

For the RLS algorithm, a variant called CR-RLS~(\cite{Salgado,Goodwin,Chen}) for
the non-stationary setting was described, yet not analyzed, before.
In \secref{sec:algorithms} we present two algorithms for the
non-stationary setting, that build on the other two
algorithms. Specifically, in \secref{sec:ARCOR} we extend the AROWR
algorithm for the non-stationary setting, yielding an algorithm
called ARCOR for adaptive regularization with covariance reset. Similar to CR-RLS, ARCOR performs a step called covariance-reset,
which resets the second-order information from time-to-time, yet it is
done based on the properties of this covariance-like matrix, and not
based on the number of examples observed, as in CR-RLS.

In \secref{sec:LASER} we derive different algorithm based on the
last-step min-max approach proposed by Forster~\cite{Forster} and later
used~\cite{TakimotoW00} for online density estimation. On each
iteration the algorithm makes the optimal min-max prediction with
respect to the regret, assuming it is the last iteration. Yet, unlike
previous work~\cite{Forster}, it is optimal when a drift is
allowed. As opposed to the derivation of the last-step min-max
predictor for a fixed vector, the resulting optimization problem is
not straightforward to solve.  We develop a dynamic program (a
recursion) to solve this problem, which allows to compute the optimal
last-step min-max predictor. We call this algorithm LASER for last
step adaptive regressor algorithm. We conclude the algorithmic part in
\secref{sec:discussion} in which we compare all non-stationary
algorithms head-to-head highlighting their similarities and
differences. Additionally, after describing the details of our
algorithms, we provide in \secref{sec:related_work} a comprehensive
review of previous work, that puts our contribution in
perspective. Both algorithms reduce to their stationary counterparts
when no drift occurs.

We then move to \secref{sec:regret} which summarizes our next
contribution stating and proving regret bounds for both algorithms.
We analyse both algorithms in the worst-case regret-setting and show
that as long as the amount of average-drift is sublinear, the
average-loss of both algorithms will converge to the average-loss of
the best sequence of functions. Specifically, we show in
\secref{sec:analysis_arcor} that the cumulative loss of ARCOR after
observing $T$ examples, denoted by $ L_T(\textrm{ARCOR})$, is upper
bounded by the cumulative loss of any {\em sequence} of weight-vectors
$\{ \vui{t}\}$, denoted by $L_T( \{ \vui{t} \} )$, plus an additional
term $\mcal{O} \paren{ T^{1/2}\paren{ V( \{ \vui{t} \} ) }^{1/2}\log
  T}$ where $V( \{ \vui{t} \} ) $ measures the differences (or
variance) between consecutive weight-vectors of the sequence $\{
\vui{t} \}$.  Later, we show in \secref{sec:analysis_laser} a similar
bound for the loss of LASER, denoted by $L_T(\textrm{LASER})$, for which the second
term is $\mcal{O} \paren{ T^{2/3}\paren{ V( \{ \vui{t} \}
    )}^{1/3}}$. We emphasize that in both bounds the measure $V( \{
\vui{t} \} )$ of differences between consecutive weight-vectors is not
defined in the same way, and thus, the bounds are not comparable in general.

In \secref{sec:simulations} we report results of simulations designed
to highlight the properties of both algorithms, as well as the
commonalities and differences between them. We conclude in
\secref{sec:summary_conclusions} and most of the technical proofs
appear in the appendix.

The ARCOR algorithm was presented in a shorter
publication~\cite{VaitsCr11}, as well with its analysis and some of
its details. The LASER algorithm and its analysis was also presented
in a shorter version~\cite{MoroshkoCr13}. The contribution of this submission is
three-fold. First, we provide head-to-head comparison of three
second-order algorithms for the stationary case. Second, we fill the
gap of second-order algorithms for the non-stationary
case. Specifically, we add to the CR-RLS (which extends RLS) and
design second-order algorithms for the non-stationary case and analyze
them, building both on AROWR and AAR. Our algorithms are derived from
different principles from each other, which is reflected in our
analysis. Finally, we provide empirical evidence showing that under
various conditions different algorithm performs the best.

Some notation we use throughout the paper:
For a symmetric matrix $\msigma$ we denote its j$th$ eigenvalue by
$\lambda_j(\msigma)$. Similarly we denote its smallest eigenvalue by
$\lambda_{min}(\msigma) = \min_j \lambda_j(\msigma)$, and its largest
eigenvalue by $\lambda_{max}(\msigma) = \max_j \lambda_j(\msigma)$.
For a vector $\vu\in\reals^d$, we denote by $\Vert\vu \Vert$ the $\ell_2$-norm of the vector.
Finally, for  $y>0$ we define $clip(x,y)=\sign(x) \min\{\vert x \vert,
y\}$.

\section{Stationery Online Learning}
\label{sec:problem_setting}
We focus on the regression task evaluated with the
squared loss.  Our algorithms are designed for the online setting and
work in iterations (or rounds). On each round an online algorithm
receives an input-vector $\vxi{t}\in\reals^d$ and predicts a real
value $\hyi{t}\in\reals$. Then the algorithm receives a target label
$\yi{t}\in\reals$ associated with $\vxi{t}$, and uses it to update its
prediction rule, and proceeds to the next round.

At each iteration, the performance of the algorithm is evaluated using the
squared loss, $\ell_t(\textrm{alg})=\ell\paren{ \yi{t}, \hyi{t} }
= \paren{\hyi{t}- \yi{t} }^2$. The cumulative loss suffered  by the
 algorithm
over $T$ iterations is, \(
L_{T}(\textrm{alg})=\sum_{t=1}^{T}\ell_{t}(\textrm{alg})
. 
\) 

The goal of the algorithm is to have low cumulative loss compared to
predictors from some class. A large body of work, which we adopt as
well, is focused on linear prediction functions of the form
$f(\vx)=\vxt\vu$ where $\vu\in\reals^d$ is some weight-vector. We
denote by $\ell_t(\vu) = \paren{\vxti{t}\vu-\yi{t}}^2$ the
instantaneous loss of a weight-vector $\vu$. The cumulative loss
suffered by a fixed weight-vector $\vu$ is, $L_T(\vu) = \sum_t^T
\ell_t(\vu)$.

The goal of the learning algorithm is
to suffer low loss compared with the best linear function. Formally we
define the regret of an algorithm to be
\begin{align}
{R}(T) = L_T(\textrm{alg}) - \inf_\vu L_T(\vu) ~.
\label{stationary_regret}
\end{align}

The goal of an algorithm is to have ${R}(T) = o(T)$, such that
the average loss will converge to the average loss of the best linear
function $\vu$.

Numerous algorithms were developed for this problem, see a
comprehensive review in the book of Cesa-Bianchi and Lugosi~\cite{CesaBiGa06}.
Among these, a few second-order online algorithms for regression were
proposed in recent years, which we summarize in
\tabref{table:stationary_algorithms}.
One approach for online learning is to reduce the problem into
consecutive batch problems, and specifically use all previous examples
to generate a classifier, which is used to predict current example.
Recursive least squares (RLS)~\cite{Hayes} approach, for example, sets
a weight-vector to be the solution of the following optimization
problem,
\[
\vwi{t} = \arg\min_\vw \paren{\sum_{i=1}^{t} r^{t-i}\paren{\yi{i}-\vw\cdot\vxi{i}}^2}
\]
Since the last problem grows with time, the well known recursive least
squares (RLS)~\cite{Hayes} algorithm was developed to generate a
solution recursively.  The RLS algorithm maintains both a vector
$\vwi{t}$ and a positive semi-definite (PSD) matrix $\msigmai{t}$. On
each iteration, after making a prediction $\hyi{t} =
\vxti{t}\vwi{t-1}$, the algorithm receives the true label $\yi{t}$ and
updates,
\begin{align}
\vwi{t}  &=
\vwi{t-1}+\frac{(\yi{t}-\vxti{t}\vwi{t-1})\msigmai{t-1}\vxi{t}}{\cor+\vxti{t}\msigmai{t-1}\vxi{t}}\label{rls_update_weights}\\
{\msigma}_{t}^{-1} &= \cor \msigmai{t-1}^{-1} +\vxi{t}\vxti{t}\label{rls_update_sigma}~.
\end{align}
The update of the prediction vector $\vwi{t}$ is additive, with vector
$\msigmai{t-1}\vxi{t}$ scaled by the error
$(\yi{t}-\vxti{t}\vwi{t-1})$ over the norm of the input measured using
the norm defined by the matrix $\vxti{t}\msigmai{t-1}\vxi{t}$. The
algorithm is summarized in the right column of
\tabref{table:stationary_algorithms}.

The Aggregating Algorithm for regression (AAR)~\cite{vovkAS,Vovk01},
summarized in the middle column of
\tabref{table:stationary_algorithms}, was introduced by Vovk and it is
similar to the RLS algorithm, except it shrinks its predictions. The
AAR algorithm was shown to be last-step min-max optimal by
Forster~\cite{Forster}. Given a new input $\vxi{T}$ the algorithm
predicts $\hyi{T}$ which is the minimizer of the following problem,
\begin{align}
\arg\min_{\hyi{T}} \max_{\yi{T}} \brackets{\sum_{t=1}^{T} (\yi{t} -
  \hyi{t})^2  - \inf_{\vu} \paren{b\left\Vert \vu\right\Vert
    ^{2}+L_{T}(\vu)}}.\!\!\!
\label{minmax_algorithm_1}
\end{align}
Forster proposed also a simpler analysis with the same regret bound.

Finally, the AROWR algorithm~\cite{VaitsCr11} is a modification of the
AROW algorithm~\cite{CrammerKuDr09} for regression. In a nutshell, the
AROW algorithm maintains a Gaussian distribution parameterized by a
mean $\vwi{t}\in\reals^d$ and a full covariance matrix
$\msigmaii\in\reals^{d \times d}$. Intuitively, the mean $\vwi{t}$
represents a current linear function, while the covariance matrix
$\msigmai{t}$ captures the uncertainty in the linear function
$\vwi{t}$. Given a new example $(\vxii,\yii)$ the algorithm uses its
current mean to make a prediction $\hyi{t} = \vxti{t}\vwi{t-1}$.
AROWR then sets the new distribution to be the solution of the
following optimization problem,
\begin{align}
 \arg\min_{\vw, \msigma} ~& \KL\paren{ \norm\paren{\vw, \msigma} \,\Vert\,
    \norm\paren{\vwi{t-1}, \msigmai{t-1}}} \nonumber\\
  ~&  + \frac{1}{2r}
  \paren{\yi{t} - \vwt\vxii}^2+ \frac{1}{2r} \paren{\vxti{t}
    \msigma \vxi{t}}
\label{arowr_objective}
\end{align}
This optimization problem is similar to the one of
AROW~\cite{CrammerKuDr09} for classification, except we use the square
loss rather than squared-hinge loss used in AROW.  Intuitively, the
optimization problem trades off between three requirements. The first
term forces the parameters not to change much per example, as the
entire learning history is encapsulated within them. The second term
requires that the new vector $\vwi{t}$ should perform well on the
current instance, and finally, the third term reflects the fact that
the uncertainty about the parameters reduces as we observe the current
example $\vxi{t}$.


The weight vector solving this optimization problem (details given by
Vaits and Crammer~\cite{VaitsCr11}) is given by,
\begin{align}
\vwi{t}  
& = \vwi{t-1} +\paren{ \frac{ \yi{t}- \vwi{t-1}\cdot\vxi{t}  }{\obsernoisecov + \vxti{t}\msigmai{t-1} \vxi{t}
  }  } \msigmai{t-1} \vxi{t}\label{update_rule_mean_rls}~,
\end{align}
and the optimal covariance matrix is,
%
\begin{align}
\msigmai{t}^{-1} = \msigmai{t-1}^{-1} +
\frac{1}{r} \vxi{t}\vxti{t} ~.
\label{update_rule_sigma}
\end{align}
The algorithm is summarized in the left column of
\tabref{table:stationary_algorithms}.  Comparing AROW to RLS we
observe that while the update of the weights of
\eqref{update_rule_mean_rls} is equivalent to the update of RLS in
\eqref{rls_update_weights}, the update of the matrix
\eqref{rls_update_sigma} for RLS is not equivalent to
\eqref{update_rule_sigma}, as in the former case the matrix goes via a
multiplicative update as well as additive, while in
\eqref{update_rule_sigma} the update is only additive. The two updates
are equivalent only by setting $\cor=1$.  Moving to AAR, we note that
the update rules for $\vwi{t}$ and $\msigmai{t}$ in AROWR and AAR are
the same if we define $\msigmai{t}^{AAR}=\msigmai{t}^{AROWR}/r$, but
AROWR does not shrink its predictions as AAR.  Thus all three
algorithms are not equivalent, although very similar.

\section{Non-Stationary Online Learning}
\label{sec:non_stat}
All previous algorithms assume both by design and analysis that the data is
stationary. The analysis of all algorithms compares their performance
to that of a single fixed weight vector $\vu$, and all suffer regret
that is logarithmic is $T$.

We use an extended notion of evaluation, comparing
our algorithms to a sequence of functions. We define the loss suffered
by such a sequence to be,
\[
L_T(\vui{1}\comdots\vui{T}) = L_T(\{\vui{t}\}) = \sum_t^T \ell_t(\vui{t})~,
\]
and the regret is then defined to be,
\begin{align}
{R}(T) = L_T(\textrm{alg}) - \inf_{\vui{1}\comdots\vui{T}}
L_T(\{\vui{t}\}) ~.
\label{nonstationry_regret}
\end{align}
We focus on algorithms that are able to compete against sequences of
weight-vectors, $(\vui{1} \comdots \vui{T})\in \reals^d \times \dots
\times \reals^d$, where $\vui{t}$ is used to make a prediction for the t$th$ example
$(\vxi{t},\yi{t})$.

Clearly, with no restriction over the set $\{\vui{t}\}$ the right term of
the regret can easily be zero by setting, $\vui{t} = \vxi{t}
(\yi{t}/\normt{\vxi{t}})$, which implies $\ell_t(\vui{t})=0$ for all
$t$.
Thus, in the analysis below we will make use of the total drift of the weight-vectors defined to be,
\[
V^{(P)} = V^{(P)}_T(\{\vui{t}\}) = \sum_{t=1}^{T-1} \Vert \vui{t}-\vui{t+1} \Vert ^P ~,
\]
where $P\in\{1,2\}$. 

For all three algorithms, as was also observed previously in the context
of CW~\cite{DredzeCrPe08}, AROW~\cite{CrammerKuDr09},
AdaGrad~\cite{DuchiHS10} and FTPRL~\cite{McMahanS10}, the matrix
$\msigma$ can be interpreted as adaptive learning rate. As these
algorithms process more examples, that is larger values of $t$, the
eigenvalues of the matrix ${\msigma}_{t}^{-1}$ increase, and the
eigenvalues of the matrix ${\msigma}_{t}$ decrease, and we get that
the rate of updates is getting smaller, since the additive term
$\msigmai{t-1}\vxi{t}$ is getting smaller.
As a
consequence the algorithms will gradually stop updating using current
instances which lie in the subspace of examples that were previously
observed numerous times. This property leads to a very fast
convergence in the stationary case.
However,
when we allow these algorithms to be compared with a sequence of
weight-vectors, each applied to a different input example, or
equivalently, there is a drift or shift of a good prediction vector,
these algorithms will perform poorly, as they will
converge and not be able to adapt to the non-stationarity nature of
the data.

This phenomena motivated the proposal of the CR-RLS
algorithm~(\cite{Salgado,Goodwin,Chen}), which re-sets the covariance
matrix every fixed number of input examples, causing the algorithm not
to converge or get stuck. The pseudo-code of CR-RLS algorithm is given
in the right column of \tabref{table:algorithms}. The only difference
of CR-RLS from RLS is that after updating the matrix $\msigmai{t}$, the
algorithm checks whether $T_0$ (a predefined natural number) examples
were observed since the last restart, and if this is the case, it sets
the matrix to be the identity matrix. Clearly, if $T_0=\infty$ the
CR-RLS algorithm is reduced to the RLS algorithm.


\begin{center}
\begin{table*}[ht]
{\small
\hfill{}
\caption{Algorithms for stationary setting} 
\label{table:stationary_algorithms}
\begin{tabulary}{\textwidth}{|C|C|L|L|L|} 
\hline                       
 &  & \bf AROWR & \bf AAR  & \bf RLS \\ [0.5ex] 
\hline                  
 Parameters  & & $0<\cor$ & $0<b$ & $0<\cor\leq1$ \\ 
\hline
 Initialize &  & $\vwi{0}=0 ~,~ \msigmai{0}=\mi$ &
 $\vwi{0}=0 ~,~ \msigmai{0}=b^{-1}\mi$ & $\vwi{0}=0 ~,~ \msigmai{0}=\mi$ \\ [0.5ex]
\hline
 & & \multicolumn{3}{|c|}{ Receive an instance $\vxi{t}$}  \\
\cline{2-5}
For & Output & \[\hyi{t}=\vxti{t}\vwi{t-1}\] & \[\hyi{t}=\frac{\vxti{t}\vwi{t-1}}{1+\vxti{t}\msigmai{t-1}\vxi{t}}\] & \[\hyi{t}=\vxti{t}\vwi{t-1}\] \\
$t=1 ... T$ & prediction & & &\\
\cline{2-5}
 & & \multicolumn{3}{|c|}{Receive a correct label $\yi{t}$ }  \\
 \cline{2-5}
 & Update $\msigmai{t}$: & \[\msigma_{t}^{-1} = \msigmai{t-1}^{-1} + \frac{1}{\cor}\vxi{t}\vxti{t}\] & \[\msigma_{t}^{-1} = \msigmai{t-1}^{-1}
+ \vxi{t}\vxti{t}\] & \[\msigma_{t}^{-1} = \cor\msigmai{t-1}^{-1} + \vxi{t}\vxti{t}\] \\
\cline{2-5}
 & Update $\vwii$: & \vspace{0.2cm}\[\begin{array}{ll}\vwi{t}  &\!\!\!\!=   \vwi{t-1}\\&\!\!\!\!+\frac{(\yi{t}-\vxti{t}\vwi{t-1})\msigmai{t-1}\vxi{t}}{\cor+\vxti{t}\msigmai{t-1}\vxi{t}}\\~\\ \end{array}\]
 & \[\!\!\!\!\!\!\begin{array}{ll}\vwi{t}  &\!\!\!\!\!= \vwi{t-1}\\&\!\!\!\!\!+\frac{(\yi{t}-\vxti{t}\vwi{t-1})\msigmai{t-1}\vxi{t}}{1+\vxti{t}\msigmai{t-1}\vxi{t}}\end{array}\]
& \[\begin{array}{ll}\vwi{t}  &\!\!\!\!= \vwi{t-1}\\&\!\!\!\!+\frac{(\yi{t}-\vxti{t}\vwi{t-1})\msigmai{t-1}\vxi{t}}{\cor+\vxti{t}\msigmai{t-1}\vxi{t}}\end{array}\] \\
 \hline
Output &  & $\vwi{T} \ ,\ \msigmai{T}$ & $\vwi{T} \ ,\ \msigmai{T}$ & $\vwi{T} \ ,\ \msigmai{T}$\\
 [1ex]     
\hline                  
 Extension to non-stationary setting  & & {\bf ARCOR} \secref{sec:ARCOR} below&
 {\bf LASER} \secref{sec:LASER} ~~~~ below&  {\bf CR-RLS}~\cite{Salgado,Goodwin,Chen} \\
\hline  
 Analysis  & & yes, \secref{sec:analysis_arcor} ~~~~~~~~~ below&
yes, \secref{sec:analysis_laser} ~~~~~~~~~ below&  No\\
\hline  
\end{tabulary}}
\hfill{}
\end{table*}
\end{center}

\section{Algorithms for Non-Stationary Regression}
\label{sec:algorithms}
In this work we fill the gap and propose extension to non-stationary
setting for the two other algorithms in
\tabref{table:stationary_algorithms}.
Similar to CR-RLS, both algorithms modify the matrix $\msigmai{t}$ to
prevent its eigen-values to shrink to zero. The first algorithm,
described in \secref{sec:ARCOR}, extends AROWR to the non-stationary
setting and is similar in spirit to CR-RLS, yet the restart operations
it performs depend on the spectral properties of the covariance
matrix, rather than the time index $t$. Additionally, this algorithm
performs a projection of the weight vector into a predefined
ball. Similar technique was used in first order algorithms by Herbster
and Warmuth~\cite{HerbsterW01}, and Kivinen and Warmuth~\cite{Kiv_War}. Both steps are motivated both from the design and
analysis of AROWR. Its design is composed of solving small
optimization problems defined in~\eqref{arowr_objective}, one per
input example.  The non-stationary version performs explicit
corrections to its update, in order to prevent from the covariance
matrix to shrink to zero, and the weight-vector to grow too fast.

The second algorithm described in \secref{sec:LASER} is based on a
last-step min-max prediction principle and objective, where we replace
$L_T(\vu)$ in \eqref{minmax_algorithm_1} with $L_T(\{\vui{t}\})$ and
some additional modifications preventing the solution being
degenerate. Here the algorithmic modifications from the original AAR
algorithm are implicit and are due to the modifications of the
objective. The resulting algorithm smoothly interpolates the
covariance matrix with a unit matrix.

\subsection{ARCOR: Adaptive regularization of weights for Regression with COvariance Reset}
\label{sec:ARCOR}
Our first algorithm is based on the AROWR. We propose two
modifications to \eqref{update_rule_mean_rls} and
\eqref{update_rule_sigma}, which in combination overcome the problem
that the algorithm's learning rate gradually goes to zero.  The
modified algorithm operates on segments of the input sequence. In each
segment indexed by $i$, the algorithm checks whether the lowest
eigenvalue of $\msigmai{t}$ is greater than a given lower bound
$\Lambda_i$. Once the lowest eigenvalue of $\msigmaii$ is smaller than
$\Lambda_i$ the algorithm resets $\msigmaii=\mi$ and updates the value
of the lower bound $\Lambda_{i+1}$. Formally, the algorithm uses the
update \eqref{update_rule_sigma} to compute an intermediate candidate
for $\msigmai{t}$, denoted by
\begin{align}
\amsigmaii = \paren{\msigmai{t-1}^{-1} + \frac{1}{r} \vxi{t}\vxti{t}}^{-1} \label{sigma_alg}.
\end{align}
If indeed $\amsigmaii \succeq
\Lambda_i \mi$ then it sets $\msigmaii=\amsigmaii$, otherwise it sets
$\msigmaii=\mi$ and the segment index is increased by $1$.

Additionally, before our modification, the norm of the weight vector $\vwi{t}$ did
not increase much as the ``effective'' learning rate (the matrix
$\msigmai{t}$) went to zero. After our update, as the learning rate is
effectively bounded from below, the norm of $\vwi{t}$ may increase too
fast, which in turn will cause a low update-rate in non-stationarity
inputs.

We thus employ additional modification which is exploited by the
analysis. After updating the mean $\vwii$ as in \eqref{update_rule_mean_rls}
\begin{align}
\tvwi{t} & = \vwi{t-1}+\frac{(\yi{t}-\vxti{t}\vwi{t-1})\msigmai{t-1}\vxi{t}}{\cor+\vxti{t}\msigmai{t-1}\vxi{t}}\label{mu_alg},
\end{align}
we project it into a ball $B$ around the
origin of radius $\rb$ using a Mahalanobis distance. Formally, we
define the function $\proj(\tvw, \msigma, \rb)$ to be the solution of
the following optimization problem,
\begin{align*}
\arg\min_{ \Vert \vw\Vert\leq\rb} \half\paren{\vw-\tvw}^{\top}
\msigma^{-1} \paren{\vw-\tvw}
\end{align*}
We write the Lagrangian,
\begin{align*}
\mcal{L} = \half\paren{\vw-\tvw}^{\top}
\msigma^{-1} \paren{\vw-\tvw} + \alpha \paren{\half\Vert \vw\Vert^2-\half\rb^2 }~.
\end{align*}
Setting the gradient with respect to $\vw$ to zero we get,
\(
\msigma^{-1} \paren{\vw-\tvw} + \alpha \vw = 0
\).
Solving for $\vw$ we get
\begin{align*}
\vw = \paren{\alpha\mi + \msigma^{-1}}^{-1} \msigma^{-1}\tvw
= \paren{\mi +
  \alpha\msigma}^{-1}\tvw ~.
\end{align*}
From KKT conditions we get that if $\Vert\tvw\Vert\leq\rb$ then
$\alpha=0$ and $\vw=\tvw$. Otherwise, $\alpha$ is the unique positive
scalar that satisfies $\Vert \paren{\mi + \alpha\msigma}^{-1}\tvw \Vert = \rb$.
The value of $\alpha$ can be found using binary search and eigen-decomposition of the
matrix $\msigma$. We write explicitly $\msigma = \mv \mlam \mv^\top$
for a diagonal matrix $\mlam$. By
denoting $\vu = \mv^\top \tvw$  we rewrite the last equation, \( \Vert \paren{\mi + \alpha\mlam}^{-1}\vu\Vert = \rb \).
We thus wish to find $\alpha$ such that $\sum_j^d \frac{u_j^2}{(1+
  \alpha\mlam_{j,j})^2} = \rb^2$. It can be done using a binary
  search for $\alpha \in [0,a]$ where $a = (\Vert \vu \Vert/
  \rb-1)/\lambda_{\min}(\mlam)$.
To summarize, the projection step can be performed in time cubic in $d$
and logarithmic in $\rb$ and $\Lambda_i$.

We call the algorithm {ARCOR} for adaptive regularization with
covariance reset. A pseudo-code of the algorithm is summarized in the
left column of \tabref{table:algorithms}. We defer a comparison of
ARCOR and CR-RLS after the presentation of our second algorithm now.


\begin{center}
\begin{table*}[ht]
{\small
\hfill{}
\caption{ARCOR, LASER and CR-RLS algorithms} 
\label{table:algorithms} 
\begin{tabulary}{\textwidth}{|C|C|L|L|L|} 
\hline                       
 &  & \bf ARCOR & \bf LASER  & \bf CR-RLS \\ [0.5ex] 
\hline                  
 Parameters  & & $0<\cor,R_B$ ~,~ a sequence $1 > \Lambda_1 \geq
\Lambda_2 ... $ & $0<b<c$ & $0<\cor\leq1, T_0\in\mathbb{N}$ \\ 
\hline
 Initialize &  & $\vwi{0}=0 ~,~ \msigmai{0}=\mi ~,~ i=1$ &
 $\vwi{0}=0 ~,~ \msigmai{0}=\frac{c-b}{bc}\mi$ & $\vwi{0}=0 ~,~ \msigmai{0}=\mi$ \\ [0.5ex]
\hline
 & & \multicolumn{3}{|c|}{ Receive an instance $\vxi{t}$}  \\
\cline{2-5}
For & Output & \[\hyi{t}=\vxti{t}\vwi{t-1}\] & \[\hyi{t}=\frac{\vxti{t}\vwi{t-1}}{1+\vxti{t}\paren{\msigmai{t-1}+c^{-1}\mi}\vxi{t}}\] & \[\hyi{t}=\vxti{t}\vwi{t-1}\] \\
$t=1 ... T$ & prediction & & &\\
\cline{2-5}
 & & \multicolumn{3}{|c|}{Receive a correct label $\yi{t}$ }  \\
 \cline{2-5}
 & Update $\msigmai{t}$: & \[\tilde{\msigma}_{t}^{-1} = \msigmai{t-1}^{-1} + \frac{1}{\cor}\vxi{t}\vxti{t}\] & \[\msigma_{t}^{-1} = \paren{\msigmai{t-1}+c^{-1}\mi}^{-1}
+ \vxi{t}\vxti{t}\] & \[\tilde{\msigma}_{t}^{-1} = \cor\msigmai{t-1}^{-1} + \vxi{t}\vxti{t}\] \\
 & & \begin{tabular}{lll}
&If $\tilde{\msigma}_t \succeq \Lambda_i \mi$
   set $\msigmai{t}  = \tilde{\msigma}_t$\\
 &else set $\msigmai{t} =
  \mi~,~ i = i+1$
\end{tabular} & &
\begin{tabular}{lll}
If &$\mod(t,T_0)>0$\\
        &~~set $\msigmai{t}  = \tilde{\msigma}_t$\\
 else&~~set $\msigmai{t} =
  \mi$
\end{tabular}\\
\cline{2-5}
 & Update $\vwii$: & \vspace{0.2cm}\[\begin{array}{ll}\tvwi{t}  &\!\!\!\!=
   \vwi{t-1}\\&\!\!\!\!+\frac{(\yi{t}-\vxti{t}\vwi{t-1})\msigmai{t-1}\vxi{t}}{\cor+\vxti{t}\msigmai{t-1}\vxi{t}}\\~\\\vwii &\!\!\!\!= \proj\paren{{\tvwi{t}},\msigmaii, \rb}\end{array}\]
 & \[\!\!\!\!\!\!\begin{array}{ll}\vwi{t}  &\!\!\!\!\!= \vwi{t-1}\\&\!\!\!\!\!+\frac{(\yi{t}-\vxti{t}\vwi{t-1})\paren{\msigmai{t-1}+c^{-1}\mi}\vxi{t}}{1+\vxti{t}\paren{\msigmai{t-1}+c^{-1}\mi}\vxi{t}}\end{array}\]
& \[\begin{array}{ll}\vwi{t}  &\!\!\!\!= \vwi{t-1}\\&\!\!\!\!+\frac{(\yi{t}-\vxti{t}\vwi{t-1})\msigmai{t-1}\vxi{t}}{\cor+\vxti{t}\msigmai{t-1}\vxi{t}}\end{array}\] \\
 \hline
Output &  & $\vwi{T} \ ,\ \msigmai{T}$ & $\vwi{T} \ ,\ \msigmai{T}$ & $\vwi{T} \ ,\ \msigmai{T}$\\
 [1ex]     
\hline  
\end{tabulary}}
\hfill{}
\end{table*}
\end{center}

\subsection{Last-Step Min-Max Algorithm for Non-stationary Setting}
\label{sec:LASER}
Our second algorithm is based on a last-step min-max predictor
proposed by Forster~\cite{Forster} and later modified by Moroshko and
Crammer~\cite{MoroshkoCr12} to obtain sub-logarithmic regret in the
stationary case.  On each round, the algorithm predicts as it is the
last round, and assumes a worst case choice of $\yi{t}$ given the
algorithm's prediction.

We extend this rule for the non-stationary setting given in
\eqref{minmax_algorithm_1}, and re-define the last-step minmax
predictor $\hyi{T}$ to be\footnote{$\yi{T}$ and $\hyi{T}$ serve both
  as quantifiers (over the $\min$ and $\max$ operators, respectively),
  and as the optimal arguments of this optimization problem. },
\begin{align}
\arg\min_{\hat{y}_{T}}\max_{y_{T}}\left[\sum_{t=1}^{T}\left(y_{t}-\hat{y}_{t}\right)^{2}\!-\!\!\!\min_{\vui{1},
    .. ,\vui{T}}
  \!\! Q_{T}\left(\vui{1}, ... ,\vui{T}\right)\right],\label{minmax_1}
\end{align}
where,
\begin{align}
Q_{t}\left(\vui{1},\ldots,\vui{t}\right) =& b\left\Vert
  \vui{1}\right\Vert ^{2}+c\sum_{s=1}^{t-1}\left\Vert
  \vui{s+1}-\vui{s}\right\Vert ^{2}
\nonumber\\
& +\sum_{s=1}^{t}\left(y_{s}-\vuti{s}\vxi{s}\right)^{2}~,\label{Q}
\end{align}
for some positive constants $b,c$. The first term of \eqref{minmax_1}
is the loss suffered by the algorithm while
$Q_{t}\left(\vui{1},\ldots,\vui{t}\right)$ defined in
\eqref{Q} is a sum of the loss suffered by some sequence of linear
functions $\left(\vui{1},\ldots,\vui{t}\right)$ and a
penalty for consecutive pairs that are far from each other, and for the
norm of the first to be far from zero.

We develop the algorithm by solving the three optimization problems in
\eqref{minmax_1}, first, minimizing the inner term,
$\min_{\vui{1}, .. ,\vui{T}}
Q_{T}\left(\vui{1}, ... ,\vui{T}\right)$, maximizing
over $\vyi{T}$, and finally, minimizing over $\hyi{T}$.  We start with
the inner term
for which we define an auxiliary function,
\begin{align*}
P_{t}\left(\vui{t}\right)=\min_{\vui{1},\ldots,\vui{t-1}}Q_{t}\left(\vui{1},\ldots,\vui{t}\right) ~,
\end{align*}
 which clearly satisfies,
\begin{equation*}
\min_{\vui{1},\ldots,\vui{t}}Q_{t}\left(\vui{1},\ldots,\vui{t}\right)
= \min_{\vui{t}} P_t(\vui{t}) ~.
\end{equation*}

The
following lemma states a recursive form of the function-sequence $P_t(\vui{t})$.
\begin{lemma}
\label{lem:lemma11}
For $t=2,3,\ldots$
%
\begin{align*}
P_1(\vui1)&=Q_1(\vui1)\\
 P_{t}\left(\vui{t}\right)&=
\min_{\vui{t-1}}\left(P_{t-1}\left(\vui{t-1}\right)
\!\!+\!\!c\left\Vert
    \vui{t}-\vui{t-1}\right\Vert
  ^{2} \!\!+\!\!\left(y_{t}\!-\!\vuti{t}\vxi{t}\right)^{2}\right).
\end{align*}
 \end{lemma}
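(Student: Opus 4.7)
The plan is to prove this by a standard Bellman-style decomposition: observe that the cost $Q_t$ has an additive separable structure where the terms involving $\vui{t}$ (the drift penalty $c\|\vui{t}-\vui{t-1}\|^2$ and the loss $(y_t - \vuti{t}\vxi{t})^2$) only couple to the earlier variables through $\vui{t-1}$. The rest of $Q_t$ is exactly $Q_{t-1}(\vui{1},\ldots,\vui{t-1})$.

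First, I would directly verify from the definition of $Q_t$ in \eqref{Q} the telescoping identity
\begin{equation*}
Q_t(\vui{1},\ldots,\vui{t}) = Q_{t-1}(\vui{1},\ldots,\vui{t-1}) + c\|\vui{t}-\vui{t-1}\|^2 + (y_t-\vuti{t}\vxi{t})^2,
\end{equation*}
valid for all $t\geq 2$, which is immediate since the sums defining $Q_t$ are obtained from those in $Q_{t-1}$ by adding the $s=t-1$ term of the drift sum and the $s=t$ term of the loss sum. The base case $P_1(\vui{1})=Q_1(\vui{1})$ follows directly from the definition of $P_1$ since there are no variables to minimize over.

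For the recursive case, I would substitute the telescoping identity into the definition of $P_t$ and then split the joint minimization over $(\vui{1},\ldots,\vui{t-1})$ into an outer minimization over $\vui{t-1}$ and an inner minimization over $(\vui{1},\ldots,\vui{t-2})$:
\begin{align*}
P_t(\vui{t}) &= \min_{\vui{1},\ldots,\vui{t-1}}\!\bigl[Q_{t-1}(\vui{1},\ldots,\vui{t-1}) + c\|\vui{t}-\vui{t-1}\|^2 + (y_t-\vuti{t}\vxi{t})^2\bigr] \\
&= \min_{\vui{t-1}}\!\Bigl[\min_{\vui{1},\ldots,\vui{t-2}} Q_{t-1}(\vui{1},\ldots,\vui{t-1}) + c\|\vui{t}-\vui{t-1}\|^2 + (y_t-\vuti{t}\vxi{t})^2\Bigr].
\end{align*}
The last two terms inside the bracket do not depend on $\vui{1},\ldots,\vui{t-2}$, which justifies pulling them outside the inner minimum, and the inner minimum of $Q_{t-1}$ is $P_{t-1}(\vui{t-1})$ by definition. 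This yields the desired recursion.

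I do not expect any real obstacle: the claim is essentially the principle of optimality (dynamic programming) applied to a cost that is additively separable along the time axis with nearest-neighbor coupling. The only care needed is to make sure the split of the joint min is written correctly, i.e. that nothing inside the inner min actually depends on the variables being eliminated, which the telescoping identity makes transparent.
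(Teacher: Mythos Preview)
Your proposal is correct and follows essentially the same approach as the paper's proof: both isolate the last drift and loss terms via the identity $Q_t = Q_{t-1} + c\|\vui{t}-\vui{t-1}\|^2 + (y_t-\vuti{t}\vxi{t})^2$, split the joint minimization into an outer $\min_{\vui{t-1}}$ and an inner $\min_{\vui{1},\ldots,\vui{t-2}}$, and identify the inner minimum as $P_{t-1}(\vui{t-1})$. The only difference is cosmetic: you state the telescoping identity first and then substitute, while the paper writes out the full expansion of $Q_t$ inline before regrouping.
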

The proof appears in \secref{proof_lemma11}. Using
\lemref{lem:lemma11} we write explicitly the function $P_t(\vui{t})$.
\begin{lemma}
\label{lem:lemma12}
The following equality holds
\begin{align}
P_{t}\left(\vui{t}\right)=\vuti{t}D_{t}\vui{t}-2\vuti{t}\vei{t}+f_{t}~,
\label{eqality_P}
\end{align}
where,
\begin{align}
&D_{1} \!=b\mi+\vxi{1}\vxti{1}
~,~~
D_{t}=\left(D_{t-1}^{-1}+c^{-1}\mi\right)^{-1}+\vxi{t}\vxti{t}\label{D}\\
&\vei{1}\!=y_{1}\vxi{1}~,~~
\vei{t}=\left(\mi+c^{-1}D_{t-1}\right)^{-1}\vei{t-1}+y_{t}\vxi{t}\label{e}\\
&f_{1}\!=y_{1}^{2}~,~~
f_{t}=f_{t-1}-\veti{t-1}\left(c\mi+D_{t-1}\right)^{-1}\vei{t-1}+y_{t}^{2}\label{f}
\end{align}
\end{lemma}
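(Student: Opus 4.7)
The plan is to prove Lemma~\ref{lem:lemma12} by induction on $t$, using the recursion provided by Lemma~\ref{lem:lemma11} as the induction step.

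For the base case $t=1$, I would simply expand $P_1(\vui1)=Q_1(\vui1)=b\|\vui1\|^2+(y_1-\vuti1\vxi1)^2$, regroup it as a quadratic in $\vui1$, and read off $D_1=b\mi+\vxi1\vxti1$, $\vei1=y_1\vxi1$, and $f_1=y_1^2$, matching \eqref{D}--\eqref{f}.

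For the inductive step, I would substitute the inductive hypothesis $P_{t-1}(\vui{t-1})=\vuti{t-1}D_{t-1}\vui{t-1}-2\vuti{t-1}\vei{t-1}+f_{t-1}$ into the recursion of Lemma~\ref{lem:lemma11} and collect, inside the min, the quadratic in $\vui{t-1}$:
\begin{align*}
\vuti{t-1}(D_{t-1}+c\mi)\vui{t-1}-2\vuti{t-1}(\vei{t-1}+c\vui{t})+\text{(terms free of }\vui{t-1}\text{)}.
\end{align*}
The unconstrained minimizer is $\vui{t-1}^{*}=(D_{t-1}+c\mi)^{-1}(\vei{t-1}+c\vui{t})$, and plugging back yields a closed-form expression in $\vui{t}$. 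After adding the loss term $(y_t-\vuti{t}\vxi{t})^2$ and sorting by powers of $\vui{t}$, I would identify the quadratic coefficient as $c\mi-c^2(D_{t-1}+c\mi)^{-1}+\vxi{t}\vxti{t}$, the linear coefficient as $c(D_{t-1}+c\mi)^{-1}\vei{t-1}+y_t\vxi{t}$, and the constant as $f_{t-1}-\veti{t-1}(D_{t-1}+c\mi)^{-1}\vei{t-1}+y_t^2$.

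The main (and really only) obstacle is the algebraic cleanup that makes these coefficients match \eqref{D}--\eqref{f} literally. The key identity I need is
\begin{align*}
c\mi-c^2(D_{t-1}+c\mi)^{-1}=\left(D_{t-1}^{-1}+c^{-1}\mi\right)^{-1},
\end{align*}
which is a direct consequence of the matrix identity $(A^{-1}+B^{-1})^{-1}=B-B(A+B)^{-1}B$ applied with $A=D_{t-1}$ and $B=c\mi$; this turns the quadratic coefficient into $D_t$. The analogous rewriting $c(D_{t-1}+c\mi)^{-1}=(\mi+c^{-1}D_{t-1})^{-1}$ is just pulling a scalar inside the inverse, and it turns the linear coefficient into $\vei{t}$. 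The constant term matches $f_t$ on inspection. Induction then closes the argument; existence of all needed inverses is guaranteed because $b,c>0$ make $D_{t-1}$ positive definite, hence so is $D_{t-1}+c\mi$.
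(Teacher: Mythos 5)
Your proposal is correct and follows essentially the same route as the paper: induction on $t$, expanding the base case $Q_1$, substituting the inductive hypothesis into the recursion of Lemma~\ref{lem:lemma11}, minimizing the resulting quadratic in $\vui{t-1}$ in closed form, and then using the Woodbury-type identity $c\mi-c^{2}\left(D_{t-1}+c\mi\right)^{-1}=\left(D_{t-1}^{-1}+c^{-1}\mi\right)^{-1}$ (together with $c\left(D_{t-1}+c\mi\right)^{-1}=\left(\mi+c^{-1}D_{t-1}\right)^{-1}$) to match the coefficients to \eqref{D}--\eqref{f}. No gaps.
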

Note that $D_{t}\in\mathbb{R}^{d\times d}$ is a positive definite matrix,
$\vei{t}\in\mathbb{R}^{d\times1}$ and $f_{t}\in\mathbb{R}$. The proof appears in \secref{proof_lemma12}. From \lemref{lem:lemma12} we
conclude that,
\begin{align}
& \min_{\vui{1},\ldots,\vui{t}}Q_{t}\left(\vui{1},\ldots,\vui{t}\right)
=  \min_{\vui{t}}P_{t}\left(\vui{t}\right)\nonumber\\
& = \min_{\vui{t}}\left(\vuti{t}D_{t}\vui{t}-2\vuti{t}\vei{t}+f_{t}\right)
= -\veti{t}D_{t}^{-1}\vei{t}+f_{t}
 ~. \label{optimal_Q}
\end{align}
Substituting \eqref{optimal_Q} back in \eqref{minmax_1} we get that the last-step minmax
predictor is given by,
\begin{eqnarray}
\hyi{T}=\arg\min_{\hat{y}_{T}}\max_{y_{T}}\left[\sum_{t=1}^{T}\left(y_{t}-\hat{y}_{t}\right)^{2}+\veti{T}D_{T}^{-1}\vei{T}-f_{T}\right]
~. \label{minmax_2}
\end{eqnarray}
Since $\vei{T}$ depends on
$\yi{T} $ we substitute \eqref{e} in the second term
of \eqref{minmax_2}, 
\begin{align}
&\veti{T}D_{T}^{-1}\vei{T}=\nonumber\\
&\left(\left(\mi+c^{-1}D_{T-1}\right)^{-1}\vei{T-1}+y_{T}\vxi{T}\right)^{\top}D_{T}^{-1}\nonumber\\
&\left(\left(\mi+c^{-1}D_{T-1}\right)^{-1}\vei{T-1}+y_{T}\vxi{T}\right) ~.\label{second_term}
\end{align}
Substituting \eqref{second_term} and \eqref{f} in \eqref{minmax_2} and
omitting terms not depending explicitly on $y_{T}$ and $\hat{y}_{T}$
we get,
\begin{align}
\hat{y}_T
&= \arg\min_{\hat{y}_{T}}\max_{y_{T}}\bigg[\left(y_{T}-\hat{y}_{T}\right)^{2}  + y_{T}^{2}\vxti{T}D_{T}^{-1}\vxi{T}\nonumber\\
& \quad
+2y_{T}\vxti{T}D_{T}^{-1}\left(\mi+c^{-1}D_{T-1}\right)^{-1}\vei{T-1}-y_{T}^{2}\bigg]\nonumber\\
&=\arg\min_{\hat{y}_{T}}\max_{y_{T}}
\bigg[\left(\vxti{T}D_{T}^{-1}\vxi{T}\right)y_{T}^{2}\label{optimal_y}\\
& \quad
+2y_{T}\left(\vxti{T}D_{T}^{-1}\left(\mi+c^{-1}D_{T-1}\right)^{-1}\vei{T-1}-\hat{y}_{T}\right)+\hat{y}_{T}^{2}\bigg]~.\nonumber
\end{align}
The last equation is strictly convex in $\yi{T}$ and thus the optimal
solution is not bounded. To solve it, we follow an approach used by Forster in a different context~\cite{Forster}. In order to make the optimal
value bounded, we assume that the adversary can only
choose labels from a bounded set $\yi{T}\in[-Y,Y]$. Thus, the optimal
solution of \eqref{optimal_y} over $\yi{T}$ is given by the following
equation, since the optimal value is $\yi{T}\in\{+Y,-Y\}$,
\begin{align*}
\hat{y}_T
&=\arg\min_{\hat{y}_{T}}\bigg[\left(\vxti{T}D_{T}^{-1}\vxi{T}\right)
  Y^2 \\
& \quad +2 Y\left\vert
  \vxti{T}D_{T}^{-1}\left(\mi+c^{-1}D_{T-1}\right)^{-1}\vei{T-1}-\hat{y}_{T}\right\vert+\hat{y}_{T}^{2}\bigg]~.
\end{align*}
This problem is of a similar form to the one discussed by Forster~\cite{Forster}, from which we get the optimal solution,
\(
\hyi{T} = clip\paren{
  \vxti{T}D_{T}^{-1}\left(\mi+c^{-1}D_{T-1}\right)^{-1}\vei{T-1},Y}.
\)

The optimal solution depends explicitly on the bound $Y$, and as its
value is not known, we thus ignore it, and define the output of
the algorithm to be, 
\begin{align}
\hyi{T} &=
\vxti{T}D_{T}^{-1}\left(\mi+c^{-1}D_{T-1}\right)^{-1}\vei{T-1}\nonumber\\&=\vxti{T}D_{T}^{-1}D'_{T-1}\vei{T-1}
~, \label{my_predictor}
\end{align}
where we define
\begin{equation}
D'_{t-1}=\left(\mi+c^{-1}D_{t-1}\right)^{-1}~.
\label{D_prime}
\end{equation}
We call the algorithm {LASER} for last step adaptive regressor
algorithm. Clearly, for $c=\infty$ the LASER algorithm reduces to the AAR algorithm. Similar to CR-RLS and ARCOR, this algorithm can be also
expressed in terms of weight-vector $\vwi{t}$ and a PSD matrix
$\msigmaii$, by denoting $\vwi{t}=D_{t}^{-1}\vei{t}$
and $\msigmaii=D_{t}^{-1}$. The algorithm is summarized in
the middle column of \tabref{table:algorithms}.

%

\subsection{Discussion}
\label{sec:discussion}
\tabref{table:algorithms} enables us to compare the three algorithms head-to-head. All algorithms perform linear predictions, and then update the prediction vector $\vwi{t}$ and the matrix $\msigmai{t}$.
CR-RLS and ARCOR are more similar to each other, both stem from a stationary algorithm, and perform resets from time-to-time. For CR-RLS it is performed every fixed time steps, while for ARCOR it is performed when the eigenvalues of the matrix (or effective learning rate) are too small. ARCOR also performs a projection step, which is motivated to ensure that the weight-vector will not grow to much, and is used explicitly in the analysis below.
Note that CR-RLS (as well as RLS) also uses a forgetting factor (if $\cor<1$).

Our second algorithm, LASER, controls the covariance matrix in a smoother way. On each iteration it interpolates it with the identity matrix before adding $\vxi{t}\vxti{t}$. Note that if $\lambda$ is an eigenvalue of $\msigmai{t-1}^{-1}$ then $\lambda\times\paren{c/(\lambda+c)} < \lambda$ is an eigenvalue of $\paren{\msigmai{t-1} + c^{-1} \mi}^{-1}$. Thus the algorithm implicitly reduce the eigenvalues of the inverse covariance (and increase the eigenvalues of the covariance).



Finally, all three algorithms can be combined with Mercer kernels as they employ only
sums of inner- and outer-products of its inputs. This allows them to perform non-linear predictions, similar to SVM. 


\section{Related Work}
\label{sec:related_work}
There is a large body of research in online learning for regression
problems. Almost half a century ago, Widrow and Hoff~\cite{WidrowHoff}
developed a variant of the least mean squares (LMS) algorithm for
adaptive filtering and noise reduction. The algorithm was further
developed and analyzed extensively (for example by Feuer~\cite{Feuer85}). The
normalized least mean squares filter (NLMS)~\cite{Bershad,Bitmead}
builds on LMS and performs better to scaling of the input. The
recursive least squares (RLS)~\cite{Hayes} is the closest to our
algorithms in the signal processing literature and also maintains a
weight-vector and a covariance-like matrix, which is positive
semi-definite (PSD), that is used to re-weight inputs.

In the machine learning literature the problem of online regression
was studied extensively, and clearly we cannot cover all the relevant
work. Cesa-Bianchi et al.~\cite{Nicolo_Warmuth} studied gradient
descent based algorithms for regression with the squared loss.
Kivinen and Warmuth~\cite{Kiv_War} proposed various generalizations for
general regularization functions. We refer the reader to a
comprehensive book in the subject~\cite{CesaBiGa06}.

Foster~\cite{Foster91} studied an online version of the ridge
regression algorithm in the worst-case setting.  Vovk~\cite{vovkAS}
proposed a related algorithm called the Aggregating Algorithm (AA), which was later applied to the problem of linear regression with square loss~\cite{Vovk01}. 
Forster~\cite{Forster} simplified the regret analysis for this problem. Both algorithms employ second order information. ARCOR
for the separable case is very similar to these algorithms, although
has alternative derivation. Recently, few algorithms were proposed
either for
classification~\cite{CesaBianchiCoGe05,DredzeCrPe08,CrammerDrPe08,CrammerKuDr09}
or for general loss functions~\cite{DuchiHS10,McMahanS10} in the
online convex programming framework.  AROWR~\cite{VaitsCr11} shares
the same design principles of AROW~\cite{CrammerKuDr09} yet it is
aimed for regression. The ARCOR algorithm takes AROWR one step further
and it has two important modifications which makes it work in the
drifting or shifting settings. These modifications make the analysis
more complex than of AROW.

Two of the approaches used in previous algorithms for non-stationary
setting are to bound the weight vector and covariance reset. Bounding
the weight vector was performed either by projecting it into a bounded
set~\cite{HerbsterW01}, shrinking it by
multiplication~\cite{KivinenSW01}, or subtraction of previously seen
examples~\cite{CavallantiCG07}. These three methods (or at least most
of their variants) can be combined with kernel operators, and in fact,
the last two approaches were designed and motivated by kernels.

The Covariance Reset RLS algorithm
(CR-RLS)~\cite{Salgado,Goodwin,Chen} was designed for adaptive
filtering. CR-RLS makes covariance reset every fixed amount of data
points, while ARCOR performs restarts based on the actual properties
of the data - the eigenspectrum of the covariance matrix. Furthermore,
as far as we know, there is no analysis in the mistake bound model for
this algorithm.  Both ARCOR and CR-RLS are motivated from the property
that the covariance matrix goes to zero and becomes rank deficient.
In both algorithms the information encapsulated in the covariance
matrix is lost after restarts. In a rapidly varying environments, like
a wireless channel, this loss of memory can be beneficial, as previous
contributions to the covariance matrix may have little correlation
with the current structure. Recent versions of
CR-RLS~\cite{Goodhart,Song1042669} employ covariance reset to have
numerically stable computations.

ARCOR algorithm combines both techniques with online learning that
employs second order algorithm for regression. In this aspect we have
the best of all worlds, fast convergence rate due to the usage of
second order information, and the ability to adapt in non-stationary
environments due to projection and resets.

LASER is simpler than all these algorithms as it controls the increase
of the eigenvalues of the covariance matrix implicitly rather than
explicitly by ``averaging'' it with a fixed diagonal matrix (see
\eqref{D}), and it do not involve projection steps.  The Kalman
filter~\cite{Kalman60} and the $H_\infty$ algorithm~(e.g. the work of
Simon~\cite{Simon:2006:OSE:1146304}) designed for filtering take a
similar approach, yet the exact algebraic form is different.

The derivation of the LASER algorithm in this work shares similarities
with the work of Forster~\cite{Forster} and the work of Moroshko and Crammer~\cite{MoroshkoCr12}.  These algorithms are
motivated from the last-step min-max predictor. Yet, the algorithms of
Forster and Moroshko and Crammer are designed for the stationary setting, while LASER is
primarily designed for the non-stationary setting. Moroshko and
Crammer~\cite{MoroshkoCr12} also discussed a weak variant of the
non-stationary setting, where the complexity is measured by the total
distance from a reference vector $\bar{\mathbf{u}}$, rather than the
total distance of consecutive vectors (as in this paper), which is
more relevant to non-stationary problems.


\section{Regret bounds}
\label{sec:regret}
We now analyze our algorithms in the non-stationary case, upper
bounding the regret using more than a single comparison
vector. Specifically, our goal is to prove bounds that would hold
uniformly for all inputs, and are of the form,
\[
 L_T(\textrm{alg})  \leq L_T( \{ \vui{t} \} ) + \alpha(T) \paren{V^{(P)}}^\gamma + \beta(T) ~,
\]
for either $P=1$ or $P=2$, a constant $\gamma$ and some functions
$\alpha(T),\beta(T)$ that may depend implicitly on other quantities of
the problem.

Specifically, in the next section we show that under a particular
choice of $\Lambda_i=\Lambda_i(V^{(1)})$ for the ARCOR algorithm, its
regret is bounded by,
\[
 L_T(\textrm{ARCOR})  \leq L_T( \{ \vui{t} \} )+
 \mcal{O} \paren{ T^{\half}\paren{ V^{(1)} }^{\half}\log T}~.
\]
Additionally, in \secref{sec:analysis_laser}, we show that under proper
choice of the constant $c=c\paren{V^{(2)}}$, the regret of LASER is bounded by,
\[
 L_T(\textrm{LASER})  \leq L_T( \{ \vui{t} \} )+
 \mcal{O} \paren{ T^\frac{2}{3}\paren{ V^{(2)}}^\frac{1}{3}}~.
\]
The two bounds are not comparable in general. For example, assume a
constant instantaneous drift $\Vert\vui{t+1}-\vui{t}\Vert=\nu$ for
some constant value $\nu$. In this case the variance and squared variance are,
$V^{(1)}=T\nu$ and $V^{(2)}=T\nu^2$. The bound of ARCOR becomes
$ \nu^{\half}T\log T$, while the bound of LASER becomes $\nu^\frac{2}{3}T$.
The bound of ARCOR is larger if $(\log T)^6 > \nu$, and the bound of
LASER is larger in the opposite case.

Another example is polynomial decay of the drift,
$\Vert\vui{t+1}-\vui{t}\Vert \leq t^{-\kappa}$ for some $\kappa > 0$.
In this case, for $\kappa \neq 1$ we get $V^{(1)} \leq \sum^{T-1}_{t=1} t^{-\kappa} \leq \int^{T-1}_1 t^{-\kappa}  dt+1=\frac{(T-1)^{1-\kappa}-\kappa}{1-\kappa}$. For $\kappa=1$ we get $V^{(1)} \leq \log(T-1)+1$.
For LASER we have, for $\kappa \neq 0.5$, $V^{(2)} \leq \sum^{T-1}_{t=1} t^{-2\kappa} \leq \int^{T-1}_1 t^{-2\kappa}  dt+1=\frac{(T-1)^{1-2\kappa}-2\kappa}{1-2\kappa}$. For $\kappa=0.5$ we get $V^{(2)} \leq \log(T-1)+1$.
Asymptotically, ARCOR outperforms LASER about when $\kappa \geq 0.7$.

Herbster and
Warmuth~\cite{HerbsterW01} developed shifting bounds for general
gradient descent algorithms with projection of the weight-vector using
the Bregman divergence. In their bounds, there is a factor greater
than 1 multiplying the term $L_T\paren{\braces{\vui{t}}}$, leading to a
small regret only when the data is close to be realizable with
linear models. Busuttil and Kalnishkan~\cite{BusuttilK07} developed a variant of the Aggregating Algorithm~\cite{vovkAS} for the non-stationary
setting. However, to have sublinear regret they require a strong
assumption on the drift $V^{(2)}=o(1)$, while we require only
$V^{(2)}=o(T)$ (for LASER) or $V^{(1)}=o(T)$ (for ARCOR).


\subsection{Analysis of ARCOR algorithm}
\label{sec:analysis_arcor}
Let us define additional notation that we will use in our bounds. We denote by $t_i$ the example index for which a restart
was performed for the i$th$ time, that is $\msigmai{t_i}=\mi$ for all
$i$. We define by $n$ the total number of restarts, or intervals. We
denote by $T_i = t_i - t_{i-1}$ the number of examples between two
consecutive restarts. Clearly $T = \sum_{i=1}^n T_i$. Finally, we
denote by $\msigma^{i-1} = \msigmai{t_i-1}$ just before the i$th$
restart, and we note that it depends on exactly $T_i$ examples (since
the last restart).

In what follows we compare the performance of the ARCOR algorithm to
the performance of a sequence of weight vectors $\vui{t} \in \reals^d$
all of which are of bounded norm $\rb$. In other words, all the
vectors $\vui{t}$ belong to $B$. We break the proof into four steps.
In the first step (\thmref{non_stat_bound}) we bound the regret when
the algorithm is executed with some value of parameters $\{ \Lambda_i
\}$ and the resulting covariance matrices. In the second step,
summarized in \corref{cor:remove_sigma}, we remove the dependencies in
the covariance matrices, by taking a worst case bound. In the third
step, summarized in \lemref{lem:number_of_segments}, we upper bound the
total number of switches $n$ given the parameters $\{ \Lambda_i
\}$. Finally, in \corref{cor:final_cor} we provide the regret bound
for a specific choice of the parameters. We now move to state the
first theorem.
\begin{theorem}\label{non_stat_bound}
  Assume that the ARCOR algorithm is run with an
  input sequence $(\vxi{1},\yi{1}) \comdots (\vxi{T}, \yi{T})$. Assume
  that all the inputs are upper bounded by unit norm $\Vert\vxi{t}\Vert\leq1$ and that
  the outputs are bounded by $Y=\max_t \vert\yi{t}\vert$. Let $\vui{t}$
  be any sequence of bounded weight vectors $\Vert\vui{t}\Vert \leq
  \rb$. Then, the cumulative loss is bounded by,
\begin{align*}
\begin{split}
L_T(&\textrm{ARCOR})  \leq L_T(\{ \vui{t} \}) + 2\rb\cor\sum_t
  \frac{1}{\Lambda_{i(t)}}\Vert\vui{t-1}-\vui{t}\Vert\\
& + \cor  \vuti{T}\msigmai{T}^{-1}\vui{T}
+ 2\paren{\rb^2+Y^2}\sum_i^n
\log\det\paren{\paren{\msigma^i}^{-1}}
\end{split}
\end{align*}
where $n$ is the number of covariance restarts and $\msigma^{i-1}$ is
the value of the covariance matrix just before the $i$th restart.
\end{theorem}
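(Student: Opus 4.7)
The plan is to extend the standard AROWR potential argument to the drifting setting, with two new ingredients needed to accommodate the Mahalanobis projection and the spectral covariance reset. Define the potential $\Phi_t(\vu) = (\vwi{t}-\vu)^\top\msigmai{t}^{-1}(\vwi{t}-\vu)$, together with its pre-reset/pre-projection analogue $\tilde{\Phi}_t(\vu) = (\tvwi{t}-\vu)^\top\amsigmai{t}^{-1}(\tvwi{t}-\vu)$. A direct calculation using the AROWR updates~\eqref{mu_alg} and~\eqref{sigma_alg} yields the one-step identity
\begin{align*}
\tilde{\Phi}_t(\vu) - \Phi_{t-1}(\vu) = \frac{\ell_t(\vu)}{r} - \frac{m_t^2}{r+\beta_t},
\end{align*}
where $m_t = y_t-\vxti{t}\vwi{t-1}$ and $\beta_t = \vxti{t}\msigmai{t-1}\vxi{t}$. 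Splitting $m_t^2 = r\,m_t^2/(r+\beta_t) + \beta_t m_t^2/(r+\beta_t)$ and rearranging produces the per-step formula $m_t^2 = r[\Phi_{t-1}(\vu) - \tilde{\Phi}_t(\vu)] + \ell_t(\vu) + \beta_t m_t^2/(r+\beta_t)$.

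The first lemma to establish is $\Phi_t(\vu)\le\tilde{\Phi}_t(\vu)$ for every $\vu\in B$. At non-reset steps, $\msigmai{t}=\amsigmai{t}$, and this is the standard contraction property of Mahalanobis projection onto a convex set containing $\vu$. At a reset step, $\msigmai{t}=\mi$ and the projection is Euclidean, so $\Phi_t(\vu) = \|\vwi{t}-\vu\|^2\le\|\tvwi{t}-\vu\|^2$ by Euclidean contraction. The crucial observation is the segment-wide invariant $\amsigmai{t}^{-1}\succeq\mi$: every segment begins with $\msigmai{\cdot}=\mi$ and subsequent updates only add the PSD term $\vxi{\cdot}\vxti{\cdot}/r$, so $\tilde{\Phi}_t(\vu)\ge\|\tvwi{t}-\vu\|^2\ge\Phi_t(\vu)$. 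Applying the per-step formula with $\vu=\vui{t}$ therefore gives the clean bound $m_t^2\le r[\Phi_{t-1}(\vui{t})-\Phi_t(\vui{t})] + \ell_t(\vui{t}) + \beta_t m_t^2/(r+\beta_t)$ for every $t$.

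Next I would sum this bound over $t$ and split the resulting telescope as $\sum_t[\Phi_{t-1}(\vui{t})-\Phi_t(\vui{t})] = [\Phi_0(\vui{0})-\Phi_T(\vui{T})] + \sum_t[\Phi_{t-1}(\vui{t})-\Phi_{t-1}(\vui{t-1})]$. The boundary bracket will yield the $r\vuti{T}\msigmai{T}^{-1}\vui{T}$ contribution after a short manipulation using $\msigmai{T}^{-1}\succeq\mi$ and the boundary convention for $\vui{0}$. Each drift increment I would expand as $\Phi_{t-1}(\vui{t}) - \Phi_{t-1}(\vui{t-1}) = -2(\vui{t}-\vui{t-1})^\top\msigmai{t-1}^{-1}(\vwi{t-1}-\vui{t-1}) + (\vui{t}-\vui{t-1})^\top\msigmai{t-1}^{-1}(\vui{t}-\vui{t-1})$, then apply Cauchy--Schwarz together with $\|\vwi{t-1}\|\le\rb$ (projection), $\|\vui{t-1}\|\le\rb$, $\|\vui{t}-\vui{t-1}\|\le 2\rb$, and the spectral bound $\lambda_{\max}(\msigmai{t-1}^{-1})\le 1/\Lambda_{i(t-1)}$ inherited from the reset criterion, to produce the claimed drift term $2\rb r\sum_t\|\vui{t-1}-\vui{t}\|/\Lambda_{i(t)}$.

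It remains to control $\sum_t \beta_t m_t^2/(r+\beta_t)$. Using $\|\vwi{t-1}\|\le\rb$, $|y_t|\le Y$, and $\|\vxi{t}\|\le 1$, we obtain $m_t^2\le(Y+\rb)^2\le 2(\rb^2+Y^2)$, and $x/(1+x)\le\log(1+x)$ bounds the residual by $2(\rb^2+Y^2)\sum_t\log(1+\beta_t/r)$. Within a single segment, $\log(1+\beta_t/r) = \log\det(\amsigmai{t}^{-1}) - \log\det(\msigmai{t-1}^{-1})$ telescopes to $\log\det((\msigma^i)^{-1})$, since each segment begins with $\msigmai{\cdot}=\mi$ and $\log\det\mi=0$; summing across the $n$ segments yields the claimed log-determinant term. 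Putting the three contributions together produces the stated inequality. The main obstacle will be the reset-step contraction lemma, because the algorithm simultaneously replaces the updated covariance by $\mi$ and switches the projection metric; once the invariant $\amsigmai{t}^{-1}\succeq\mi$ is identified, everything reduces to the standard AROWR potential machinery with drift-aware telescoping.
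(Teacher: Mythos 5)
Your plan is, in skeleton, exactly the paper's proof: your one-step identity is \lemref{reg_lemma}, your contraction claim $\Phi_t(\vu)\le\tilde{\Phi}_t(\vu)$ is the paper's $\Delta_{t,2}\ge 0$ plus $\Delta_{t,3}\ge 0$ from \lemref{deltas} (resting on the same invariant $\amsigmai{t}^{-1}\succeq\mi$ and on projection contraction), and your residual bound via $x/(1+x)\le\log(1+x)$ and a per-segment determinant telescope is \lemref{log_det}. So the approach is not different; the question is only whether your bookkeeping of the drift term delivers the stated constant, and as written it does not.

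Concretely, expanding $\Phi_{t-1}(\vui{t})-\Phi_{t-1}(\vui{t-1})$ around $\vwi{t-1}-\vui{t-1}$ and applying Cauchy--Schwarz as you propose yields at best
\begin{align*}
\Lambda_{i}^{-1}\paren{2\Vert\vwi{t-1}-\vui{t-1}\Vert+\Vert\vui{t}-\vui{t-1}\Vert}\Vert\vui{t}-\vui{t-1}\Vert
\le 6\rb\,\Lambda_{i}^{-1}\Vert\vui{t}-\vui{t-1}\Vert ,
\end{align*}
since $\Vert\vwi{t-1}-\vui{t-1}\Vert$ can only be bounded by $2\rb$ and the quadratic drift term $\paren{\vui{t}-\vui{t-1}}^{\top}\msigmai{t-1}^{-1}\paren{\vui{t}-\vui{t-1}}$ does not vanish; this is three times the claimed coefficient $2\rb$. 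The paper instead expands the comparator swap so that the $\vwti{t}\msigmai{t}^{-1}\vwi{t}$ terms cancel exactly, the pure comparator terms $\vuti{t-1}\msigmai{t}^{-1}\vui{t-1}-\vuti{t}\msigmai{t}^{-1}\vui{t}$ form their own telescope (which is what produces the $\cor\,\vuti{T}\msigmai{T}^{-1}\vui{T}$ term and cancels against the initial potential), and only the single cross term $-2\vwti{t}\msigmai{t}^{-1}\paren{\vui{t-1}-\vui{t}}$ is estimated, using $\Vert\vwi{t}\Vert\le\rb$ rather than $2\rb$ and with no leftover quadratic. To prove the theorem as stated you must adopt that decomposition (and then verify the monotonicity step needed for the comparator telescope across reset steps); the remainder of your argument goes through as in the paper.
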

The proof appears in \secref{sec:ARCOR_Theoreme_proof}.
Note that the number of restarts $n$ is not fixed but depends both on
the total number of examples $T$ and the scheme used to set the values
of the lower bound of the eigenvalues $\Lambda_i$. In general, the
lower the values of $\Lambda_i$ are, the smaller number of
covariance-restarts occur, yet the larger the value of the last term
of the bound is, which scales inversely proportional to $\Lambda_i$. A
more precise statement is given in the next corollary.
\begin{corollary}\label{cor:remove_sigma}
Assume that the ARCOR algorithm made $n$ restarts. Under the conditions of \thmref{non_stat_bound} we have,
\begin{align*}
\begin{split}
L_T(&\textrm{ARCOR})  \leq L_T(\{ \vui{t} \})
+ 2\rb\cor \Lambda_{n}^{-1}\sum_t
  \Vert\vui{t-1}-\vui{t}\Vert \\
&+2\paren{\rb^2+Y^2}  d n \log\paren{1 +  \frac{T}{n \cor d}}
+ \cor  \vuti{T}\msigmai{T}^{-1}\vui{T}
\end{split}
\end{align*}
\label{cor:bound_T}
\end{corollary}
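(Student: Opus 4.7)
The plan is to derive the corollary directly from \thmref{non_stat_bound} by simplifying its two intermediate terms independently: the weighted drift sum $2\rb\cor\sum_t \Lambda_{i(t)}^{-1}\|\vui{t-1}-\vui{t}\|$ and the log-determinant sum $2(\rb^2+Y^2)\sum_i^n \log\det((\msigma^i)^{-1})$. The remaining two terms of the bound in \thmref{non_stat_bound} already appear verbatim in the corollary and need no further manipulation.

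For the drift term I would exploit the monotonicity of the threshold schedule. By assumption the algorithm is run with a non-increasing sequence $1>\Lambda_1\geq\Lambda_2\geq\cdots\geq\Lambda_n$, so the reciprocals $1/\Lambda_i$ are non-decreasing. Since $i(t)\leq n$ for every round $t$, we have $\Lambda_{i(t)}^{-1}\leq \Lambda_n^{-1}$, and pulling $\Lambda_n^{-1}$ out of the sum yields exactly the drift contribution claimed in the corollary.

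The log-determinant term is the more substantive piece and uses two standard ingredients. Within the $i$-th segment, the covariance has been reset to $\mi$ and then only the additive update \eqref{sigma_alg} has been applied, so
\[
(\msigma^i)^{-1} = \mi + \frac{1}{\cor}\sum_{t\in S_i}\vxi{t}\vxti{t},
\]
where $S_i$ is the set of examples in the segment and $|S_i|=T_i$ with $\sum_i T_i = T$. Applying the arithmetic-geometric-mean inequality to the eigenvalues of this PSD matrix, i.e.\ the trace--determinant inequality $\log\det A\leq d\log(\tr(A)/d)$, together with $\|\vxi{t}\|\leq 1$, gives
\[
\log\det\paren{(\msigma^i)^{-1}}\leq d\log\paren{1+\frac{T_i}{\cor d}}.
\]
Summing over $i=1,\dots,n$ and invoking Jensen's inequality for the concave function $x\mapsto \log(1+x)$ with $\tfrac{1}{n}\sum_i T_i=T/n$ produces the claimed upper bound $dn\log(1+T/(n\cor d))$.

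The main obstacle is not analytical but bookkeeping: one must correctly identify which examples accumulate into each $(\msigma^i)^{-1}$ (only those inside the corresponding segment, after the reset to $\mi$) and verify that the lengths $T_i$ sum to $T$. Once this accounting is in place, the trace--determinant inequality and Jensen's inequality are routine, and substituting the two simplified bounds back into \thmref{non_stat_bound} delivers the corollary.
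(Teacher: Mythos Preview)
Your proposal is correct and follows essentially the same route as the paper: the paper also bounds each $\log\det((\msigma^i)^{-1})$ via concavity of $\log$ on the eigenvalues (your trace--determinant/AM--GM step) to get $d\log(1+T_i/(\cor d))$, then applies concavity again over the segments (your Jensen step) using $\sum_i T_i=T$, and handles the drift term by the monotonicity $\Lambda_{i(t)}^{-1}\le \Lambda_n^{-1}$ exactly as you do.
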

\begin{proof}
By definition we have
\[
\paren{\msigma^i}^{-1} = \mi +
  \frac{1}{\cor}\sum_{t=t_i}^{T_i+t_i} \vxi{t}\vxti{t}~.
\]
Denote the
  eigenvalues of $\sum_{t=t_i}^{T_i+t_i} \vxi{t}\vxti{t}$ by
  $\lambda_1 \comdots \lambda_d$. Since $\Vert\vxi{t}\Vert\leq1$ their
  sum is $\tr\paren{\sum_{t=t_i}^{T_i+t_i} \vxi{t}\vxti{t}} \leq T_i$. We
  use the concavity of the $\log$ function to bound \(
  \log\det\paren{\paren{\msigma^i}^{-1}} = \sum_j^d \log\paren{1 +
    \frac{\lambda_j}{\cor}}\leq d \log\paren{1 + \frac{T_i}{\cor d}}.\) We use concavity again to bound the sum
\begin{align*}
\sum_i^n
\log\det\paren{\paren{\msigma^i}^{-1}} & \leq \sum_i^n d \log\paren{1 +
  \frac{T_i}{\cor d}}\\
  & \leq d n \log\paren{1 +  \frac{T}{n \cor d}}  ~,
\end{align*}
where we used the fact that $\sum_i^n T_i = T$. Substituting the last
inequality in \thmref{non_stat_bound}, as well as using the
monotonicity of the coefficients, $\Lambda_i \geq \Lambda_n$ for all
$i\leq n$, yields the desired bound.
\end{proof}
Implicitly, the second and third terms of the bound have opposite
dependence on $n$. The second term is decreasing with $n$. If $n$ is
small it means that the lower bound $\Lambda_n$ is very low (otherwise
we would make many restarts) and thus $\Lambda_n^{-1}$ is large.  The
third term is increasing with $n\ll T$.  We now make this implicit
dependence explicit.

Our goal is to bound the number of restarts $n$ as a function of the
number of examples $T$. This depends on the exact sequence of values
$\Lambda_i$ used. The following lemma provides a bound  on $n$ given
a specific sequence of $\Lambda_i$.
\begin{lemma}
\label{lem:number_of_segments}
Assume that the ARCOR algorithm is run with some
sequence of $\Lambda_i$. Then, the number of restarts is
upper bounded by,
\[
n \leq \max_N \braces{ N~:~ T \geq \cor\sum_i^N \paren{\Lambda_i^{-1}
    - 1}} ~.
\]
\label{lem:bound_intervals}
\end{lemma}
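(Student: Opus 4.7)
The plan is to bound the length of each segment (time between consecutive restarts) from below in terms of the eigenvalue threshold $\Lambda_i$, and then sum these bounds across segments to relate the total number of restarts to $T$.

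First, I would observe that within the $i$th segment, the algorithm begins with $\msigma=\mi$ and performs the updates in \eqref{sigma_alg} until the smallest eigenvalue of $\amsigmaii$ drops below $\Lambda_i$. Thus, if $T_i$ examples are seen during the $i$th segment, then just before the restart
\[
(\msigma^{i})^{-1} = \mi + \frac{1}{\cor}\sum_{t=t_{i-1}+1}^{t_i} \vxi{t}\vxti{t},
\]
and by definition of a restart $\lambda_{\max}\paren{(\msigma^i)^{-1}} \geq \Lambda_i^{-1}$.

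Second, since the eigenvalues of $\mi+\frac{1}{\cor}S_i$ (with $S_i=\sum_t \vxi{t}\vxti{t}$ summed over the $i$th segment) are exactly $1+\frac{1}{\cor}\lambda_j(S_i)$, the restart condition translates into
\[
\lambda_{\max}(S_i) \geq \cor\paren{\Lambda_i^{-1}-1}.
\]
Using the trace inequality $\lambda_{\max}(S_i)\leq \tr(S_i)$ for PSD matrices together with $\|\vxi{t}\|\leq 1$, I obtain
\[
T_i \geq \sum_{t}\|\vxi{t}\|^2 = \tr(S_i) \geq \lambda_{\max}(S_i)\geq \cor\paren{\Lambda_i^{-1}-1}.
\]

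Third, summing these inequalities over the $n$ segments that each terminated in a restart, and using the fact that the total number of examples over all completed segments is at most $T$ (a possible final incomplete segment only contributes additional examples), yields
\[
T \geq \sum_{i=1}^{n} T_i \geq \cor\sum_{i=1}^{n}\paren{\Lambda_i^{-1}-1},
\]
which shows $n$ belongs to the set $\{N : T\geq \cor\sum_{i=1}^N(\Lambda_i^{-1}-1)\}$ and hence is upper bounded by its maximum. The only mildly subtle point is being careful that the $n$ restarts correspond to $n$ completed segments (so the sum of their lengths is at most $T$); apart from that the argument is essentially a one-step trace estimate combined with the restart criterion.
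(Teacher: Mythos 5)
Your proof is correct, and it reaches the paper's key per-segment inequality $T_i \geq \cor\paren{\Lambda_i^{-1}-1}$ by a genuinely simpler route. The paper obtains this bound by invoking a matrix-analysis result (Golub and Van Loan, Theorem~8.1.8) to write the eigenvalues of $\paren{\msigma^i}^{-1}$ as $1+\frac{1}{\cor}\sum_{l=1}^{T_i}a_{k,l}$ with $a_{k,l}\geq 0$, $\sum_k a_{k,l}\leq 1$, and then solves an explicit worst-case optimization showing the fastest way for the largest eigenvalue to reach $\Lambda_i^{-1}$ is to concentrate all the mass on a single eigendirection. You replace all of this with the elementary chain $T_i \geq \sum_t \Vert\vxi{t}\Vert^2 = \tr(S_i) \geq \lambda_{\max}(S_i) \geq \cor\paren{\Lambda_i^{-1}-1}$, where $S_i$ is the sum of outer products over the segment; this uses only $\lambda_{\max}\leq\tr$ for PSD matrices and the assumed bound $\Vert\vxi{t}\Vert\leq 1$ (which, as in the paper, must be imported from the hypotheses of \thmref{non_stat_bound} since the lemma statement does not restate it). The two arguments prove the same fact; yours is shorter and self-contained, while the paper's optimization viewpoint makes explicit that the bound is attained exactly when every input lies along one fixed direction. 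Your remark that only the $n$ \emph{completed} segments are summed (so their total length is at most $T$) is a small point the paper glosses over by writing $\sum_i T_i = T$, and handling it as you do is the more careful reading.
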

\begin{proof}
  Since $\sum_{i=1}^n T_i = T$, then the number of restarts is
  maximized when the number of examples between restarts $T_i$ is
  minimized. We prove now a lower bound on $T_i$ for all $i=1 \dots
  n$.  A restart occurs for the i$th$ time when the smallest
  eigenvalue of $\msigmai{t}$ is smaller (for the first time) than
  $\Lambda_i$.

  As before, by definition, $\paren{\msigma^i}^{-1} = \mi +
  \frac{1}{\cor}\sum_{t=t_i}^{T_i+t_i} \vxi{t}\vxti{t}$. By
  a result in matrix analysis~\cite[Theorem~8.1.8]{Golub:1996:MC:248979} we have that there
  exists a matrix $A\in\reals^{d \times T_i}$ with each column belongs
  to a bounded convex body that satisfy $a_{k,l}\geq 0$ and
  $\sum_k a_{k,l}\leq 1$ 
  for $l=1 \comdots T_i$, such that the $k$th
  eigenvalue $\lambda_k^i$ of $\paren{\msigma^i}^{-1} $ equals to
  $\lambda_k^i = 1 + \frac{1}{\cor}\sum_{l=1}^{T_i} a_{k,l}$.
  The value of $T_i$ is defined as when largest eigenvalue of
  $\paren{\msigma^i}^{-1} $ hits $\Lambda_i^{-1}$.  Formally, we get
  the following lower bound on $T_i$,
\begin{align*}
\arg\min_{\{a_{k,l}\}} ~ s &\textrm{~~~s.t.} &~ \max_k \paren{1 + \frac{1}{\cor}\sum_{l=1}^{s}
  a_{k,l}} \geq \Lambda_i^{-1}\\
&&~ a_{k,l} \geq 0 \quad \textrm{ for } k=1\comdots d,l=1\comdots s\\
&&~ \sum_k a_{k,l} \leq 1\quad \textrm{ for } l\leq1\comdots s
\end{align*}
For a fixed value of $s$, a maximal value $\max_k \paren{1 + \frac{1}{\cor}\sum_{l=1}^{s}
  a_{k,l}} $ is obtained if all the ``mass'' is concentrated in one
value $k$ and for this $k$ each $a_{k,l}$ is equal to its maximal value $1$. That is, we have $a_{k,l} = 1$ for $k=k_0$ and $a_{k,l}=0$
otherwise.
In this case $\max_k \paren{1 + \frac{1}{\cor}\sum_{l=1}^{s}
  a_{k,l}}  = \paren{1 + \frac{1}{\cor} s}$ and the lower bound is
  obtained when $\paren{1 + \frac{1}{\cor} s} = \Lambda_i^{-1}$.
Solving for $s$ we get that the shortest possible length of the i$th$
interval is bounded by,
\(
T_i \geq  \cor\paren{\Lambda_i^{-1} - 1}.
\)
Summing over the last equation we get,
\(
T = \sum_i^n T_i \geq \cor\sum_i^n \paren{\Lambda_i^{-1} - 1}.
\)
Thus, the number of restarts is upper bounded by the maximal value
$n$ that satisfy the last inequality.
\end{proof}

We now prove a bound for a specific choice of the parameters
$\{\Lambda_i\}$, namely polynomial decay, $\Lambda_i^{-1} = i^{q-1} +
1$. This schema to set $\{\Lambda_i\}$ balances between the amount of
noise (need for many restarts) and the property that using the
covariance matrix for updates achieves fast-convergence. We note that
an exponential schema $\Lambda_i = 2^{-i}$ will lead to very few
restarts, and very small eigenvalues of the covariance
matrix. Intuitively, this is because the last segment will be about
half the length of the entire sequence.
Combining \lemref{lem:bound_intervals} with \corref{cor:bound_T} we get,
\begin{corollary}
\label{cor:final_cor}
Assume that the ARCOR algorithm is run with a
polynomial schema, that is $\Lambda_i^{-1} = i^{q-1} + 1$ for some $q
\neq 0$. Under the conditions of \thmref{non_stat_bound} we have,
\begin{align}
L_T&(\textrm{ARCOR})  \leq L_T(\{ \vui{t} \})
+ \cor
\vuti{T}\msigmai{T}^{-1}\vui{T}\nonumber\\
&+
 2\paren{\rb^2+Y^2}  d  \paren{q T+1}^{\frac{1}{q}} \log\paren{1 +  \frac{T}{n \cor d}}\label{poly_bound_1}\\
&
+ 2\rb\cor \paren{\paren{q T\!+\!1}^{\frac{q-1}{q}}\!+\!1}\sum_t
  \Vert\vui{t-1}-\vui{t}\Vert.\label{poly_bound_2}
\end{align}
\label{cor:ploy_bound}
\end{corollary}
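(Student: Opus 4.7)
The plan is to specialize Lemma~\ref{lem:bound_intervals} under the choice $\Lambda_i^{-1}=i^{q-1}+1$ to obtain an explicit upper bound on the number of restarts $n$, and then substitute this bound into Corollary~\ref{cor:bound_T} term-by-term. Substituting $\Lambda_i^{-1}-1=i^{q-1}$ into the constraint $T\geq \cor\sum_{i=1}^n(\Lambda_i^{-1}-1)$ reduces the question to finding the largest $n$ satisfying $T\geq \cor\sum_{i=1}^n i^{q-1}$.

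To solve this, I would lower-bound the partial sum by an integral. In the intended regime $q\geq 1$, the summand $i^{q-1}$ is non-decreasing and $\sum_{i=1}^n i^{q-1}\geq \int_0^n x^{q-1}\,dx = n^q/q$, which gives $n\leq (qT/\cor)^{1/q}$. After absorbing the constant $\cor$ and a mild additive rounding to cover integer effects, this is exactly the bound $n\leq (qT+1)^{1/q}$ implicit in the corollary.

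With this bound on $n$ in hand, substitution into Corollary~\ref{cor:bound_T} is direct. The restart-count term $2(\rb^2+Y^2)\,dn\log(1+T/(n\cor d))$ is bounded by applying $n\leq (qT+1)^{1/q}$ to the leading factor, yielding the display in \eqref{poly_bound_1}. The drift coefficient is $\Lambda_n^{-1}=n^{q-1}+1$, and since $q\geq 1$ makes $n\mapsto n^{q-1}$ monotone non-decreasing, the same inequality on $n$ yields $n^{q-1}\leq (qT+1)^{(q-1)/q}$, hence $\Lambda_n^{-1}\leq (qT+1)^{(q-1)/q}+1$, which reproduces the drift term in \eqref{poly_bound_2}. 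The quadratic term $\cor\,\vuti{T}\msigmai{T}^{-1}\vui{T}$ carries through unchanged from Corollary~\ref{cor:bound_T}.

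The main bookkeeping obstacle I anticipate is organizing the integral comparison and the subsequent rounding so that the upper bound on $n$ has the clean form $(qT+1)^{1/q}$ rather than the slightly messier $(qT/\cor)^{1/q}$, and ensuring that the resulting bound on $\Lambda_n^{-1}$ is valid uniformly across the relevant $q$. Once the $n$-bound has this form, the remainder of the argument is a routine substitution of an upper bound on $n$ into the three $n$-dependent pieces of Corollary~\ref{cor:bound_T}, with no further optimization required.
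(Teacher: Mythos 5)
Your proposal follows essentially the same route as the paper: specialize Lemma~\ref{lem:bound_intervals} with $\Lambda_i^{-1}-1=i^{q-1}$, lower-bound the partial sum by an integral to get $n\leq\paren{qT+1}^{1/q}$, and substitute that bound into the $n$-dependent factors of Corollary~\ref{cor:bound_T} (the paper uses $\int_1^n x^{q-1}dx$ where you use $\int_0^n$, an immaterial difference, and it silently drops the $\cor$ factor that you explicitly flag as needing to be absorbed). The argument is correct and matches the paper's proof.
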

\begin{proof}
Substituting $\Lambda_i^{-1} = i^{q-1} + 1$ in \lemref{lem:bound_intervals} we
get,
\begin{align*}
T \geq \cor\sum_i^n \paren{\Lambda_i^{-1} - 1} = \cor \sum_{i=1}^n i^{q-1}
\geq \cor\int_1^{n} x^{q-1} dx = \frac{\cor}{q} \paren{n^{q}-1}~.
\end{align*}
this yields an upper bound on $n$,
\begin{align*}
n \leq \paren{q T+1}^\frac{1}{q} \quad\Rightarrow\quad
\Lambda_n^{-1}  \leq \paren{qT+1}^{\frac{q-1}{q}}+1
\end{align*}
\end{proof}
Comparing the last two terms of the bound of \corref{cor:ploy_bound}  we observe a
natural tradeoff in the value of $q$. The third term of
\eqref{poly_bound_1} is decreasing with large values of $q$, while the
fourth term of \eqref{poly_bound_2} is increasing with $q$.

Assuming a bound on the deviation $\sum_t \Vert\vui{t-1}-\vui{t}\Vert
= V^{(1)}_T \leq \mcal{O}\paren{T^{1/p}}$, or in other words $p =
\paren{\log T}/\paren{\log V^{(1)}}$. We set a drift dependent parameter $q =
\paren{2p}/\paren{p+1} = \paren{2 \log T}/\paren{\log T + \log V^{(1)}}$ and
get that the sum of \eqref{poly_bound_1} and \eqref{poly_bound_2} is
of order $\mcal{O}\paren{T^{\frac{p+1}{2p}} \log (T)} = \mcal{O} \paren{
  \sqrt{ V^{(1)} T}\log T}$.

Few comments are in order. First, as long as $p>1$ the sum of
\eqref{poly_bound_1} and \eqref{poly_bound_2} is ${o}(T)$ and thus
vanishing.
Second, when the noise is very low, that is $p\approx -(1+\eps)$, the
algorithm sets $q\approx 2+(2/\eps)$, and thus it will not make any
restarts, and the bound of $\mcal{O}(\log T)$ for the
stationary case is retrieved. In other words, for this choice of $q$ the
algorithm will have only one interval, and there will be no restarts.

To conclude, we showed that if the algorithm is given an upper bound
on the amount of drift, which is sub-linear in $T$, it can achieve
sub-linear regret.  Furthermore, if it is known that there is no
non-stationarity in the reference vectors, then running the algorithm
with large enough $q$ will have a regret logarithmic in $T$.

\subsection{Analysis of LASER algorithm}
\label{sec:analysis_laser}
We now analyze the performance of the LASER algorithm in the worst-case
setting in six steps. First, state a technical lemma that is used in
the second step (\thmref{thm:basic_bound}), in which we bound the regret with a quantity
proportional to
$\sum_{t=1}^{T}\vxti{t}D_{t}^{-1}\vxi{t}$. Third,
in \lemref{lem:bound_1} we bound each of the summands with two terms,
one logarithmic and one linear in the eigenvalues of the matrices
$D_{t}$. In the fourth (\lemref{operator_scalar}) and fifth (\lemref{eigen_values_lemma})
steps we bound the eigenvalues of $D_{t}$ first for scalars
and then extend the results to matrices. Finally, in \corref{cor:main_corollary} we put
all these results together and get the desired bounds.

\begin{lemma}
For all $t$ the following statement holds,
\begin{align*}
D'_{t-1}D_{t}^{-1}\vxi{t}\vxti{t}D_{t}^{-1}D'_{t-1}
+D'_{t-1}\left(D_{t}^{-1}D'_{t-1}+c^{-1}\mi\right)\\-D_{t-1}^{-1}
\preceq0
\end{align*}
where as defined in \eqref{D_prime} we have
\(
D'_{t-1}=\left(\mi+c^{-1}D_{t-1}\right)^{-1}~.
\)
\label{lem:technical}
\end{lemma}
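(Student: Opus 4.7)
The plan is to apply Sherman--Morrison to the rank-one update $D_{t}=M_{t-1}+\vxi{t}\vxti{t}$, where $M_{t-1}\eqdef\left(D_{t-1}^{-1}+c^{-1}\mi\right)^{-1}$, and then exploit the fact that $D_{t-1}$, $D'_{t-1}$, and $M_{t-1}$ all mutually commute because each is a function of $D_{t-1}$.

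First I would record the key algebraic identity
\[
D'_{t-1} M_{t-1}^{-1}=\left(\mi+c^{-1}D_{t-1}\right)^{-1}\left(D_{t-1}^{-1}+c^{-1}\mi\right)
=\left(\mi+c^{-1}D_{t-1}\right)^{-1}D_{t-1}^{-1}\left(\mi+c^{-1}D_{t-1}\right)=D_{t-1}^{-1},
\]
which I will use repeatedly. Next, Sherman--Morrison gives
\[
D_{t}^{-1}=M_{t-1}^{-1}-\frac{M_{t-1}^{-1}\vxi{t}\vxti{t}M_{t-1}^{-1}}{1+\beta},\qquad \beta\eqdef\vxti{t}M_{t-1}^{-1}\vxi{t}\ge0,
\]
and in particular $D_{t}^{-1}\vxi{t}=M_{t-1}^{-1}\vxi{t}/(1+\beta)$, so
\[
D_{t}^{-1}\vxi{t}\vxti{t}D_{t}^{-1}=\frac{M_{t-1}^{-1}\vxi{t}\vxti{t}M_{t-1}^{-1}}{(1+\beta)^{2}}.
\]

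Then I would evaluate each term of the claimed expression in turn, pushing $D'_{t-1}$ through $M_{t-1}^{-1}$ using the identity above. The first term becomes $\frac{D_{t-1}^{-1}\vxi{t}\vxti{t}D_{t-1}^{-1}}{(1+\beta)^{2}}$. For the middle term, Sherman--Morrison yields
\[
D'_{t-1}D_{t}^{-1}D'_{t-1}
=D'_{t-1}M_{t-1}^{-1}D'_{t-1}-\frac{D_{t-1}^{-1}\vxi{t}\vxti{t}D_{t-1}^{-1}}{1+\beta}
=D_{t-1}^{-1}D'_{t-1}-\frac{D_{t-1}^{-1}\vxi{t}\vxti{t}D_{t-1}^{-1}}{1+\beta}.
\]
The ``$\vxi{t}$-free'' residue is $D_{t-1}^{-1}D'_{t-1}+c^{-1}D'_{t-1}-D_{t-1}^{-1}=D'_{t-1}(D_{t-1}^{-1}+c^{-1}\mi)-D_{t-1}^{-1}=D'_{t-1}M_{t-1}^{-1}-D_{t-1}^{-1}=0$, which is exactly where the chosen form of $D'_{t-1}$ pays off.

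Collecting what remains, the full left-hand side equals
\[
\left(\frac{1}{(1+\beta)^{2}}-\frac{1}{1+\beta}\right)D_{t-1}^{-1}\vxi{t}\vxti{t}D_{t-1}^{-1}
=-\frac{\beta}{(1+\beta)^{2}}\,D_{t-1}^{-1}\vxi{t}\vxti{t}D_{t-1}^{-1}\preceq0,
\]
since the scalar prefactor is non-positive and the rank-one matrix is positive semidefinite. The only real obstacle is careful bookkeeping of the cancellations; once commutativity and the single identity $D'_{t-1}M_{t-1}^{-1}=D_{t-1}^{-1}$ are in place, the proof collapses to a one-line Sherman--Morrison computation.
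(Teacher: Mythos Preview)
Your proof is correct and follows essentially the same route as the paper: both apply Sherman--Morrison/Woodbury to $D_t=(D_{t-1}^{-1}+c^{-1}\mi)^{-1}+\vxi{t}\vxti{t}$, exploit the commutativity of $D'_{t-1}$ with $M_{t-1}^{-1}=D_{t-1}^{-1}+c^{-1}\mi$, and arrive at the identical closed form $-\frac{\beta}{(1+\beta)^{2}}D_{t-1}^{-1}\vxi{t}\vxti{t}D_{t-1}^{-1}\preceq 0$ with $\beta=\vxti{t}M_{t-1}^{-1}\vxi{t}$. Your presentation is somewhat cleaner: isolating the single identity $D'_{t-1}M_{t-1}^{-1}=D_{t-1}^{-1}$ up front makes the cancellation of the $\vxi{t}$-free terms transparent, whereas the paper grinds through products of two separate Woodbury expansions before the same cancellation emerges.
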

The proof appears in \secref{proof_lemma_technical}.
We next bound the cumulative loss of the algorithm,
\begin{theorem}
\label{thm:basic_bound}
Assume that the labels are bounded $\sup_t \vert \yi{t} \vert \leq Y$ for some
$Y\in\reals$. Then the following bound holds,
\begin{align}
L_T(\textrm{LASER})&\leq\min_{\vui{1},\ldots,\vui{T}}\left[
L_T(\{\vui{t}\})
+c V^{(2)}_T(\{\vui{t}\})
+b\left\Vert
    \vui{1}\right\Vert ^{2}
\right]\nonumber\\
& \quad +Y^{2}\sum_{t=1}^{T}\vxti{t}D_{t}^{-1}\vxi{t} ~.\label{bound_1}
\end{align}
\end{theorem}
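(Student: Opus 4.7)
The natural strategy is a telescoping argument driven by the potential
\[
\Phi_{t}\;\eqdef\;-\veti{t}D_{t}^{-1}\vei{t}+f_{t},
\]
which by \lemref{lem:lemma12} satisfies $\Phi_{T}=\min_{\vui{1},\ldots,\vui{T}}Q_{T}(\vui{1},\ldots,\vui{T})$, i.e.\ precisely the bracketed quantity on the right-hand side of \eqref{bound_1}. Setting $\vei{0}=0$, $f_{0}=0$ and $D_{0}=\tfrac{bc}{c-b}\mi$ (which is consistent with the initialization $\msigmai{0}=\tfrac{c-b}{bc}\mi$ and the recursions \eqref{D}--\eqref{f}), one has $\Phi_{0}=0$. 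It therefore suffices to establish the per-round inequality
\begin{equation}
\paren{\hyi{t}-\yi{t}}^{2}\;\leq\;\Phi_{t}-\Phi_{t-1}+Y^{2}\,\vxti{t}D_{t}^{-1}\vxi{t},\label{eq:perroundplan}
\end{equation}
since summing over $t=1,\ldots,T$ yields the claim.

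\noindent\textbf{Computing $\Phi_{t}-\Phi_{t-1}$.} Using the recursion $\vei{t}=D'_{t-1}\vei{t-1}+y_{t}\vxi{t}$ from \eqref{e} together with the definition of $\hyi{t}$ in \eqref{my_predictor}, I would expand
\[
\veti{t}D_{t}^{-1}\vei{t}=\veti{t-1}D'_{t-1}D_{t}^{-1}D'_{t-1}\vei{t-1}+2y_{t}\hyi{t}+y_{t}^{2}\vxti{t}D_{t}^{-1}\vxi{t},
\]
and substitute $f_{t}=f_{t-1}-\veti{t-1}(c\mi+D_{t-1})^{-1}\vei{t-1}+y_{t}^{2}$ from \eqref{f}. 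Using the algebraic identity $(c\mi+D_{t-1})^{-1}=c^{-1}D'_{t-1}$, the cross term $2y_{t}\hyi{t}$ is exactly what completes the square against $(\hyi{t}-\yi{t})^{2}=\hyi{t}^{2}-2\hyi{t}\yi{t}+\yi{t}^{2}$.

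\noindent\textbf{Reduction to \lemref{lem:technical}.} After cancellation, \eqref{eq:perroundplan} reduces to
\[
\hyi{t}^{2}+\veti{t-1}\!\left[D'_{t-1}(D_{t}^{-1}D'_{t-1}+c^{-1}\mi)-D_{t-1}^{-1}\right]\!\vei{t-1}\;\leq\;(Y^{2}-y_{t}^{2})\,\vxti{t}D_{t}^{-1}\vxi{t}.
\]
Since $\hyi{t}^{2}=\veti{t-1}D'_{t-1}D_{t}^{-1}\vxi{t}\vxti{t}D_{t}^{-1}D'_{t-1}\vei{t-1}$, the left-hand side is exactly a quadratic form in $\vei{t-1}$ whose matrix is the one appearing in \lemref{lem:technical} and is therefore negative semidefinite; on the right, $\yi{t}^{2}\leq Y^{2}$ and $D_{t}\succ 0$, so the right-hand side is nonnegative. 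This establishes \eqref{eq:perroundplan}.

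\noindent\textbf{Telescoping.} Summing \eqref{eq:perroundplan} over $t$ and using $\Phi_{0}=0$ together with the identification of $\Phi_{T}$ with $\min Q_{T}$ gives
\[
L_{T}(\textrm{LASER})=\sum_{t=1}^{T}(\hyi{t}-\yi{t})^{2}\leq\Phi_{T}+Y^{2}\sum_{t=1}^{T}\vxti{t}D_{t}^{-1}\vxi{t},
\]
which upon writing out $\Phi_{T}$ via \eqref{Q} produces the stated bound. The main obstacle is purely algebraic: correctly expanding $\veti{t}D_{t}^{-1}\vei{t}$ so that the cross term reproduces $2\hyi{t}\yi{t}$, and recognizing that the residual operator inequality is exactly the one supplied by \lemref{lem:technical}. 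The key simplification $(c\mi+D_{t-1})^{-1}=c^{-1}D'_{t-1}$ is what makes the residual match \lemref{lem:technical} verbatim; everything else is bookkeeping with the recursions \eqref{D}--\eqref{f}.
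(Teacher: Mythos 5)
Your proposal is correct and follows essentially the same route as the paper's proof: the potential $\Phi_t=-\veti{t}D_{t}^{-1}\vei{t}+f_t$ is exactly the quantity $\min_{\vui{1},\ldots,\vui{t}}Q_t$ that the paper telescopes, and your per-round inequality, after expanding $\veti{t}D_{t}^{-1}\vei{t}$ via \eqref{e}, substituting \eqref{f}, using $\left(c\mi+D_{t-1}\right)^{-1}=c^{-1}D'_{t-1}$, and plugging in the predictor \eqref{my_predictor}, reduces to precisely the negative-semidefinite operator of \lemref{lem:technical}, just as in the paper's derivation in \secref{algebraic_manipulation}. The only cosmetic difference is that you name the potential explicitly rather than writing the difference of minima directly.
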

\begin{proof}
Fix $t$. A long algebraic manipulation, given in
\secref{algebraic_manipulation} yields,
\begin{align}
&\left(y_{t}-\hat{y}_{t}\right)^{2}+\min_{\vui{1},\ldots,\vui{t-1}}Q_{t-1}\left(\vui{1},\ldots,\vui{t-1}\right)\nonumber\\
&~~~~~~~~~~~~ -\min_{\vui{1},\ldots,\vui{t}}Q_{t}\left(\vui{1},\ldots,\vui{t}\right)\nonumber\\
=&\left(y_{t}-\hat{y}_{t}\right)^{2}
+2y_{t}\vxti{t}D_{t}^{-1}D'_{t-1}\vei{t-1}\nonumber\\
&\!+\!\veti{t-1}\!\Bigg[ \!-\!D_{t-1}^{-1} 
\!+\!\!
D'_{t-1}
\!\left(D_{t}^{-1}
D'_{t-1}
\!\!
\!+\!c^{-1}\mi\right)\!\!\Bigg]\!\vei{t-1}\nonumber\\
&+y_{t}^{2}\vxti{t}D_{t}^{-1}\vxi{t}-y_{t}^{2}~.
\label{step1}
\end{align}
Substituting the specific value of the predictor
$\hat{y}_{t}=\vxti{t}D_{t}^{-1}D'_{t-1}\vei{t-1}$
from \eqref{my_predictor}, we get that \eqref{step1} equals to,
\begin{align}
  &
  \hat{y}_{t}^{2}+y_{t}^{2}\vxti{t}D_{t}^{-1}\vxi{t}\nonumber\\
&+\veti{t-1}\Bigg[-D_{t-1}^{-1}+D'_{t-1}\left(D_{t}^{-1}D'_{t-1}+c^{-1}\mi\right)\Bigg]\vei{t-1}\nonumber\\
 =~&
 \veti{t-1}D'_{t-1}D_{t}^{-1}\vxi{t}\vxti{t}D_{t}^{-1}D'_{t-1}\vei{t-1}+y_{t}^{2}\vxti{t}D_{t}^{-1}\vxi{t}\nonumber\\
&+\veti{t-1}\Bigg[-D_{t-1}^{-1}+D'_{t-1}\left(D_{t}^{-1}D'_{t-1}+c^{-1}\mi\right)\Bigg]\vei{t-1}
 \nonumber\\
 =~& \veti{t-1} \tilde{D}_t \vei{t-1}+y_{t}^{2}\vxti{t}D_{t}^{-1}\vxi{t} ~,\label{step2}
\end{align}
 where $\tilde{D}_t =
 D'_{t-1}D_{t}^{-1}\vxi{t}\vxti{t}D_{t}^{-1}D'_{t-1}-D_{t-1}^{-1}+D'_{t-1}\left(D_{t}^{-1}D'_{t-1}+c^{-1}\mi\right)$.
Using \lemref{lem:technical} we upper bound $\tilde{D}_t
\preceq 0$ and thus \eqref{step2} is bounded,
\[
y_{t}^{2}\vxti{t}D_{t}^{-1}\vxi{t} \leq
Y^{2}\vxti{t}D_{t}^{-1}\vxi{t} ~.
\]
Finally, summing over $t\in\left\{ 1,\ldots,T\right\} $ gives the
desired bound,
\begin{align*}
&L_T(\textrm{LASER})-\min_{\vui{1},\ldots,\vui{T}}\left[b\left\Vert
    \vui{1}\right\Vert ^{2}
+c V^{(2)}_T(\{\vui{t}\})
+ L_T(\{\vui{t}\})
\right]\\
\leq&
Y^{2}\sum_{t=1}^{T}\vxti{t}D_{t}^{-1}\vxi{t} ~.
\end{align*}
\end{proof}

In the next lemma we further bound the right term of
\eqref{bound_1} 
This type of bound is based on the
usage of the covariance-like matrix $D$.
\begin{lemma}
\label{lem:bound_1}
\begin{equation}
\sum_{t=1}^{T}\vxti{t}D_{t}^{-1}\vxi{t}\leq\ln\left|\frac{1}{b}D_{T}\right|
+c^{-1}\sum_{t=1}^{T} \tr\paren{D_{t-1}} ~.\label{covariance_bound}
\end{equation}
\end{lemma}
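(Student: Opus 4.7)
The plan is to apply the standard ``log-det'' trick from online second-order analysis combined with the interpolation structure peculiar to LASER. Introduce the shorthand $\tilde{D}_{t-1} = (D_{t-1}^{-1}+c^{-1}\mi)^{-1}$ so that the recursion \eqref{D} reads uniformly $D_t = \tilde{D}_{t-1} + \vxi{t}\vxti{t}$ for $t\ge 2$, with the convention $\tilde{D}_0 = b\mi$ making the same identity hold for $t=1$. The goal is then to control each $\vxti{t}D_t^{-1}\vxi{t}$ by a log-determinant increment plus a residual that accounts for the gap between $D_{t-1}$ and $\tilde{D}_{t-1}$.

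First I would invoke the matrix determinant lemma: since $D_t = \tilde{D}_{t-1} + \vxi{t}\vxti{t}$, we have $|D_t| = |\tilde{D}_{t-1}|\bigl(1+\vxti{t}\tilde{D}_{t-1}^{-1}\vxi{t}\bigr)$, and then the Sherman--Morrison identity gives $\vxti{t}D_t^{-1}\vxi{t} = \vxti{t}\tilde{D}_{t-1}^{-1}\vxi{t}/(1+\vxti{t}\tilde{D}_{t-1}^{-1}\vxi{t})$. Using $x/(1+x)\le \ln(1+x)$ for $x\ge 0$ yields
\begin{equation*}
\vxti{t}D_t^{-1}\vxi{t} \le \ln\frac{|D_t|}{|\tilde{D}_{t-1}|}.
\end{equation*}
For $t\ge 2$ I would rewrite $|\tilde{D}_{t-1}| = |D_{t-1}|/|\mi + c^{-1}D_{t-1}|$ (from $\tilde{D}_{t-1}^{-1} = D_{t-1}^{-1}(\mi + c^{-1}D_{t-1})$), giving
\begin{equation*}
\vxti{t}D_t^{-1}\vxi{t} \le \ln|D_t|-\ln|D_{t-1}|+\ln\bigl|\mi + c^{-1}D_{t-1}\bigr|,
\end{equation*}
and for $t=1$ the same log-det step with $\tilde{D}_0 = b\mi$ yields directly $\vxti{1}D_1^{-1}\vxi{1}\le \ln|D_1/(b\mi)| = \ln|(1/b)D_1|$.

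Next, summing over $t=1,\ldots,T$ causes the $\ln|D_t|-\ln|D_{t-1}|$ terms to telescope. Combined with the $t=1$ contribution this produces exactly $\ln|(1/b)D_T|$, matching the first term on the right-hand side of the claimed bound. It remains to handle the residual $\sum_{t=2}^{T}\ln|\mi + c^{-1}D_{t-1}|$. For this I apply the elementary inequality $\ln(1+x)\le x$ to each eigenvalue of $D_{t-1}$:
\begin{equation*}
\ln\bigl|\mi + c^{-1}D_{t-1}\bigr| = \sum_{j=1}^{d}\ln\!\bigl(1+c^{-1}\lambda_j(D_{t-1})\bigr) \le c^{-1}\tr(D_{t-1}).
\end{equation*}
Summing and extending the range to $t=1$ (which only adds the nonnegative quantity $c^{-1}\tr(D_0) = c^{-1}bd$) gives the second term in \eqref{covariance_bound}.

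There is no real obstacle here; the only minor care is the boundary $t=1$, where one has to notice that $D_1 = b\mi + \vxi{1}\vxti{1}$ coincides with the generic form $\tilde{D}_0 + \vxi{1}\vxti{1}$ under $\tilde{D}_0 = b\mi$, so a single telescoping argument covers all $t$ uniformly and produces the prefactor $1/b$ inside the determinant. Once the log-det step is in place, the rest is bookkeeping plus the standard $\ln(1+x)\le x$ trace bound.
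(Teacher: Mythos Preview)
Your proposal is correct and follows essentially the same route as the paper's own proof: both arrive at the key per-step bound $\vxti{t}D_t^{-1}\vxi{t} \le \ln\bigl(|D_t|/|\tilde{D}_{t-1}|\bigr)$, then telescope and linearize $\ln|\mi + c^{-1}D_{t-1}| \le c^{-1}\tr(D_{t-1})$. The only cosmetic difference is that the paper obtains the log-det step via the eigenvalue identity $\vxti{t}D_t^{-1}\vxi{t} = \sum_j \bigl(1-\lambda_j(D_t^{-1/2}\tilde{D}_{t-1}D_t^{-1/2})\bigr)$ together with $1-x\le -\ln x$, whereas you use Sherman--Morrison plus $x/(1+x)\le \ln(1+x)$; these are equivalent derivations of the same inequality.
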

\begin{proof}
Let $B_{t}\doteq D_{t}-\vxi{t}\vxti{t}=\left(D_{t-1}^{-1}+c^{-1}\mi\right)^{-1}\succ0$.
\begin{align*}
\vxti{t}D_{t}^{-1}\vxi{t} & =  \tr\left(\vxti{t}D_{t}^{-1}\vxi{t}\right)=\tr\left(D_{t}^{-1}\vxi{t}\vxti{t}\right)\\
 & = \tr\left(D_{t}^{-1}\left(D_{t}-B_{t}\right)\right)\\
 & = \tr\left(D_{t}^{-1/2}\left(D_{t}-B_{t}\right)D_{t}^{-1/2}\right)\\
 & = \tr\left(\mi-D_{t}^{-1/2}B_{t}D_{t}^{-1/2}\right)\\
 & =
 \sum_{j=1}^{d}\left[1-\lambda_{j}\left(D_{t}^{-1/2}B_{t}D_{t}^{-1/2}\right)\right] ~.
\end{align*}
We continue using $1-x\leq-\ln\left(x\right)$ and get
\begin{align*}
\vxti{t}D_{t}^{-1}\vxi{t} & \leq  -\sum_{j=1}^{d}\ln\left[\lambda_{j}\left(D_{t}^{-1/2}B_{t}D_{t}^{-1/2}\right)\right]\\
 & = -\ln\left[\prod_{j=1}^{d}\lambda_{j}\left(D_{t}^{-1/2}B_{t}D_{t}^{-1/2}\right)\right]\\
 & = -\ln\left|D_{t}^{-1/2}B_{t}D_{t}^{-1/2}\right|\\
 & =
 \ln\frac{\left|D_{t}\right|}{\left|B_{t}\right|}=\ln\frac{\left|D_{t}\right|}{\left|D_{t}-\vxi{t}\vxti{t}\right|} ~.
\end{align*}
It follows that,
\begin{align*}
\vxti{t}D_{t}^{-1}\vxi{t}
& \leq
\ln\frac{\left|D_{t}\right|}{\left|\left(D_{t-1}^{-1}+c^{-1}\mi\right)^{-1}\right|}\\
 & =  \ln\frac{\left|D_{t}\right|}{\left|D_{t-1}\right|}\left|\left(\mi+c^{-1}D_{t-1}\right)\right|\\
 & =
\ln\frac{\left|D_{t}\right|}{\left|D_{t-1}\right|}+\ln\left|\left(\mi+c^{-1}D_{t-1}\right)\right| ~.
\end{align*}
 and because $\ln\left|\frac{1}{b}D_{0}\right| \geq 0$ we get
\begin{align*}
\sum_{t=1}^{T}\vxti{t}D_{t}^{-1}\vxi{t}&\leq
\ln\left|\frac{1}{b}D_{T}\right|
+\sum_{t=1}^{T}\ln\left|\left(\mi+c^{-1}D_{t-1}\right)\right|\\
& \leq \ln\left|\frac{1}{b}D_{T}\right|
 +c^{-1}\sum_{t=1}^{T} \tr\paren{D_{t-1}} ~.
\end{align*}
\end{proof}

At first sight it seems that the right term of
\eqref{covariance_bound} may grow super-linearly with $T$, as each of the
matrices $D_{t}$ grows with $t$. The next two lemmas show that this
is not the case, and in fact, the right term of
\eqref{covariance_bound} is not growing too fast, which will allow us to
obtain a sub-linear regret bound. \lemref{operator_scalar} analyzes
the properties of the recursion of $D$ defined in
\eqref{D} for scalars, that is $d=1$. In \lemref{eigen_values_lemma} we extend
this analysis to matrices.
\begin{lemma}
\label{operator_scalar}
Define
\(
f(\lambda) = {\lambda \beta}/\paren{\lambda+ \beta} + x^2
\)
for $\beta,\lambda \geq 0$ and some $x^2 \leq \gamma^2$. Then:
\begin{enumerate}
\item $f(\lambda) \leq \beta +
\gamma^2$
\item $f(\lambda) \leq \lambda + \gamma^2$
\item $f(\lambda) \leq \max\braces{\lambda,\frac{3\gamma^2 +
  \sqrt{\gamma^4+4\gamma^2\beta}}{2}}$
\end{enumerate}
\end{lemma}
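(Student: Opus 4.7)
The plan is to handle the three inequalities in increasing order of difficulty, exploiting the elementary fact that the ``parallel combination'' $\lambda\beta/(\lambda+\beta)$ is at most the minimum of $\lambda$ and $\beta$, since $\lambda/(\lambda+\beta)\le 1$ and $\beta/(\lambda+\beta)\le 1$ whenever $\lambda,\beta\ge 0$.

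For part (1), I would simply note that $\lambda\beta/(\lambda+\beta)\le \beta$, so $f(\lambda)\le \beta + x^2 \le \beta + \gamma^2$. Part (2) is entirely symmetric, using $\lambda\beta/(\lambda+\beta)\le \lambda$ to conclude $f(\lambda)\le \lambda + x^2 \le \lambda + \gamma^2$. Both take one line and require no case analysis.

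Part (3) is the only one that requires work, and I would prove it by a case split on whether the bound from (2) is dominated by $\lambda$ itself. If $f(\lambda)\le \lambda$, the claim is immediate (the max includes $\lambda$). Otherwise, using (2), we have $\lambda < f(\lambda) \le \lambda + \gamma^2$; but also $f(\lambda) > \lambda$ means $\lambda\beta/(\lambda+\beta) + \gamma^2 > \lambda$, i.e.
\[
\lambda - \frac{\lambda\beta}{\lambda+\beta} = \frac{\lambda^{2}}{\lambda+\beta} < \gamma^{2},
\]
which rearranges to the quadratic inequality $\lambda^{2} - \gamma^{2}\lambda - \gamma^{2}\beta < 0$. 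Solving for $\lambda\ge 0$ yields $\lambda < (\gamma^{2} + \sqrt{\gamma^{4}+4\gamma^{2}\beta})/2$. Plugging this back into the bound from (2) gives
\[
f(\lambda) \le \lambda + \gamma^{2} < \frac{\gamma^{2}+\sqrt{\gamma^{4}+4\gamma^{2}\beta}}{2} + \gamma^{2} = \frac{3\gamma^{2}+\sqrt{\gamma^{4}+4\gamma^{2}\beta}}{2},
\]
which is exactly the second branch of the max.

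No step here is really an obstacle; the only thing to be careful about is the direction of the case split in (3). The natural instinct is to split on whether $\lambda$ exceeds the quadratic root, but it is cleaner to split on whether $f(\lambda)\le \lambda$, use (2) in the remaining case, and derive the quadratic bound on $\lambda$ from the negation $f(\lambda) > \lambda$. This keeps the argument to essentially three short lines per part and avoids any manipulation of $x$ other than the bound $x^2\le\gamma^2$.
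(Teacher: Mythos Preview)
Your proof is correct and follows essentially the same approach as the paper's own proof: the same one-line arguments for parts (1) and (2), and for part (3) the same decomposition $f(\lambda)=\lambda-\lambda^{2}/(\lambda+\beta)+x^{2}$, the same quadratic inequality $\lambda^{2}-\gamma^{2}\lambda-\gamma^{2}\beta<0$, and the same use of part (2) to finish. The only cosmetic difference is that the paper splits on whether $\lambda$ exceeds the root $(\gamma^{2}+\sqrt{\gamma^{4}+4\gamma^{2}\beta})/2$ (your ``natural instinct''), while you split on whether $f(\lambda)\le\lambda$; these are logically equivalent case distinctions leading to identical computations.
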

\begin{proof}
For the first property we have $f(\lambda) = {\lambda
  \beta}/\paren{\lambda+ \beta} + x^2 \leq \beta\times 1 + x^2$.
The second property follows from the symmetry between $\beta$ and
$\lambda$.
To prove the third property we decompose the function as,
\(
f(\lambda) = \lambda - \frac{\lambda^2 }{\lambda+ \beta} + x^2
\).
Therefore, the function is bounded by its argument $f(\lambda)\leq
\lambda$ if, and only if, $- \frac{\lambda^2 }{\lambda+ \beta} + x^2
\leq 0$.
Since we assume $x^2\leq\gamma^2$, the last inequality holds if,
\(
-\lambda^2 + \gamma^2 \lambda + \gamma^2\beta \leq 0
\),
which holds for $\lambda \geq \frac{\gamma^2 +
  \sqrt{\gamma^4+4\gamma^2\beta}}{2}$.

To conclude. If $\lambda \geq \frac{\gamma^2 +
  \sqrt{\gamma^4+4\gamma^2\beta}}{2}$, then $f(\lambda) \! \leq \! \lambda$.
Otherwise, by the second property, we have, $$f(\lambda) \! \leq \! \lambda\!+\!\gamma^2
\! \leq \! \frac{\gamma^2 \!+\!
  \sqrt{\gamma^4\!+\!4\gamma^2\beta}}{2} \!+\! \gamma^2 = \frac{3\gamma^2 \!+\!
  \sqrt{\gamma^4\!+\!4\gamma^2\beta}}{2},$$as required.
\end{proof}

We build on \lemref{operator_scalar} to bound the maximal eigenvalue of the
matrices $D_{t}$.
\begin{lemma}
\label{eigen_values_lemma}
Assume $\normt{\vxi{t}} \leq X^2$ for some $X$. Then, the eigenvalues
of $D_{t}$ (for $t \geq 1$), denoted by $\lambda_i\paren{D_{t}}$, are upper bounded
by
\[\max_i\lambda_i\paren{D_{t}}\leq\max\braces{ \frac{3X^2 +
  \sqrt{X^4+4X^2 c}}{2},b+X^2} ~. \]
\end{lemma}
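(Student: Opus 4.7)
The plan is to induct on $t$ and reduce the matrix statement to the scalar result of \lemref{operator_scalar} by controlling the spectrum of each term in the recursion separately via Weyl's inequality. Let $\lambda^* = \max\braces{ (3X^2 + \sqrt{X^4+4X^2 c})/2,\ b+X^2}$; I will show $\lambda_{\max}(D_t) \leq \lambda^*$ for all $t \geq 1$.

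For the base case $t=1$, by definition $D_1 = b\mi + \vxi{1}\vxti{1}$, which is a rank-one update of a multiple of the identity, so $\lambda_{\max}(D_1) = b + \normt{\vxi{1}} \leq b + X^2 \leq \lambda^*$. For the inductive step, suppose $\lambda_{\max}(D_{t-1}) \leq \lambda^*$ and write $\mu := \lambda_{\max}(D_{t-1})$. The matrix $(D_{t-1}^{-1} + c^{-1}\mi)^{-1}$ is simultaneously diagonalizable with $D_{t-1}$: if $\lambda_i$ are eigenvalues of $D_{t-1}$, the corresponding eigenvalues are $\lambda_i c/(\lambda_i + c)$, which is monotone increasing in $\lambda_i$. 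Hence $\lambda_{\max}\paren{(D_{t-1}^{-1} + c^{-1}\mi)^{-1}} = \mu c/(\mu + c)$. Applying Weyl's inequality to the recursion \eqref{D},
\begin{align*}
\lambda_{\max}(D_t) \leq \frac{\mu c}{\mu + c} + \lambda_{\max}\paren{\vxi{t}\vxti{t}} \leq \frac{\mu c}{\mu + c} + X^2.
\end{align*}

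Now I recognize the right-hand side as $f(\mu)$ in the notation of \lemref{operator_scalar} with parameters $\beta = c$ and $\gamma^2 = X^2$. Invoking the third property of that lemma,
\begin{align*}
\lambda_{\max}(D_t) \leq f(\mu) \leq \max\braces{ \mu,\ \frac{3X^2 + \sqrt{X^4 + 4X^2 c}}{2}} \leq \lambda^*,
\end{align*}
where the final inequality uses the inductive hypothesis $\mu \leq \lambda^*$ together with the definition of $\lambda^*$. This closes the induction and proves the claim.

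The one subtlety worth flagging is that $(D_{t-1}^{-1} + c^{-1}\mi)^{-1}$ and $\vxi{t}\vxti{t}$ do not commute in general, so eigenvalues of $D_t$ are not literally obtained by applying $f$ to eigenvalues of $D_{t-1}$; this is exactly where Weyl's inequality is needed to separate the spectral contributions of the two summands. Once that reduction is made, the recursion on the largest eigenvalue is dominated by the scalar recursion analysed in \lemref{operator_scalar}, and the fixed-point style argument in property (3) of that lemma (the dichotomy between $f(\lambda)\leq \lambda$ and $f(\lambda)\leq \lambda + \gamma^2$) is what prevents blow-up across iterations.
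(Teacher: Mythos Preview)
Your proof is correct and follows essentially the same approach as the paper. The only cosmetic difference is that the paper replaces $\vxi{t}\vxti{t}$ by the operator upper bound $\normt{\vxi{t}}\,\mi$ (so that the resulting matrix is simultaneously diagonalizable with $D_{t-1}$ and every eigenvalue becomes an instance of $f(\lambda_i)$), whereas you invoke Weyl's inequality directly on the largest eigenvalue; both routes immediately reduce the induction step to property~(3) of \lemref{operator_scalar}.
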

\begin{proof}
  By induction.  From \eqref{D} we have that
  $\lambda_i(D_{1}) \leq b + X^2$  for
$i=1\comdots d$. We proceed with a proof for some $t$. For simplicity,
denote by $\lambda_i = \lambda_i(D_{t-1})$ the i$th$
eigenvalue of $D_{t-1}$ with a corresponding
eigenvector $\vvi{i}$.
From \eqref{D} we have,
\begin{align}
D_{t}
&=\left(D_{t-1}^{-1}+c^{-1}\mi\right)^{-1}+\vxi{t}\vxti{t}\nonumber\\
& \preceq \left(D_{t-1}^{-1}+c^{-1}\mi\right)^{-1}  +
\mi \normt{\vxi{t}}\nonumber\\
& = \sum_i^d \vvi{i}
\vvti{i}\paren{\paren{\lambda_i^{-1} + c^{-1}}^{-1} + \normt{\vxi{t}}}\nonumber\\
 &= \sum_i^d \vvi{i}
\vvti{i}\paren{ \frac{\lambda_i c }{\lambda_i + c} +
  \normt{\vxi{t}}} ~.\label{bound_eigens}
\end{align}
Plugging  \lemref{operator_scalar} in \eqref{bound_eigens} we get,
\begin{align*}
D_{t}
&\preceq \sum_i^d \vvi{i}
\vvti{i}\max\braces{ \frac{3X^2 +
  \sqrt{X^4+4X^2 c}}{2},b+X^2}\\
&= \max\braces{ \frac{3X^2 +
  \sqrt{X^4+4X^2 c}}{2},b+X^2} \mi~.
\end{align*}
\end{proof}

Finally, equipped with the above lemmas we are able to
prove the main
result of this section.
\begin{corollary}
\label{cor:main_corollary}
Assume $\normt{\vxi{t}}\leq X^2$, $\vert\yi{t}\vert \leq Y$. Then, 
\begin{align}
L_T(\textrm{LASER})\leq 
  b\left\Vert
     \mathbf{u}_{1}\right\Vert ^{2}+L_T(\{\vui{t}\})+Y^{2} \ln\left|\frac{1}{b}\mathbf{D}_{T}\right|\nonumber\\ +c^{-1}Y^2\tr\paren{\mathbf{D}_0}+c V^{(2)}\nonumber\\
 +c^{-1}Y^2 T d  \max\braces{ \frac{3X^2 +
   \sqrt{X^4+4X^2 c}}{2},b+X^2}~.\label{final_cor}
\end{align} 

Furthermore, set
$b=\varepsilon c$ for some $0<\varepsilon<1$.
Denote by 
\(
\mu = \max\braces{9/8X^2, \frac{\paren{b+X^2}^2}{8X^2}}
\) and 
\(M =
\max\braces{3X^2, b+X^2}
\).
If $V^{(2)} \leq T \frac{\sqrt{2}Y^2dX}{\mu^{3/2}}$ (low drift) then by setting 
\begin{align}
c= \frac{\sqrt{2}T Y^2 d X}{\paren{V^{(2)}}^{2/3}}\label{c1}
\end{align}
 we have,
\begin{align}
& L_T(\textrm{LASER})
\leq\nonumber\\ 
& \quad b\left\Vert \mathbf{u}_{1}\right\Vert ^{2} + 3\paren{\sqrt{2}Y^2 d X}^{2/3} T^{2/3} \paren{V^{(2)}}^{1/3}\nonumber\\
&\quad +\frac{\varepsilon}{1-\varepsilon}Y^{2}d +L_T(\{\vui{t}\})
+Y^{2} \ln\left|\frac{1}{b}\mathbf{D}_{T}\right|\label{bound1}~.
\end{align}

\end{corollary}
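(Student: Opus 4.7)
The plan is to derive the two stated inequalities in sequence by assembling the preceding lemmas and then choosing the free parameter $c$ optimally under the low-drift regime.

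\emph{Step 1: Derive \eqref{final_cor}.} The starting point is Theorem \ref{thm:basic_bound}, which upper-bounds $L_T(\textrm{LASER})$ by $b\Vert\vui{1}\Vert^2 + cV^{(2)}_T(\{\vui{t}\}) + L_T(\{\vui{t}\}) + Y^2 \sum_{t=1}^T \vxti{t}D_t^{-1}\vxi{t}$. Into the last sum I substitute the bound of Lemma \ref{lem:bound_1}, namely $\sum_t \vxti{t}D_t^{-1}\vxi{t} \leq \ln\vert b^{-1} D_T\vert + c^{-1}\sum_{t=1}^T \tr(D_{t-1})$. Then, for each $t\ge 1$, Lemma \ref{eigen_values_lemma} gives $\tr(D_t) \leq d \cdot M_c$ with $M_c = \max\{(3X^2+\sqrt{X^4+4X^2c})/2,\, b+X^2\}$, so that $\sum_{t=1}^T \tr(D_{t-1}) \leq \tr(D_0) + TdM_c$. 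Substituting back yields \eqref{final_cor}.

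\emph{Step 2: Evaluate the initialization under $b=\varepsilon c$.} The algorithm is initialized with $\msigmai{0} = \tfrac{c-b}{bc}\mi$, hence $D_0 = \tfrac{bc}{c-b}\mi = \tfrac{\varepsilon c}{1-\varepsilon}\mi$. Therefore $c^{-1}Y^2\tr(D_0) = \tfrac{\varepsilon}{1-\varepsilon}Y^2 d$, which accounts for one of the terms appearing in \eqref{bound1}.

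\emph{Step 3: Linearize $M_c$ in $\sqrt{c}$.} Using $\sqrt{a+b}\leq \sqrt{a}+\sqrt{b}$, one checks $(3X^2+\sqrt{X^4+4X^2c})/2 \leq 2X^2 + X\sqrt{c}$ and hence $M_c \leq M + X\sqrt{c}$, where $M=\max\{3X^2,b+X^2\}$ satisfies the identity $M = 2\sqrt{2}X\sqrt{\mu}$ recorded in the statement. Consequently the troublesome term splits as
\begin{align*}
c^{-1}Y^2 Td\, M_c \;\leq\; Y^2TdX\,c^{-1/2} \;+\; Y^2TdM\,c^{-1}.
\end{align*}

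\emph{Step 4: Balance the three drift-dependent terms.} Combining with $cV^{(2)}$ from Step 1, we must control
\begin{align*}
\Psi(c) \;=\; cV^{(2)} \;+\; Y^2TdX\,c^{-1/2} \;+\; Y^2TdM\,c^{-1}.
\end{align*}
The dominant balance between the first two terms yields the optimal order $c \sim (Y^2TdX/V^{(2)})^{2/3}$, which is exactly the choice \eqref{c1} (up to the constant absorbed in the $\sqrt{2}$). Plugging this $c$ back makes the first two terms equal and each of order $(\sqrt{2}Y^2dX)^{2/3}T^{2/3}(V^{(2)})^{1/3}$, contributing a total factor $2$ out of the claimed $3$.

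\emph{Step 5: Absorb the residual $M/c$ term using the low-drift condition.} It remains to show that the last summand $Y^2TdM\,c^{-1}$ is, under the hypothesis $V^{(2)} \leq T\sqrt{2}Y^2dX/\mu^{3/2}$, no larger than the same order $(\sqrt{2}Y^2dX)^{2/3}T^{2/3}(V^{(2)})^{1/3}$. This is precisely where the threshold on $V^{(2)}$ enters: substituting the chosen $c$ and the identity $M^3 = (8X^2\mu)^{3/2} = 16\sqrt{2}X^3\mu^{3/2}$, the inequality $Y^2TdM\,c^{-1} \le (\sqrt{2}Y^2dX)^{2/3}T^{2/3}(V^{(2)})^{1/3}$ is equivalent, after cubing, to an upper bound on $V^{(2)}$ of the stated form. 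This contributes the third unit to the prefactor $3$.

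Combining Steps 1--5 with the initialization computation from Step 2 produces \eqref{bound1}. The only nontrivial obstacle is Step 5: the bookkeeping of constants required to push the loose residual $Y^2TdM/c$ into the leading order, which is exactly why the low-drift threshold $V^{(2)} \leq T\sqrt{2}Y^2dX/\mu^{3/2}$ must be imposed. The remainder is algebraic substitution.
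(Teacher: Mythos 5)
Your Steps 1--2 reproduce the paper's argument exactly: plug Lemma~\ref{lem:bound_1} into Theorem~\ref{thm:basic_bound}, bound $\sum_{t=1}^{T}\tr(D_{t-1})\le \tr(D_0)+Td\,M_c$ via Lemma~\ref{eigen_values_lemma}, and evaluate $c^{-1}Y^2\tr(D_0)=\tfrac{\varepsilon}{1-\varepsilon}Y^2d$ from $D_0=\tfrac{bc}{c-b}I$. The divergence, and the gap, is in Steps 3--5. The paper does not linearize the max as $M_c\le M+X\sqrt{c}$; it observes that the low-drift hypothesis is \emph{equivalent} to $c\ge\mu$, and under $c\ge\mu$ every candidate in the max is dominated by $\sqrt{8X^2c}$, so the entire trace term becomes the single quantity $2\sqrt{2}XY^2Td\,c^{-1/2}$, worth exactly $2$ units of $(\sqrt{2}TY^2dX)^{2/3}(V^{(2)})^{1/3}$; together with $cV^{(2)}=1$ unit this gives the constant $3$ with no slack.

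Your split leaves a residual $Y^2TdM\,c^{-1}$ that you claim (Step 5) is at most one more unit, "equivalent, after cubing, to an upper bound on $V^{(2)}$ of the stated form." Carrying out that cubing with $M^3=(8X^2\mu)^{3/2}=16\sqrt{2}X^3\mu^{3/2}$ gives the requirement $V^{(2)}\le \sqrt{2}TY^2dX/(8\mu^{3/2})$ --- a factor of $8$ \emph{stronger} than the stated hypothesis $V^{(2)}\le \sqrt{2}TY^2dX/\mu^{3/2}$. Under the actual hypothesis the residual can be as large as $M/(\sqrt{2}X\sqrt{\mu})=2$ units (attained at the boundary $c=\mu$), while your middle term $Y^2TdX\,c^{-1/2}$ contributes $2^{-1/2}$ units, so your total is up to $1+2^{-1/2}+2\approx 3.71$ units rather than $3$. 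The order $T^{2/3}(V^{(2)})^{1/3}$ survives, but the stated constant does not; to recover it you must avoid the additive split and instead use $c\ge\mu$ to absorb $M$ (equivalently $3X^2$ and $b+X^2$) directly into $\sqrt{8X^2c}$ as the paper does.
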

The proof appears in \secref{proof_cor_main}.
Note that if $V^{(2)} \geq T \frac{Y^2dM}{\mu^{2}}$ then by setting 
\(
c=
 \sqrt{{Y^2dMT}/{V^{(2)}}}
\) 
 we have,
\begin{align}
& L_T(\textrm{LASER})
\leq b\left\Vert \mathbf{u}_{1}\right\Vert ^{2}
 + 2\sqrt{Y^2 d TMV^{(2)}}\nonumber\\
& \quad+\frac{\varepsilon}{1-\varepsilon}Y^{2}d +L_T(\{\vui{t}\})
+Y^{2} \ln\left|\frac{1}{b}\mathbf{D}_{T}\right| \label{high_drift}
\end{align}
(See \secref{details_for_second_bound} for details).
The last bound is linear in $T$ and can be obtained also by a naive
algorithm that outputs $\hat{y}_t=0$ for all $t$.

A few remarks are in
order. When the variance $V^{(2)}=0$ goes to zero, we set
$c=\infty$ and thus we have $D_{t}=b\mi+\sum_{s=1}^t
\vxi{s}\vxti{s}$ used in recent
algorithms~(\cite{Vovk01,Forster,Hayes,CesaBianchiCoGe05}). In this case
the algorithm reduces to the algorithm by Forster~\cite{Forster} (which is also the AAR algorithm of Vovk~\cite{Vovk01}), with
the same logarithmic regret bound (note that the last term in the
bounds is logarithmic in $T$, see the proof of
Forster~\cite{Forster}). See also the work of Azoury and Warmuth~\cite{AzouryWa01}.

\section{Simulations}
\label{sec:simulations}

\begin{figure*}[!t!]
\begin{center}
\subfigure{\includegraphics[width=0.32\textwidth]{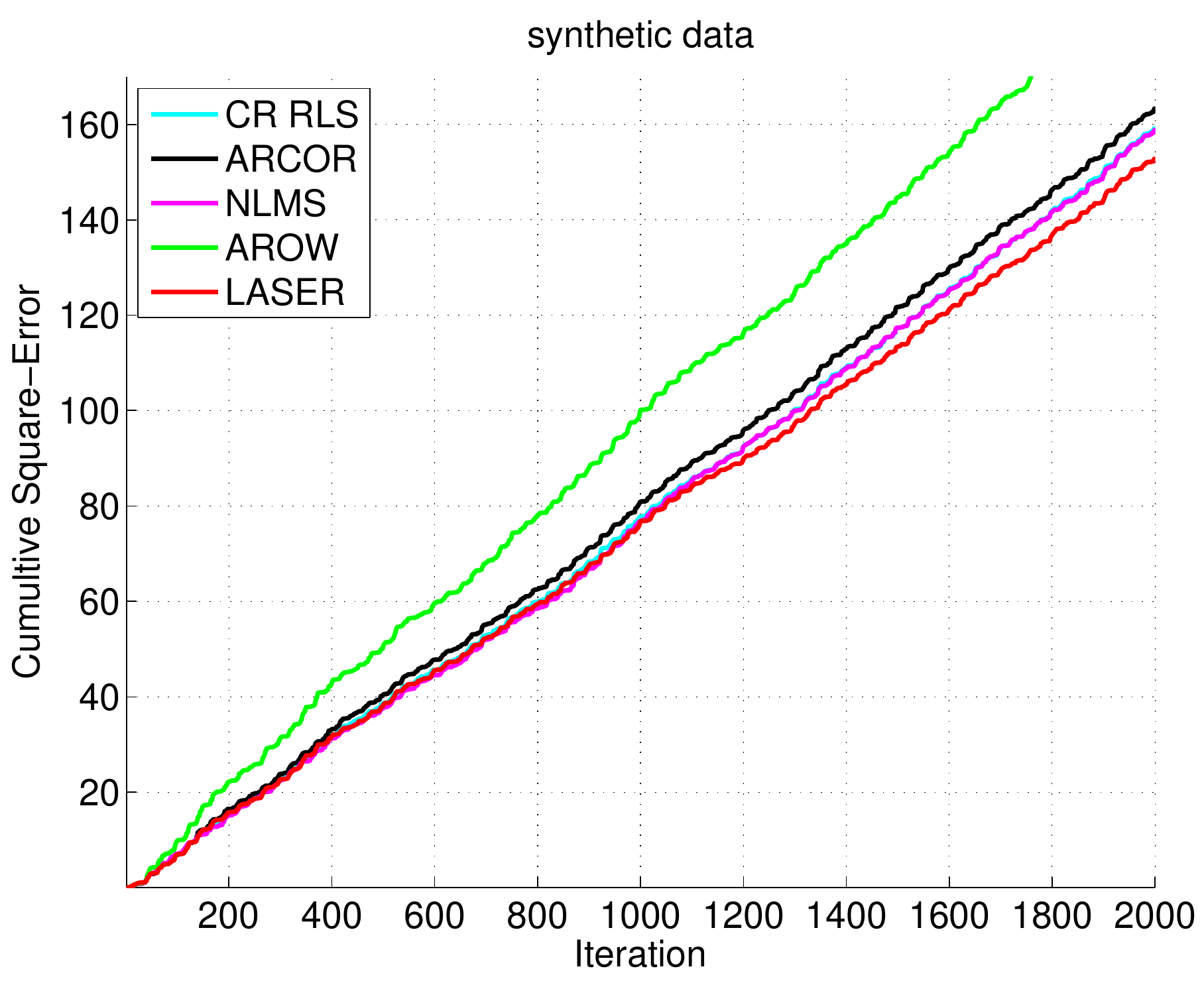}}
\subfigure{\includegraphics[width=0.32\textwidth]{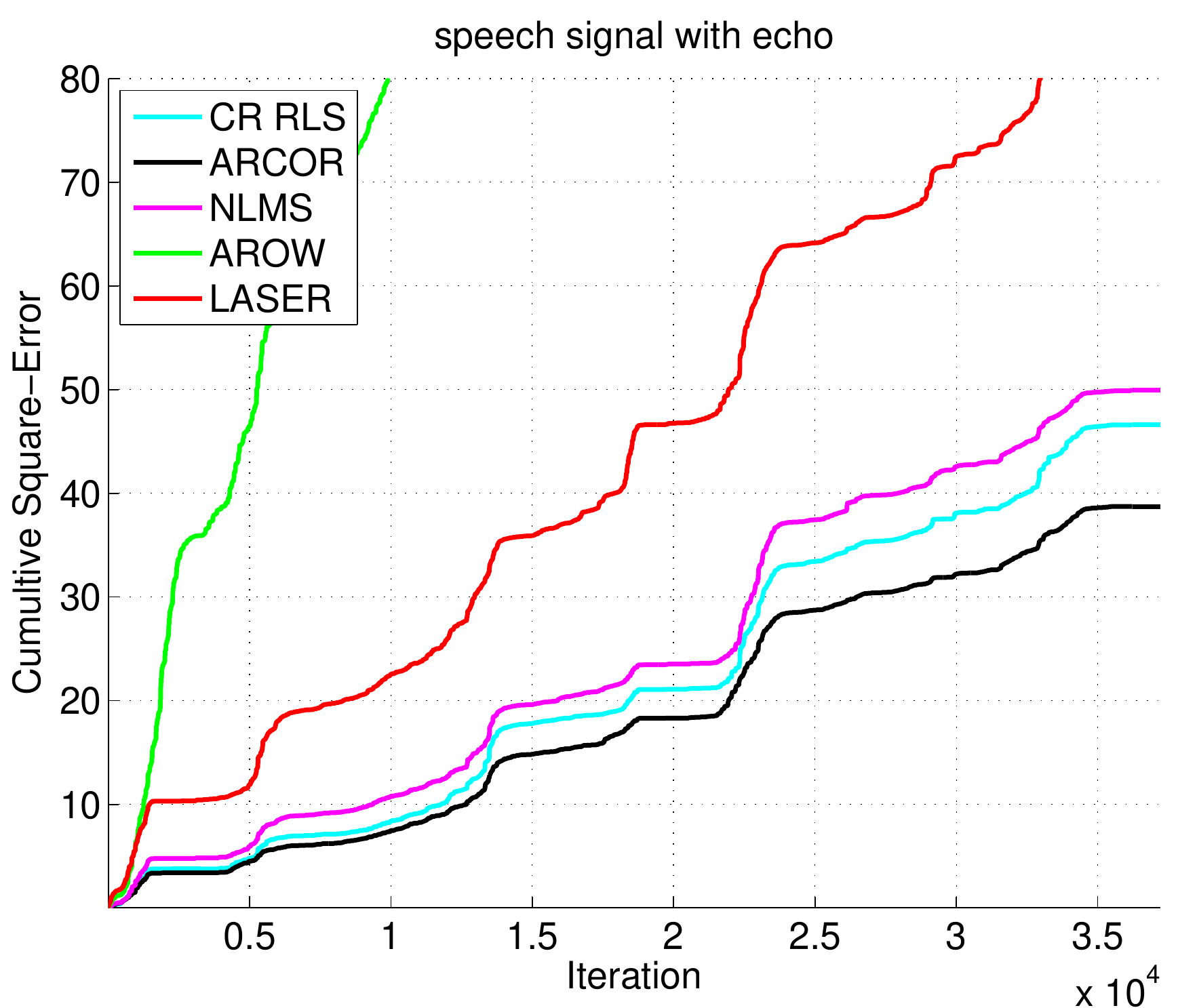}}
\subfigure{\includegraphics[width=0.32\textwidth]{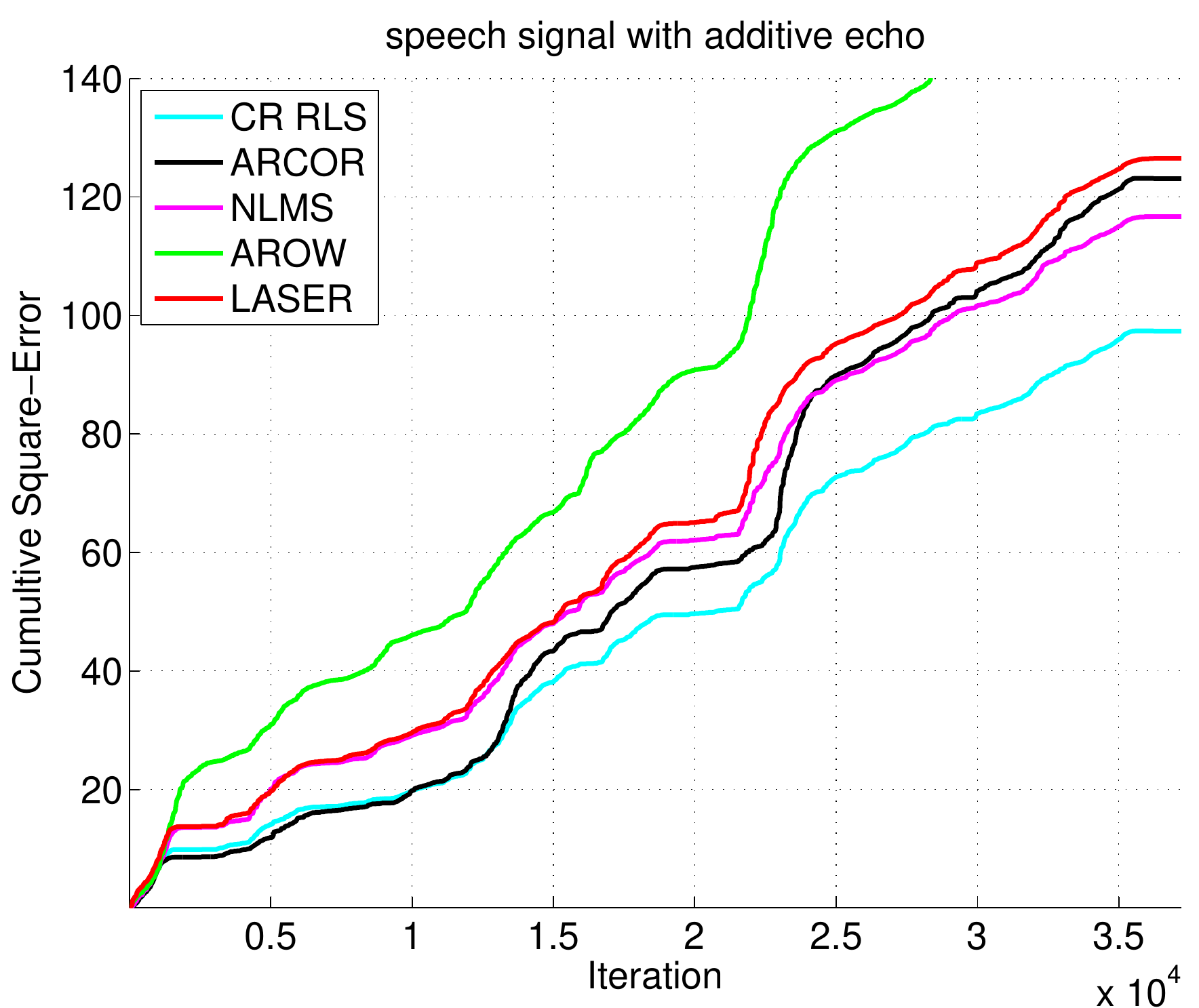}}
\caption{Cumulative squared loss for AROWR, ARCOR, LASER, NLMS and CR-RLS vs iteration. Left panel shows result for synthetic datasets with drift, and two right panels show results for a problem of acoustic echo cancelation on speech signal (best shown in color).}
\label{fig:sims}
\end{center}
\end{figure*}


We evaluate our algorithms on three datasets, one synthetic and two real world.
The synthetic dataset contains $2,000$ points in $\reals^{20}$, where the first ten coordinates were grouped into five groups of size two. Each such pair was drawn from a $45^\circ$ rotated Gaussian distribution with standard deviations $10$ and $1$.  The remaining $10$ coordinates were drawn from independent Gaussian distributions $\norm\paren{0,2}$.  The dataset was generated using a sequence of vectors $\vui{t}\in\reals^{20}$ for which the only non-zero coordinates are the first two, where their values are the coordinates of a unit vector that is rotating with a constant rate. Specifically, we have $\Vert\vui{t}\Vert=1$ and the instantaneous drift $\Vert\vui{t}-\vui{t-1}\Vert$ is constant.  

The other two datasets are generated from echoed speech signal. The first speech echoed signal was generated using FIR filter with $k$ delays and varying attenuated amplitude. This effect imitates acoustic echo reflections from large, distant and dynamic obstacles.  The difference equation $y(n)=x(n)+\sum_{D=1}^k{A(n)x(n-D)}+v(n)$ was used, where $D$ is a delay in samples, the coefficient $A(n)$ describes the changing attenuation related from object reflection and $v(n)\sim\norm\paren{0,10^{-3}}$ is a white noise.  The second speech echoed signal was generated using a flange IIR filter, where the delay is not constat, but changing with time.  This effect imitates time stretching of audio signal caused by moving and changing objects in the room.  The difference equation $y(n)=x(n)+Ay\paren{n-D(n)}+v(n)$ was used.

Five algorithms are evaluated: NLMS (normalized least mean square)~(\cite{Bershad,Bitmead}) which is a state-of-the-art first-order algorithm, AROWR (AROW for Regression) with no restarts nor projection, ARCOR, LASER and CR-RLS.  For the synthetic datasets the algorithms' parameters were tuned using a single random sequence. We repeat each experiment $100$ times reporting the mean cumulative square-loss. We note that AAR~(\cite{vovkAS,Vovk01}) is a special case of LASER and RLS is a special case of CR-RLS, for a specific choice of their respective parameters. Additionally, the performance of AROWR, AAR and RLS is similar, and thus only the performance of AROWR is shown.
%
%
%
For the speech signal the algorithms' parameters were tuned on $10\%$ of the signal, then the best parameter choices for each algorithm were used to evaluate the performance on the remaining signal.

The results are summarized in \figref{fig:sims}.  AROWR performs worst on all datasets as it converges very fast and thus not able to track the changes in the data. Focusing on the left panel, showing the results for the synthetic signal, we observe that ARCOR performs relatively bad as suggested by our analysis for constant, yet not too large, drift. Both CR-RLS and NLMS perform better, where CR-RLS is slightly better as it is a second order algorithm, and allows to converge faster between switches. On the other hand, NLMS is not converging and is able to adapt to the drift. Finally, LASER performs the best, as hinted by its analysis, for which the bound is lower where there is a constant drift.

Moving to the center panel, showing the results for first echoed speech signal with varying amplitude, we observe that LASER is the worst among all algorithms except AROWR. Indeed, it does preventing convergence by keeping the learning rates far from zero, yet it is a min-max algorithm designed for the worst-case, which is not the case for real-world speech data. However, speech data is highly regular and the instantaneous drift vary. NLMS performs better as it is not converging, yet both CR-RLS and ARCOR perform even better, as they both not-converging due to covariance resets on the one hand, and second order updates on the other hand. ARCOR outperforms CR-RLS as the former adapts the resets to actual data, and is not using pre-defined scheduling as the later.

Finally, the right panel summarizes the results for evaluations on the second echoed speech signal. Note that the amount of drift grows since the data is generated using
flange filter. 
Both LASER and ARCOR are out-performed as both assume drift that is sublinear or at most linear, which is not the case. CR-RLS outperforms NLMS. The later is first order, so is able to adapt to changes, yet has slower converge rate. The former is able to cope with drift due to resets.

Interestingly, in all experiments, NLMS was not performing the best nor the worst. There is no clear winner among the three algorithms that are both second order, and designed to adapt to drifts. Intuitively, if the drift suits the assumptions of an algorithm, that algorithm would perform the best, and otherwise, its performance may even be worse than of NLMS.


\begin{figure}
\begin{center}
\includegraphics[width=0.4\textwidth]{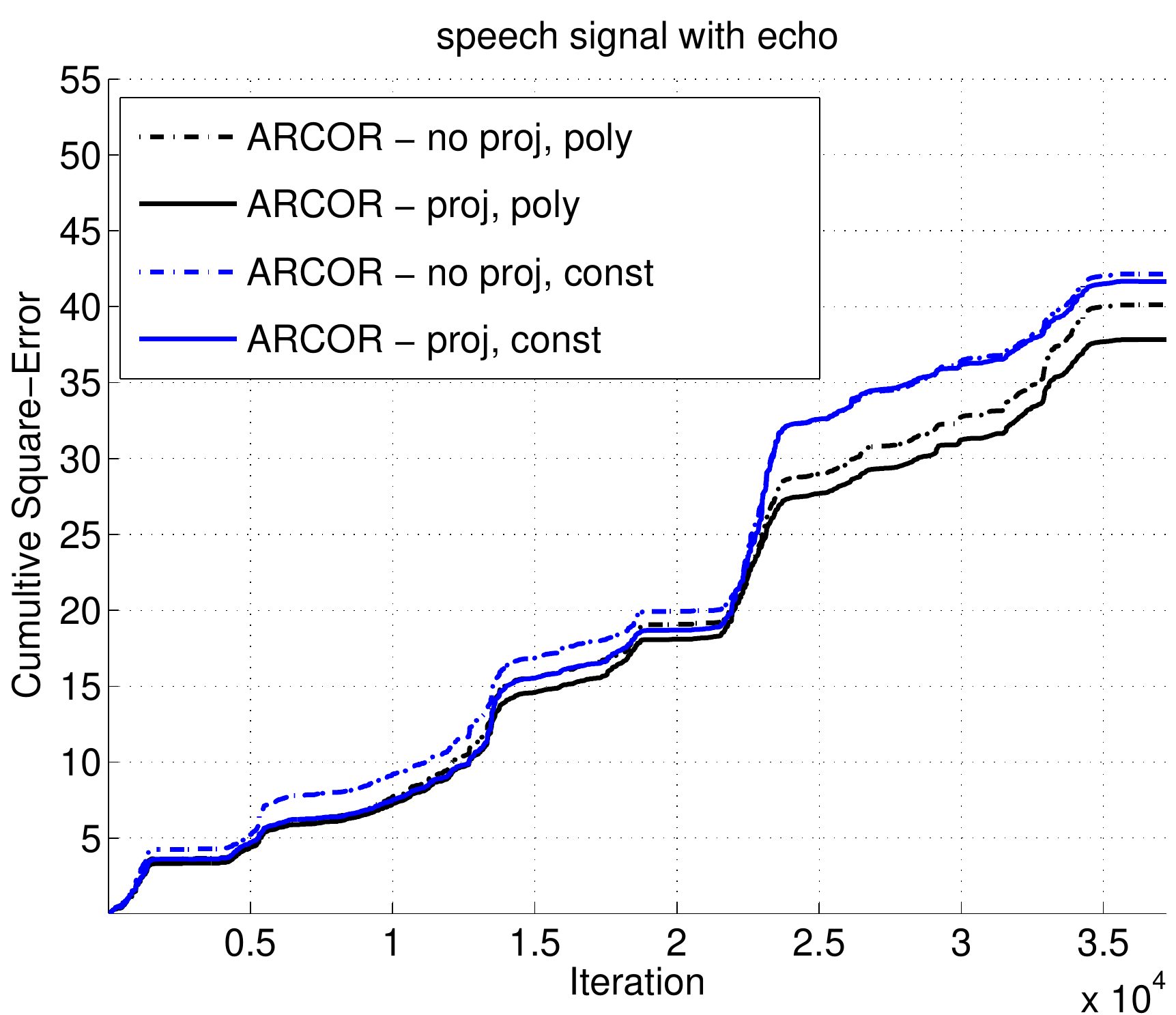}
\includegraphics[width=0.4\textwidth]{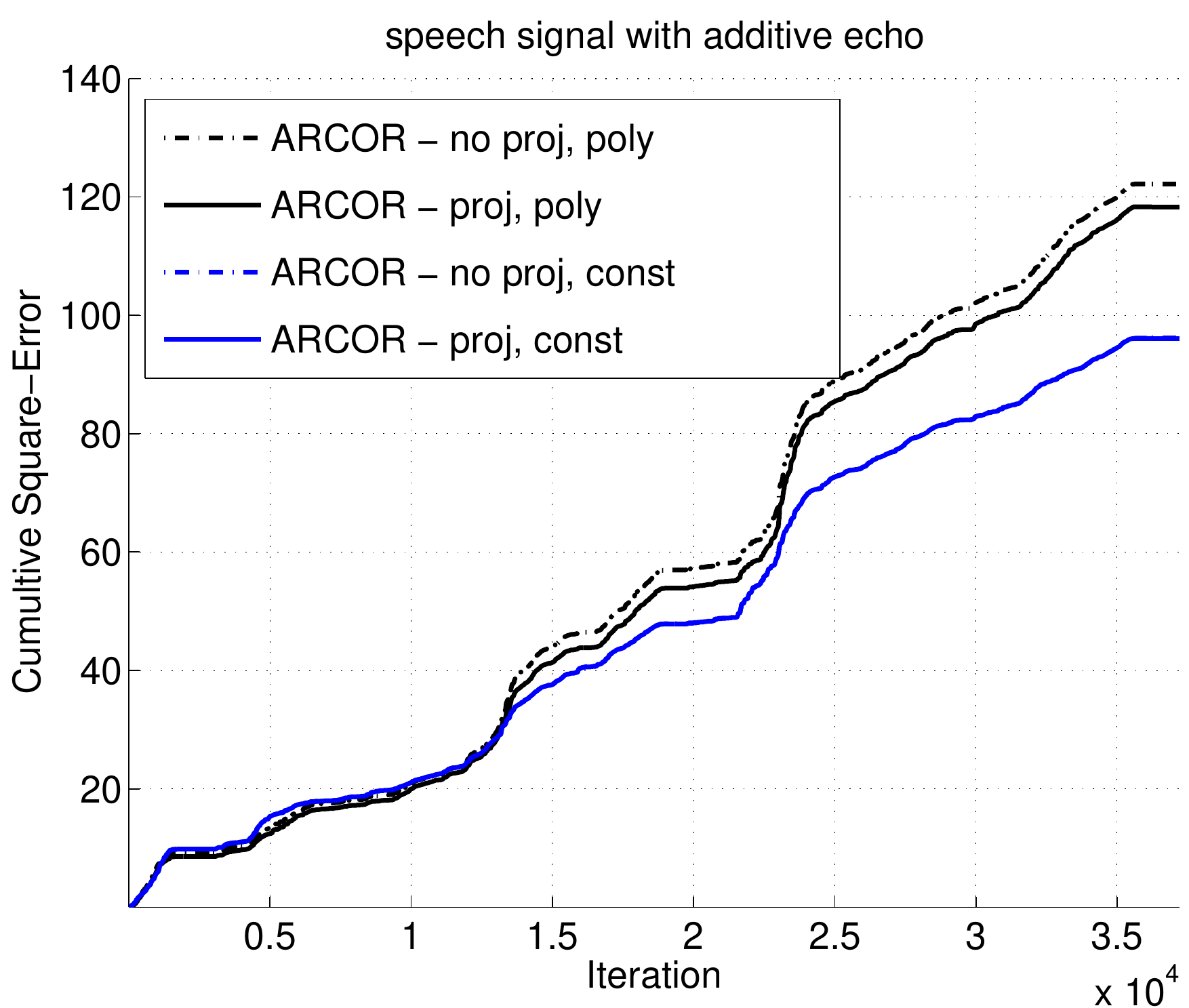}
\end{center}
\caption{Cumulative squared loss of four variants of ARCOR vs iteration.}
\label{fig:sims_arcor}
\end{figure}
We have seen above that ARCOR performs a projection step, which partially was motivated from the analysis. We now evaluate its need and affect in practice on two speech problems. We test two modifications of ARCOR, resulting in four variants altogether. First, we replace the the polynomial thresholds scheme to the constant thresholds scheme, that is, all thresholds are equal. Second, we omit the projection step.  The results are summarized in \figref{fig:sims_arcor}. The line corresponding to the original algorithm, is called ``proj, poly'' as it performs a projection step and uses polynomial schema for the lower-bound on eigenvalues. The version that omits projection and uses constant schema, called ``no proj, const'', is most similar to CR-RLS. Both resets the covariance matrix, CR-RLS after fixed amount of iterations, while
``ARCOR-no proj, const'' when the eigenvalues meets a specified fixed lower bound.
The difference between the two plots is the amount of drift used: the top panel shows results for sublinear drift, and the bottom panel shows results with increasing per-instance drift. The original version, as hinted by the analysis, is designed to work with sub-linear drift, and performs the best in this case. However, when this assumption over the amount of drift breaks, this version is not optimal anymore, and constat schema performs better, as it allows the algorithm to adapt to non-vanishing drift. Finally, in both datasets, the algorithm that perform best perform a projection step after each iteration. Providing some empirical evidence for its need.

\section{Summary and Conclusions}
\label{sec:summary_conclusions}
We proposed and analyzed two novel algorithms for non-stationary
online regression designed and analyzed with the squared loss in the
worst-case regret framework. The ARCOR algorithm was built on AROWR,
that employs second order information, yet performs data-dependent
covariance resets, which provides it the ability to track drifts. The
LASER algorithm was built on the last-step minmax predictor with the
proper modifications for non-stationary problems. Regret bounds shown
and proven are optimized using knowledge on the amount of drift, and
in general the two algorithms are not comparable.

Few open directions are possible. First, to extend these algorithms to
other loss functions rather than the squared loss. Second, currently,
direct implementation of both algorithms requires either matrix
inversion or eigenvector decomposition. A possible direction is
to design a more efficient version of these algorithms. Third, an
interesting direction is to design algorithms that automatically
detect the level of drift, or do not need this information before run-time.

\appendix

\subsection{Proof of \lemref{lem:lemma11}}
\label{proof_lemma11}
\begin{proof}
We calculate

\begin{align*}
P_{t}\left(\vui{t}\right)
=&  \min_{\vui{1},\ldots,\vui{t-1}} \Bigg(b\left\Vert \vui{1}\right\Vert ^{2}+c\sum_{s=1}^{t-1}\left\Vert \vui{s+1}-\vui{s}\right\Vert ^{2}\\
&+\sum_{s=1}^{t}\left(y_{s}-\vuti{s}\vxi{s}\right)^{2}\Bigg)\\
 =&   \min_{\vui{1},\ldots,\vui{t-1}}
\Bigg(b\left\Vert \vui{1}\right\Vert
^{2}+c\sum_{s=1}^{t-2}\left\Vert
  \vui{s+1}-\vui{s}\right\Vert
^{2}\\
&+\sum_{s=1}^{t-1}\left(y_{s}-\vuti{s}\vxi{s}\right)^{2}
+c\left\Vert \vui{t}-\vui{t-1}\right\Vert ^{2}\\
&+\left(y_{t}-\vuti{t}\vxi{t}\right)^{2}\Bigg)\\
  =&
  \min_{\vui{t-1}}\min_{\vui{1},\ldots,\vui{t-2}}\Bigg(b\left\Vert
    \vui{1}\right\Vert ^{2}+c\sum_{s=1}^{t-2}\left\Vert
    \vui{s+1}-\vui{s}\right\Vert
  ^{2}\\
&+\sum_{s=1}^{t-1}\left(y_{s}-\vuti{s}\vxi{s}\right)^{2}
+c\left\Vert \vui{t}-\vui{t-1}\right\Vert ^{2}\\
&+\left(y_{t}-\vuti{t}\vxi{t}\right)^{2}\Bigg)\\
  =&
 \min_{\vui{t-1}}\Bigg[\min_{\vui{1},\ldots,\vui{t-2}}
 \bigg(b\left\Vert
     \vui{1}\right\Vert ^{2}+c\sum_{s=1}^{t-2}\left\Vert
     \vui{s+1}-\vui{s}\right\Vert
   ^{2}\\
&+\sum_{s=1}^{t-1}\left(y_{s}-\vuti{s}\vxi{s}\right)^{2}\bigg)\\
&+c\left\Vert \vui{t}-\vui{t-1}\right\Vert ^{2}+\left(y_{t}-\vuti{t}\vxi{t}\right)^{2}\Bigg]\\
   =&  \min_{\vui{t-1}}\Bigg(P_{t-1}\left(\vui{t-1}\right)+c\left\Vert \vui{t}-\vui{t-1}\right\Vert ^{2}\\
&+\left(y_{t}-\vuti{t}\vxi{t}\right)^{2}\Bigg) ~.
\end{align*}
\end{proof}

\subsection{Proof of \lemref{lem:lemma12}}
\label{proof_lemma12}
\begin{proof}
By definition,
\begin{align*}
P_{1}\left(\vui{1}\right) & = Q_{1}\left(\vui{1}\right)
 = b\left\Vert \vui{1}\right\Vert ^{2}+\left(y_{1}-\vuti{1}\vxi{1}\right)^{2}
 \\
& =\vuti{1}\left(b\mi+\vxi{1}\vxti{1}\right)\vui{1}-2y_{1}\vuti{1}\vxi{1}+y_{1}^{2} ~,
\end{align*}
and indeed
\(
D_{1}=b\mi+\vxi{1}\vxti{1}
\),
\(
\vei{1}=y_{1}\vxi{1}
\), and
\(
f_{1}=y_{1}^{2}
\).

We proceed by induction, assume that,
\(
P_{t-1}\left(\vui{t-1}\right)=\vuti{t-1}D_{t-1}\vui{t-1}-2\vuti{t-1}\vei{t-1}+f_{t-1}
\).

Applying \lemref{lem:lemma11} we get,
\begin{align*}
P_{t}\left(\vui{t}\right) &= \min_{\vui{t-1}}\Bigg(\vuti{t-1}D_{t-1}\vui{t-1}-2\vuti{t-1}\vei{t-1}+f_{t-1}\\
&\quad +c\left\Vert \vui{t}-\vui{t-1}\right\Vert ^{2}+\left(y_{t}-\vuti{t}\vxi{t}\right)^{2}\Bigg)\\
&  = \min_{\vui{t-1}}\Bigg(\vuti{t-1}\left(c\mi+D_{t-1}\right)\vui{t-1}-2\vuti{t-1}
\left(c\vui{t}+\vei{t-1}\right)\\
&\quad +f_{t-1}+c\left\Vert \vui{t}\right\Vert ^{2}+\left(y_{t}-\vuti{t}\vxi{t}\right)^{2}\Bigg)\\
&  =  -\left(c\vui{t}+\vei{t-1}\right)^{\top}\left(c\mi+D_{t-1}\right)^{-1}\left(c\vui{t}+\vei{t-1}\right)
+f_{t-1}\\
&\quad +c\left\Vert \vui{t}\right\Vert ^{2}+\left(y_{t}-\vuti{t}\vxi{t}\right)^{2}\\
&  = \vuti{t}\left(c\mi+\vxi{t}\vxti{t}-c^{2}\left(c\mi+D_{t-1}\right)^{-1}\right)\vui{t}\\
& \quad -2\vuti{t}\left[c\left(c\mi+D_{t-1}\right)^{-1}\vei{t-1}+y_{t}\vxi{t}\right]\\
& \quad -\veti{t-1}
 \left(c\mi+D_{t-1}\right)^{-1}\vei{t-1}+f_{t-1}+y_{t}^{2}~.
\end{align*}
Using the Woodbury identity we continue to develop the last equation,
\begin{align*}
& = \vuti{t}\left(c\mi+\vxi{t}\vxti{t}-c^{2}\left[c^{-1}\mi-c^{-2}\left(D_{t-1}^{-1}+c^{-1}
\mi\right)^{-1}\right]\right)\vui{t}\\
 & \quad  -2\vuti{t}\left[\left(\mi+c^{-1}D_{t-1}\right)^{-1}\vei{t-1}+y_{t}\vxi{t}\right]\\
 & \quad -\veti{t-1}
 \left(c\mi+D_{t-1}\right)^{-1}\vei{t-1}+f_{t-1}+y_{t}^{2}\\
 & =  \vuti{t}\left(\left(D_{t-1}^{-1}+c^{-1}\mi\right)^{-1}+\vxi{t}\vxti{t}\right)\vui{t}\\
 & \quad -2\vuti{t}\left[\left(\mi+c^{-1}D_{t-1}\right)^{-1}\vei{t-1}+y_{t}\vxi{t}\right]\\
 & \quad -\veti{t-1}
 \left(c\mi+D_{t-1}\right)^{-1}\vei{t-1}+f_{t-1}+y_{t}^{2}~,
\end{align*}
and indeed
$
D_{t}=\left(D_{t-1}^{-1}+c^{-1}\mi\right)^{-1}+\vxi{t}\vxti{t}
$,
$
\vei{t}=\left(\mi+c^{-1}D_{t-1}\right)^{-1}\vei{t-1}+y_{t}\vxi{t}
$ and,
$
f_{t}=f_{t-1}-\veti{t-1}\left(c\mi+D_{t-1}\right)^{-1}\vei{t-1}+y_{t}^{2}
$, as desired.
\end{proof}

\subsection{Proof of Theorem~\ref{non_stat_bound}}
\label{sec:ARCOR_Theoreme_proof}
We prove the theorem in four steps. First, we state the following
technical lemma, for which we define the following
notation,
\begin{align*}
&\dta{\vz}{\vv}{t} = (\vz-\vv)^\top \msigmai{t}^{-1} (\vz-\vv) ~,~\\
&\dta{\vz}{\vv}{\tilde{t}} = (\vz-\vv)^\top
\tilde{\msigma}_{t}^{-1} (\vz-\vv) ~,~\\
&\chit{t} = \vxi{t}^\top \msigmai{t-1}\vxii~.
\end{align*}
Second, we define a telescopic sum and in \lemref{deltas} prove a lower
bound for each element. Third, in \lemref{log_det} we upper bound one term of
the telescopic sum, and finally, in the fourth step we combine all
these parts to conclude the proof. Let us start with the technical lemma.
\begin{lemma}
\label{reg_lemma}
Let ${\tvwi{t}}$ and $\tilde{\msigma}_{t}$ be defined in
\eqref{sigma_alg} and
\eqref{mu_alg}, 
then,
\[
\dta{\vwi{t-1}}{\vui{t-1}}{t-1}-\dta{\tilde{\vwi{t}}}{\vui{t-1}}{\tilde{t}}
= \frac{1}{\cor}\ellt{t} - \frac{1}{\cor} \gllt{t} -
\frac{\ellt{t}\chit{t}}{\cor\paren{\cor+\chit{t}}}~.
\]
where $\ellt{t} = \paren{\yi{t}-\vwti{t-1}\vxi{t}}^2$ and $\gllt{t} = \paren{\yi{t}-\vuti{t-1}\vxi{t}}^2$.
\label{lemma:derivation_of_bound}
\end{lemma}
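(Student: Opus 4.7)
My plan is to expand $\dta{\tilde{\vw}_t}{\vui{t-1}}{\tilde{t}}$ algebraically, split the difference vector as $\tilde{\vw}_t-\vui{t-1}=(\tilde{\vw}_t-\vwi{t-1})+(\vwi{t-1}-\vui{t-1})$, and use the two defining updates \eqref{sigma_alg} and \eqref{mu_alg} to reduce everything to $\ellt{t}$, $\gllt{t}$, and $\chit{t}$. Concretely, set $\vdelta_t=\tilde{\vw}_t-\vwi{t-1}=\alpha_t\,\msigmai{t-1}\vxi{t}$ with $\alpha_t=(\yi{t}-\vxti{t}\vwi{t-1})/(\cor+\chit{t})$, and set $\vw=\vwi{t-1}-\vui{t-1}$. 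Then I expand
\begin{align*}
\dta{\tilde{\vw}_t}{\vui{t-1}}{\tilde{t}}=\vdelta_t^{\!\top}\tilde\msigma_t^{-1}\vdelta_t+2\vdelta_t^{\!\top}\tilde\msigma_t^{-1}\vw+\vw^{\!\top}\tilde\msigma_t^{-1}\vw.
\end{align*}

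Using $\tilde\msigma_t^{-1}=\msigmai{t-1}^{-1}+\tfrac{1}{\cor}\vxi{t}\vxti{t}$ from \eqref{sigma_alg}, the last summand becomes $\dta{\vwi{t-1}}{\vui{t-1}}{t-1}+\tfrac{1}{\cor}(\vxti{t}\vwi{t-1}-\vxti{t}\vui{t-1})^2$, which already produces the desired $\dta{\vwi{t-1}}{\vui{t-1}}{t-1}$ on the other side. The key algebraic simplification I will use repeatedly is $\tilde\msigma_t^{-1}\vdelta_t=\tfrac{1}{\cor}(\yi{t}-\vxti{t}\vwi{t-1})\vxi{t}$, which follows from plugging in the explicit $\vdelta_t$ and noticing that $\alpha_t(\cor+\chit{t})/\cor=(\yi{t}-\vxti{t}\vwi{t-1})/\cor$. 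Applying this gives $\vdelta_t^{\!\top}\tilde\msigma_t^{-1}\vdelta_t=\ellt{t}\chit{t}/[\cor(\cor+\chit{t})]$ and $2\vdelta_t^{\!\top}\tilde\msigma_t^{-1}\vw=\tfrac{2}{\cor}(\yi{t}-\vxti{t}\vwi{t-1})(\vxti{t}\vwi{t-1}-\vxti{t}\vui{t-1})$.

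To finish, I will collect the two ``cross'' terms using the identity $-2ab-b^{2}=a^{2}-(a+b)^{2}$ with $a=\yi{t}-\vxti{t}\vwi{t-1}$ and $b=\vxti{t}\vwi{t-1}-\vxti{t}\vui{t-1}$, noting that $a+b=\yi{t}-\vxti{t}\vui{t-1}$. This transforms $-\tfrac{2}{\cor}ab-\tfrac{1}{\cor}b^{2}$ precisely into $\tfrac{1}{\cor}\ellt{t}-\tfrac{1}{\cor}\gllt{t}$, and rearranging yields the claimed identity.

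The only mildly delicate step is the simplification $\tilde\msigma_t^{-1}\vdelta_t=\tfrac{1}{\cor}(\yi{t}-\vxti{t}\vwi{t-1})\vxi{t}$: it is what makes the whole bound collapse to a clean closed form, and it works precisely because the denominator $\cor+\chit{t}$ in $\alpha_t$ was chosen to match $\cor+\chit{t}$ arising from $(\msigmai{t-1}^{-1}+\tfrac{1}{\cor}\vxi{t}\vxti{t})\msigmai{t-1}\vxi{t}=\vxi{t}+\tfrac{\chit{t}}{\cor}\vxi{t}$. Everything else is bookkeeping.
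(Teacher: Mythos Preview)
Your proposal is correct and follows essentially the same route as the paper: both proofs substitute the update rules \eqref{sigma_alg} and \eqref{mu_alg} into the quadratic form and simplify. Your organization is somewhat tidier than the paper's brute-force expansion, since you isolate the identity $\tilde{\msigma}_t^{-1}\vdelta_t=\tfrac{1}{\cor}(\yi{t}-\vxti{t}\vwi{t-1})\vxi{t}$ up front and then finish with the clean $-2ab-b^{2}=a^{2}-(a+b)^{2}$ step, whereas the paper carries all terms through several lines before the same cancellations emerge; but the underlying computation is the same.
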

\begin{proof}
We start by writing the distances explicitly
\begin{align*}
  & \dta{\vwi{t-1}}{\vui{t-1}}{t-1}-\dta{\tilde{\vw}_t}{\vui{t-1}}{\tilde{t}}\\
  = & -\tran{\paren{\vui{t-1} - \tilde{\vw}_{t}}} \tilde{\msigma}_{t}^{-1}\paren{\vui{t-1} - \tilde{\vw}_t} \\&+ \tran{\paren{\vui{t-1} - \vwi{t-1}}} \msigmai{t-1}^{-1}\paren{\vui{t-1} - \vwi{t-1}}~.
\end{align*}
Substituting ${\tvwi{t}}$ as appears in \eqref{mu_alg} the last
equation becomes,
\begin{align*}
 & -\tran{\paren{\vui{t-1} - \vwi{t-1}}}
 \tilde{\msigma}_{t}^{-1}\paren{\vui{t-1} - \vwi{t-1}}
\\&+2(\vui{t-1}-\vwi{t-1})\tilde{\msigma}_{t}^{-1}
  \msigmai{t-1}\vxi{t}\frac{(\yi{t}-\vxti{t}\vwi{t-1})}{\cor+\vxti{t}\msigmai{t-1}\vxi{t}} \\
  &  - \paren{\frac{(\yi{t}-\vxti{t}\vwi{t-1})}{\cor+\vxti{t}\msigmai{t-1}\vxi{t}}}^2 \vxti{t}\msigmai{t-1}\tilde{\msigma}_{t}^{-1}\msigmai{t-1}\vxi{t} \\&+ \tran{\paren{\vui{t-1} - \vwi{t-1}}} \msigmai{t-1}^{-1}\paren{\vui{t-1} - \vwi{t-1}}~.
\end{align*}
Plugging $\tilde{\msigma}_{t}$ as appears in \eqref{sigma_alg} we get,
\begin{align*}
& \dta{\vwi{t-1}}{\vui{t-1}}{t-1}-\dta{\tilde{\vw}_t}{\vui{t-1}}{\tilde{t}}\\
  = & -\tran{\paren{\vui{t-1} - \vwi{t-1}}} \paren{\msigmai{t-1}^{-1}+\frac{1}{\cor}\vxi{t}\vxti{t}}\paren{\vui{t-1} - \vwi{t-1}}\\
  &  + 2\tran{(\vui{t-1}\!-\!\vwi{t-1})}\!\!\paren{\msigmai{t-1}^{-1}\!+\!\frac{1}{\cor}\vxi{t}\vxti{t}}\msigmai{t-1}\!\vxi{t}\frac{(\yi{t}\!-\!\vxti{t}\vwi{t-1})}{\cor\!+\!\vxti{t}\msigmai{t-1}\vxi{t}} \\
  &  - \frac{(\yi{t}-\vxti{t}\vwi{t-1})^2}{\paren{\cor+\vxti{t}\msigmai{t-1}\vxi{t}}^2}\vxti{t}\msigmai{t-1}
  \paren{\msigmai{t-1}^{-1} + \frac{1}{\cor}\vxi{t}\vxti{t}}\msigmai{t-1}\vxi{t}\\
  &  + \tran{\paren{\vui{t-1} - \vwi{t-1}}} \msigmai{t-1}^{-1}\paren{\vui{t-1} - \vwi{t-1}}~.
\end{align*}
Finally, we substitute $\ellt{t} = \paren{\yi{t}-\vxti{t}\vwi{t-1}}^2$
, $\gllt{t} = \paren{\yi{t}-\vxti{t}\vui{t-1}}^2$ and, $\chit{t} =
\vxti{t}\msigmai{t-1}\vxi{t}$. Rearranging the terms,
\begin{align*}
& \dta{\vwi{t-1}}{\vui{t-1}}{t-1}-\dta{\tilde{\vw}_t}{\vui{t-1}}{\tilde{t}}\\
  = & - \frac{1}{\cor}\paren{\yi{t}-\vxti{t}\vwi{t-1} - \paren{\yi{t}-\vxti{t}\vui{t-1}}}^2\\
  & -\frac{2\paren{\yi{t}\!-\!\vxti{t}\vui{t-1}\!-\!\paren{\yi{t}\!-\!\vxti{t}\vwi{t-1}}}\paren{\yi{t}\!-\!\vxti{t}\vwi{t-1}}}{\cor+\chit{t}}\!\paren{1\!+\!\frac{\chit{t}}{\cor}}\\
  &  - \frac{\ellt{t}\chit{t}}{\paren{\cor+\chit{t}}^2}\paren{1+\frac{\chit{t}}{\cor}}\\
  = & - \frac{1}{\cor}\ellt{t}
  + 2\paren{\yi{t}-\vxti{t}\vwi{t-1}}\paren{\yi{t}-\vxti{t}\vui{t-1}}\frac{1}{\cor} - \frac{1}{\cor} \gllt{t}\\&
  + \frac{2\ellt{t}}{\cor+\chit{t}}\paren{1+\frac{\chit{t}}{\cor}} - \frac{\ellt{t}\chit{t}}{\cor\paren{\cor+\chit{t}}}\\
  &  -2\frac{\paren{\yi{t}-\vxti{t}\vwi{t-1}}\paren{\yi{t}-\vxti{t}\vui{t-1}}}{\cor+\chit{t}}\paren{1+\frac{\chit{t}}{\cor}}\\
  = & \frac{1}{\cor}\ellt{t} - \frac{1}{\cor} \gllt{t} -
  \frac{\ellt{t}\chit{t}}{\cor\paren{\cor+\chit{t}}} ~,
\end{align*}
which completes the proof.
\end{proof}
We now define one element of the telescopic sum, and lower bound it,
\begin{lemma}\label{deltas}
Denote by
\[
\Delta_t = \dta{\vwi{t-1}}{\vui{t-1}}{t-1}
-\dta{\vwi{t}}{\vui{t}}{t}
\]
then
\begin{align*}
  \Delta_t & \geq \frac{1}{\cor}\paren{\ellt{t} - \gllt{t}} -
  \ellt{t} \frac{\chit{t}}{\cor(\cor+\chit{t})}\nonumber\\
  & \quad + \vuti{t-1}\msigmai{t-1}^{-1}\vui{t-1} -
  \vuti{t}\msigmai{t}^{-1}\vui{t} -
  2\rb\Lambda_i^{-1}\Vert\vui{t-1}-\vui{t}\Vert
\end{align*}
where $i-1$ is the number of restarts occurring before example $t$.
\end{lemma}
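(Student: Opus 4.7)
My plan is to decompose $\Delta_t$ by inserting the pre-projection iterate $\tilde{\vw}_t$ and the pre-reset covariance $\tilde{\msigma}_t$, and using $\vui{t-1}$ as a pivot for the change in comparator, writing
\begin{align*}
\Delta_t &= \underbrace{\bigl[\dta{\vwi{t-1}}{\vui{t-1}}{t-1}-\dta{\tilde{\vw}_t}{\vui{t-1}}{\tilde{t}}\bigr]}_{(I)}
+ \underbrace{\bigl[\dta{\tilde{\vw}_t}{\vui{t-1}}{\tilde{t}}-\dta{\vwi{t}}{\vui{t-1}}{t}\bigr]}_{(II)} \\
&\quad + \underbrace{\bigl[\dta{\vwi{t}}{\vui{t-1}}{t}-\dta{\vwi{t}}{\vui{t}}{t}\bigr]}_{(III)}.
\end{align*}

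Term $(I)$ is exactly the quantity controlled by Lemma~\ref{reg_lemma}, which contributes $\tfrac{1}{\cor}(\ellt{t}-\gllt{t}) - \tfrac{\ellt{t}\chit{t}}{\cor(\cor+\chit{t})}$. For term $(II)$ I would combine two standard facts: first, $\tilde{\msigma}_t^{-1}\succeq\msigmai{t}^{-1}$ (equality when iteration $t$ does not trigger a reset, and otherwise $\tilde{\msigma}_t^{-1}\succeq\mi=\msigmai{t}^{-1}$ because within a segment $\msigmai{s}^{-1}\succeq\mi$ for all $s$ by the additive update); second, Mahalanobis projection onto the ball $B$ of radius $\rb$ is non-expansive in its governing metric relative to any point in $B$. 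Chaining these through the intermediate $\dta{\tilde{\vw}_t}{\vui{t-1}}{t}$, and using $\Vert\vui{t-1}\Vert\leq\rb$, yields $(II)\geq 0$.

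For term $(III)$ I would expand directly,
\[
(III) = 2\vwti{t}\msigmai{t}^{-1}(\vui{t}-\vui{t-1}) + \vuti{t-1}\msigmai{t}^{-1}\vui{t-1} - \vuti{t}\msigmai{t}^{-1}\vui{t},
\]
and then (a) bound the cross term by Cauchy--Schwarz using $\Vert\vwi{t}\Vert\leq\rb$ (the projection step) together with the segment spectral bound $\msigmai{t}^{-1}\preceq\Lambda_i^{-1}\mi$ to obtain $\bigl|2\vwti{t}\msigmai{t}^{-1}(\vui{t}-\vui{t-1})\bigr|\leq 2\rb\Lambda_i^{-1}\Vert\vui{t-1}-\vui{t}\Vert$, and (b) invoke $\msigmai{t}^{-1}\succeq\msigmai{t-1}^{-1}$, which follows from the additive update, to replace $\vuti{t-1}\msigmai{t}^{-1}\vui{t-1}$ by the smaller $\vuti{t-1}\msigmai{t-1}^{-1}\vui{t-1}$. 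Assembling $(I)+(II)+(III)$ with these bounds produces the stated lower bound on $\Delta_t$.

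The delicate point I expect to be the main obstacle is step (b) at a reset boundary: if iteration $t$ triggers a covariance reset then $\msigmai{t}^{-1}=\mi$ satisfies the opposite relation $\msigmai{t}^{-1}\preceq\msigmai{t-1}^{-1}$, so the direct replacement of $\msigmai{t}^{-1}$ by $\msigmai{t-1}^{-1}$ inside $\vuti{t-1}(\cdot)\vui{t-1}$ is invalid. Handling this uniformly---either by treating reset iterations as a separate case, or by routing the comparator-change argument through $\tilde{\msigma}_t^{-1}$ at reset boundaries and absorbing the residual into the $\rb\Lambda_i^{-1}$-scaled drift slack---is the technical subtlety the proof needs to dispatch cleanly.
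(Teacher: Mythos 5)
Your decomposition and the bounds you apply to each piece are essentially the paper's proof. The paper splits $\Delta_t$ into four telescoping terms through the intermediate triples $(\tvwi{t},\vui{t-1},\tilde{\msigma}_{t})$, $(\tvwi{t},\vui{t-1},\msigmai{t})$, $(\vwi{t},\vui{t-1},\msigmai{t})$: your $(I)$ is its $\Delta_{t,1}$, dispatched by \lemref{reg_lemma}; your $(II)$ merges its $\Delta_{t,2}$ (nonnegative because $\tilde{\msigma}_{t}^{-1}\succeq \mi = \msigmai{t}^{-1}$ at a reset and they are equal otherwise) and $\Delta_{t,3}$ (nonnegative by the Herbster--Warmuth projection inequality); your $(III)$ is its $\Delta_{t,4}$, expanded and bounded exactly as you describe via Cauchy--Schwarz with $\Vert\vwi{t}\Vert\leq\rb$ and $\lambda_{\max}\paren{\msigmai{t}^{-1}}\leq\Lambda_i^{-1}$.

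On the ``delicate point'' you flag at reset boundaries: the paper does not resolve it. It simply asserts $\msigmai{t}\preceq\msigmai{t-1}$ to replace $\vuti{t-1}\msigmai{t}^{-1}\vui{t-1}$ by $\vuti{t-1}\msigmai{t-1}^{-1}\vui{t-1}$, and that ordering holds only at non-reset iterations; when a reset fires, $\msigmai{t}=\mi\succeq\msigmai{t-1}$, which is precisely the failure mode you identify. So you have located a step the published proof glosses over, not a gap in your plan relative to the paper's. A clean repair along the lines you sketch is available --- for instance, at each reset iteration the defect $\vuti{t-1}\paren{\msigmai{t-1}^{-1}-\mi}\vui{t-1}$ is at most $\rb^2\paren{\Lambda_i^{-1}-1}$ (since $\msigmai{t-1}\succeq\Lambda_i\mi$ just before a reset), and there are only $n$ such iterations, so the accumulated correction can be absorbed into the bound --- but no such argument appears in the paper itself.
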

\begin{proof}
We write $\Delta_t$ as a telescopic sum of four terms as follows,
\begin{align*}
 \Delta_{t,1} &= \dta{\vwi{t-1}}{\vui{t-1}}{t-1}-\dta{\tilde{\vw}_t}{\vui{t-1}}{\tilde{t}}\\
 \Delta_{t,2} &= \dta{\tilde{\vw}_t}{\vui{t-1}}{\tilde{t}}-\dta{\tilde{\vw}_t}{\vui{t-1}}{t}\\
\Delta_{t,3} &=\dta{\tilde{\vw}_t}{\vui{t-1}}{t}-\dta{\vwi{t}}{\vui{t-1}}{t}\\
\Delta_{t,4} &= \dta{\vwi{t}}{\vui{t-1}}{t}-\dta{\vwi{t}}{\vui{t}}{t}
\end{align*}

We lower bound each of the four terms. Since the value of
$\Delta_{t,1}$ was computed in \lemref{reg_lemma}, we start with the second
term. If no reset occurs then $\msigmai{t}=\tilde{\msigma}_t$ and
$\Delta_{t,2} = 0$. Otherwise, we use the facts that $0 \preceq\tilde{\msigmai{t}} \preceq\mi$ and $\msigmai{t} = \mi$, and get,
\begin{align*}
\Delta_{t,2} &= \tran{\paren{\tilde{\vw}_t-\vui{t-1}}}\tilde{\msigma}_t^{-1}\paren{\tilde{\vw}_t-\vui{t-1}} \\
  & \quad - \tran{\paren{\tilde{\vw}_t-\vui{t-1}}}\msigmai{t}^{-1}\paren{\tilde{\vw}_t-\vui{t-1}}\\
  & \geq \tr\paren{{\paren{\tilde{\vw}_t-\vui{t-1}}\tran{\paren{\tilde{\vw}_t-\vui{t-1}}}
  \paren{\mi-\mi}}} = 0 ~.
\end{align*}
To summarize, $\Delta_{t,2} \geq 0$.
We can lower bound $\Delta_{t,3} \geq 0$ by using the fact that
$\vwii$ is a projection of $\tilde{\vw}_t$ onto a closed set (a ball
of radius $\rb$ around the origin), which by our assumption contains
$\vui{t}$. Employing Corollary~3 of Herbster and Warmuth~\cite{HerbsterW01} we get,
$\dta{\tilde{\vw}_t}{\vui{t-1}}{t} \geq \dta{\vwi{t}}{\vui{t-1}}{t}$
and thus  $\Delta_{t,3} \geq 0$.

Finally, we lower bound for the fourth term $\Delta_{t,4}$,
\begin{align}
  \Delta_{t,4} &= \tran{\paren{\vwi{t}-\vui{t-1}}}\msigmai{t}^{-1}\paren{\vwi{t}-\vui{t-1}}\nonumber\\
  & \quad - \tran{\paren{\vwi{t}-\vui{t}}}\msigmai{t}^{-1}\paren{\vwi{t}-\vui{t}}\label{lower_bound_delta_4}\\
  & = \vuti{t-1}\msigmai{t}^{-1}\vui{t-1} -
  \vuti{t}\msigmai{t}^{-1}\vui{t} -
  2\vwti{t}\msigmai{t}^{-1}\paren{\vui{t-1}-\vui{t}} \nonumber
\end{align}
We use H\"{o}lder inequality and then Cauchy-Schwartz inequality to get the
following lower bound,
\begin{align*}
  &-2\vwti{t}\msigmai{t}^{-1}\paren{\vui{t-1}-\vui{t}}
 =-2
\tr\paren{\msigmai{t}^{-1}\paren{\vui{t-1}-\vui{t}} \vwti{t}}\\
 &~~~~~~~ \geq -2 \lambda_{max} \paren{\msigmai{t}^{-1} }
  \vwti{t}\paren{\vui{t-1}-\vui{t}}\\
 &~~~~~~~ \geq -2 \lambda_{max} \paren{\msigmai{t}^{-1} } \Vert \vwi{t}
  \Vert \Vert{\vui{t-1}-\vui{t}}\Vert~.
\end{align*}
Using the facts that $\Vert\vwii\Vert\leq\rb$ and that
$\lambda_{max} \paren{\msigmai{t}^{-1} }  =
1/\lambda_{min} \paren{\msigmai{t}} \leq \Lambda_{i}^{-1}$, where $i$
is the current segment index, we get,
\begin{align}
  -2\vwti{t}\msigmai{t}^{-1}\paren{\vui{t-1}-\vui{t}}
    &\geq -2  \Lambda_{i}^{-1} \rb \Vert{\vui{t-1}-\vui{t}}\Vert\label{lower_bound_delta_4_a}~.
\end{align}
Substituting \eqref{lower_bound_delta_4_a} in
\eqref{lower_bound_delta_4} and using $\msigmaii \preceq
\msigmai{t-1}$ a lower bound is obtained,
 \begin{align}
\Delta_{t,4}  \geq&~ \vuti{t-1}\msigmai{t}^{-1}\vui{t-1} - \vuti{t}\msigmai{t}^{-1}\vui{t} - 2\rb\Lambda_i^{-1}\Vert\vui{t-1}-\vui{t}\Vert\nonumber\\
   \geq&~ \vuti{t-1}\msigmai{t-1}^{-1}\vui{t-1} -
  \vuti{t}\msigmai{t}^{-1}\vui{t} \nonumber\\&-
  2\rb\Lambda_i^{-1}\Vert\vui{t-1}-\vui{t}\Vert ~.
\label{lower_bound_delta_4_final}
\end{align}
Combining \eqref{lower_bound_delta_4_final} with
\lemref{lemma:derivation_of_bound} concludes the proof.
\end{proof}
Next we state an upper bound that will appear in one of the summands
of the telescopic sum,
\begin{lemma}\label{log_det}
During the runtime of the ARCOR algorithm we
have,
\begin{align*}
\sum_{t=t_i}^{t_i+T_i}\! \frac{\chit{t}}{(\chit{t}+\cor)} \!\leq\!
\log{\paren{\det{\paren{\msigmai{t_{i+1}-1}^{-1}}}}} \!=\! \log{\paren{\det{\paren{\paren{\msigma^{i}}^{-1}}}}}~.
\end{align*}
We remind the reader that $t_i$ is the first example index after the
$i$th restart, and $T_i$ is the number of examples observed before the
next restart. We also remind the reader the notation $\msigma^{i} =
\msigmai{t_{i+1}-1}$ is the covariance matrix just before the next restart.
\end{lemma}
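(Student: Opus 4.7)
The plan is a standard potential-function telescoping argument using the log-determinant of $\Sigma_t^{-1}$ as the potential. Within the $i$th segment (i.e., between two consecutive restarts), no reset is triggered, so the matrix updates purely additively: $\Sigma_t^{-1} = \Sigma_{t-1}^{-1} + \tfrac{1}{r}\vxi{t}\vxti{t}$. Moreover, the segment starts at a matrix equal to the identity (by the reset rule), so $\log\det(\Sigma_{t_i}^{-1}) = 0$.

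First, I would apply the matrix determinant lemma (Sylvester's identity) to the rank-one update:
\begin{equation*}
\det\!\paren{\Sigma_{t-1}^{-1} + \tfrac{1}{r}\vxi{t}\vxti{t}} = \det\!\paren{\Sigma_{t-1}^{-1}}\paren{1 + \tfrac{1}{r}\vxti{t}\Sigma_{t-1}\vxi{t}} = \det\!\paren{\Sigma_{t-1}^{-1}}\paren{1 + \tfrac{\chi_t}{r}}.
\end{equation*}
Taking logs gives the one-step potential difference $\log\det(\Sigma_t^{-1}) - \log\det(\Sigma_{t-1}^{-1}) = \log\!\paren{1 + \chi_t/r}$.

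Next, to relate this to the summand $\chi_t/(\chi_t+r)$, I would use the elementary inequality $\log(1+y) \geq y/(1+y)$ valid for $y > -1$. Setting $y = \chi_t/r \geq 0$ yields $\log(1+\chi_t/r) \geq \tfrac{\chi_t/r}{1+\chi_t/r} = \tfrac{\chi_t}{\chi_t+r}$. Thus
\begin{equation*}
\frac{\chi_t}{\chi_t + r} \leq \log\det(\Sigma_t^{-1}) - \log\det(\Sigma_{t-1}^{-1}).
\end{equation*}

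Finally, I would telescope this inequality over $t$ in the $i$th segment. Because the segment starts with $\Sigma_{t_i} = I$ (so $\log\det(\Sigma_{t_i}^{-1}) = 0$) and no reset occurs strictly before $t_{i+1}$, the telescoping sum collapses to $\log\det(\Sigma_{t_{i+1}-1}^{-1}) = \log\det((\Sigma^i)^{-1})$, proving the claim. There is no real obstacle here; the only subtlety is bookkeeping the segment endpoints to ensure that the matrix is not reset inside the summation range, which is exactly guaranteed by the definition of $T_i$ and $\Sigma^i$ in the lemma statement.
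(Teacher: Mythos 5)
Your proof is correct and follows exactly the standard argument that the paper omits (it defers to Lemma~4 of Crammer et al.\ \cite{CrammerKuDr09}): a rank-one determinant identity giving $\log\det(\msigmai{t}^{-1})-\log\det(\msigmai{t-1}^{-1})=\log(1+\chit{t}/\cor)$, the bound $\log(1+y)\geq y/(1+y)$, and telescoping from $\log\det(\mi)=0$ at the segment start. The only imprecision is inherited from the paper's own index bookkeeping at the segment boundaries, which you correctly flag as the sole subtlety.
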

The proof of the lemma is similar to the proof of Lemma~4 by Crammer
et.~al.~\cite{CrammerKuDr09} and thus omitted.  We now put all the
pieces together and prove \thmref{non_stat_bound}.
\begin{proof}
We bound the sum $\sum_t\Delta_t$ from above and below, and start with
an upper bound using the property of telescopic sum,
\begin{align}
 \sum_t \Delta_t &= \sum_t \paren{\dta{\vwi{t-1}}{\vui{t-1}}{t-1}
-\dta{\vwi{t}}{\vui{t}}{t}} \nonumber\\
&= \dta{\vwi{0}}{\vui{0}}{0}
-\dta{\vwi{T}}{\vui{T}}{T} \leq {\dta{\vwi{0}}{\vui{0}}{0}}~.
\label{lower_bound}
\end{align}

We compute a lower bound by applying \lemref{deltas},
\begin{align}
&\sum_t \Delta_t
\geq \sum_t  \Bigg(\frac{1}{\cor}\paren{\ellt{t} - \gllt{t}} -
  \ellt{t} \frac{\chit{t}}{\cor(\cor+\chit{t})}\nonumber\\
  & \quad  + \vuti{t-1}\msigmai{t-1}^{-1}\vui{t-1} -
  \vuti{t}\msigmai{t}^{-1}\vui{t} -
  2\rb\Lambda_{i(t)}^{-1}\Vert\vui{t-1}-\vui{t}\Vert\Bigg)~,\nonumber
\end{align}
where ${i(t)}$ is the number of restarts occurred before observing the
$t$th example. Continuing to develop the last equation we obtain,
\begin{align}
\sum_t \Delta_t\geq& \frac{1}{\cor}\sum_t  \ellt{t} - \frac{1}{\cor}\sum_t  \gllt{t} -
  \sum_t\ellt{t} \frac{\chit{t}}{\cor(\cor+\chit{t})}\nonumber\\
  &  + \sum_t \Bigg( \vuti{t-1}\msigmai{t-1}^{-1}\vui{t-1} -
  \vuti{t}\msigmai{t}^{-1}\vui{t}\Bigg)\nonumber\\
  & - \sum_t
  2\rb\Lambda_{i(t)}^{-1}\Vert\vui{t-1}-\vui{t}\Vert\nonumber\\
=& \frac{1}{\cor}\sum_t  \ellt{t} - \frac{1}{\cor}\sum_t  \gllt{t} -
  \sum_t\ellt{t} \frac{\chit{t}}{\cor(\cor+\chit{t})}\nonumber\\
  &  + \vuti{0}\msigmai{0}^{-1}\vui{0} -
  \vuti{T}\msigmai{T}^{-1}\vui{T}\nonumber\\
  & - 2\rb\sum_t
  \Lambda_{i(t)}^{-1}\Vert\vui{t-1}-\vui{t}\Vert ~.\label{upper_bound}
\end{align}

Combining \eqref{lower_bound} with \eqref{upper_bound} and using
$ {\dta{\vwi{0}}{\vui{0}}{0}} =  \vuti{0}\msigmai{0}^{-1}\vui{0}$
(as $\vwi{0}=\vzero$),
\begin{align*}
& \frac{1}{\cor}\sum_t  \ellt{t} - \frac{1}{\cor}\sum_t  \gllt{t} -
  \sum_t\ellt{t} \frac{\chit{t}}{\cor(\cor+\chit{t})} -
  \vuti{T}\msigmai{T}^{-1}\vui{T}\\
& \quad - 2\rb\sum_t
  \Lambda_{i(t)}^{-1}\Vert\vui{t-1}-\vui{t}\Vert \leq 0~.
\end{align*}
Rearranging the terms of the last inequality,
\begin{align*}
\sum_t &  \ellt{t} \leq \sum_t  \gllt{t} +  \sum_t\ellt{t}
\frac{\chit{t}}{\cor+\chit{t}} + \cor  \vuti{T}\msigmai{T}^{-1}\vui{T}\\
& + 2\rb\cor\sum_t
  \frac{1}{\Lambda_{i(t)}}\Vert\vui{t-1}-\vui{t}\Vert~.
\end{align*}

Since $\Vert\vwi{t}\Vert\leq\rb$ and we assume that
$\Vert\vxi{t}\Vert=1$ and $\sup_t \vert\yi{t}\vert=Y$, we get that
$\sup_{ t } \ellt{t} \leq 2(\rb^2+Y^2)$. Substituting the last
inequality in \lemref{log_det}, we bound the second term in the right-hand-side,
\begin{align*}
\sum_t\ellt{t}
\frac{\chit{t}}{\cor+\chit{t}}  &= \sum_i^n \sum_{t=t_i}^{t_i+T_i} \ellt{t}
\frac{\chit{t}}{\cor+\chit{t}} \nonumber\\
&\leq  \sum_i^n \paren{\sup_{ t} \ellt{t}}  \log\det\paren{\paren{\msigma^i}^{-1}} \nonumber\\
&\leq  2\paren{\rb^2+Y^2}\sum_i^n
\log\det\paren{\paren{\msigma^i}^{-1}} ~.
\end{align*}
which completes the proof.
\end{proof}

\subsection{Proof of \lemref{lem:technical}}

\label{proof_lemma_technical}
\begin{proof}
We first use the Woodbury equation to get the following two identities
\begin{align*}
&D_{t}^{-1}=\left[\left(D_{t-1}^{-1}+c^{-1}\mi\right)^{-1}+\vxi{t}\vxti{t}\right]^{-1}\\
& \quad =D_{t-1}^{-1}+c^{-1}\mi-\frac{\left(D_{t-1}^{-1}+c^{-1}\mi\right)\vxi{t}\vxti{t}
\left(D_{t-1}^{-1}+c^{-1}\mi\right)}{1+\vxti{t}\left(D_{t-1}^{-1}+c^{-1}\mi\right)\vxi{t}}\\
&\left(\mi+c^{-1}D_{t-1}\right)^{-1}=\mi-c^{-1}\left(D_{t-1}^{-1}+c^{-1}\mi\right)^{-1}~.
\end{align*}
Multiplying both identities with each other we get, 
\begin{align}
 & D_{t}^{-1}\left(\mi+c^{-1}D_{t-1}\right)^{-1}\nonumber\\
 =~&  \left[D_{t-1}^{-1}+c^{-1}\mi-\frac{\left(D_{t-1}^{-1}+c^{-1}\mi\right)\vxi{t}\vxti{t}\left(D_{t-1}^{-1}+c^{-1}\mi\right)}{1+\vxti{t}\left(D_{t-1}^{-1}+c^{-1}\mi\right)\vxi{t}}\right]\nonumber\\
 &\left[\mi-c^{-1}\left(D_{t-1}^{-1}+c^{-1}\mi\right)^{-1}\right]=\nonumber\\
 =~& D_{t-1}^{-1}-\frac{\left(D_{t-1}^{-1}+c^{-1}\mi\right)\vxi{t}\vxti{t}D_{t-1}^{-1}}{1+\vxti{t}\left(D_{t-1}^{-1}+c^{-1}\mi\right)\vxi{t}}~,\label{identity1}
\end{align}
and, similarly, we multiply the identities in the other order and get,
\begin{align}
 & \left(\mi+c^{-1}D_{t-1}\right)^{-1}D_{t}^{-1}=\nonumber\\
 =~ & \left[\mi-c^{-1}\left(D_{t-1}^{-1}+c^{-1}\mi\right)^{-1}\right]\left[D_{t-1}^{-1}+c^{-1}\mi\vphantom{\frac{\left(D_{t-1}^{-1}+c^{-1}\mi\right)\vxi{t}\vxti{t}\left(D_{t-1}^{-1}+c^{-1}\mi\right)}{1+\vxti{t}\left(D_{t-1}^{-1}+c^{-1}\mi\right)\vxi{t}}}\right.\nonumber\\
 ~& \left.-\frac{\left(D_{t-1}^{-1}+c^{-1}\mi\right)\vxi{t}\vxti{t}\left(D_{t-1}^{-1}+c^{-1}\mi\right)}{1+\vxti{t}\left(D_{t-1}^{-1}+c^{-1}\mi\right)\vxi{t}}\right]\nonumber\\
 =~ &
 D_{t-1}^{-1}-\frac{D_{t-1}^{-1}\vxi{t}\vxti{t}\left(D_{t-1}^{-1}+c^{-1}\mi\right)}{1+\vxti{t}\left(D_{t-1}^{-1}+c^{-1}\mi\right)\vxi{t}}~,
\label{identity2}
\end{align}

Finally, from \eqref{identity1} we get,
\begin{align*}
&\left(\mi+c^{-1}D_{t-1}\right)^{-1}D_{t}^{-1}\vxi{t}\vxti{t}D_{t}^{-1}\left(\mi+c^{-1}D_{t-1}\right)^{-1}-D_{t-1}^{-1}\\
&\quad+\left(\mi+c^{-1}D_{t-1}\right)^{-1}\left[D_{t}^{-1}\left(\mi+c^{-1}D_{t-1}\right)^{-1}+c^{-1}\mi\right]\\
=~&
 \left(\mi+c^{-1}D_{t-1}\right)^{-1}D_{t}^{-1}\vxi{t}\vxti{t}D_{t}^{-1}\left(\mi+c^{-1}D_{t-1}\right)^{-1}\\
&\quad-D_{t-1}^{-1}
+\Bigg[
\mi-c^{-1}\left(D_{t-1}^{-1}+c^{-1}\mi\right)^{-1}
\Bigg]
\Bigg[
D_{t-1}^{-1}+c^{-1}\mi\vphantom{\frac{
\left(
D_{t-1}^{-1}+c^{-1}\mi
\right)
\vxi{t}\vxti{t}
\left(
D_{t-1}^{-1}+c^{-1}\mi
\right)
}{1+\vxti{t}
\left(
D_{t-1}^{-1}+c^{-1}\mi
\right
)\vxi{t}}}
\\
&\quad-\frac{\left(D_{t-1}^{-1}+c^{-1}\mi\right)\vxi{t}\vxti{t}D_{t-1}^{-1}}{1+\vxti{t}\left(D_{t-1}^{-1}+c^{-1}\mi\right)\vxi{t}}\Bigg]~.
\end{align*}
We further develop the last equality and use \eqref{identity1} and
\eqref{identity2} in the second equality below,
\begin{align*}
 =~ &
 \left(\mi\!+\!c^{-1}D_{t-1}\right)^{-1}\!D_{t}^{-1}\vxi{t}\vxti{t}D_{t}^{-1}\left(\mi\!+\!c^{-1}
 D_{t-1}\right)^{-1}\!-\!D_{t-1}^{-1}\\
&+D_{t-1}^{-1}-\frac{D_{t-1}^{-1}\vxi{t}\vxti{t}D_{t-1}^{-1}}{1+\vxti{t}
\left(D_{t-1}^{-1}+c^{-1}\mi\right)\vxi{t}}\\
 =~ &
 \left[
D_{t-1}^{-1}-\frac{D_{t-1}^{-1}\vxi{t}\vxti{t}\left(D_{t-1}^{-1}+c^{-1}\mi\right)}
 {1+\vxti{t}\left(D_{t-1}^{-1}+c^{-1}\mi\right)\vxi{t}}\right]\vxi{t}\vxti{t}\\
& \Bigg[D_{t-1}^{-1}\vphantom{\frac{\left(D_{t-1}^{-1}+c^{-1}\mi\right)\vxi{t}\vxti{t}D_{t-1}^{-1}}
 {1+\vxti{t}\left(D_{t-1}^{-1}+c^{-1}\mi\right)\vxi{t}}}
-\frac{\left(D_{t-1}^{-1}+c^{-1}\mi\right)\vxi{t}\vxti{t}D_{t-1}^{-1}}
 {1+\vxti{t}\left(D_{t-1}^{-1}+c^{-1}\mi\right)\vxi{t}}
\Bigg]\\
&
-\frac{D_{t-1}^{-1}\vxi{t}\vxti{t}D_{t-1}^{-1}}{1+\vxti{t}\left(D_{t-1}^{-1}+c^{-1}
\mi\right)\vxi{t}}\\
 =~ & -\frac{\vxti{t}\left(D_{t-1}^{-1}+c^{-1}\mi\right)\vxi{t}D_{t-1}^{-1}\vxi{t}\vxti{t}
 D_{t-1}^{-1}}{\left(1+\vxti{t}\left(D_{t-1}^{-1}+c^{-1}\mi\right)\vxi{t}\right)^{2}}~~\preceq~~0 ~.
\end{align*}
 \end{proof}

\subsection{Derivations for \thmref{thm:basic_bound}}
\label{algebraic_manipulation}
\begin{eqnarray*}
 &  & \left(y_{t}-\hat{y}_{t}\right)^{2}+\min_{\vui{1},\ldots,\vui{t-1}}Q_{t-1}\left(\vui{1},\ldots,\vui{t-1}\right)\\
 &  &-\min_{\vui{1},\ldots,\vui{t}}Q_{t}\left(\vui{1},\ldots,\vui{t}\right)\\
 &=  & \left(y_{t}-\hat{y}_{t}\right)^{2}-\veti{t-1}D_{t-1}^{-1}\vei{t-1}+f_{t-1}+\veti{t}D_{t}^{-1}\vei{t}-f_{t}\\
 & = & \left(y_{t}-\hat{y}_{t}\right)^{2}-\veti{t-1}D_{t-1}^{-1}\vei{t-1}\\
 &  &
 +\left(\left(\mi+c^{-1}D_{t-1}\right)^{-1}\vei{t-1}+y_{t}\vxi{t}\right)^{\top}D_{t}^{-1}\\
&&~~~\left(\left(\mi+c^{-1}D_{t-1}\right)^{-1}\vphantom{\left(\mi+c^{-1}D_{t-1}\right)^{-1}}\vei{t-1}+y_{t}\vxi{t}\right)\\
&&
+\veti{t-1}\left(c\mi+D_{t-1}\right)^{-1}\vei{t-1}-y_{t}^{2} ~,
\end{eqnarray*}
where the last equality follows from \eqref{e} and \eqref{f}. We proceed to develop the
last equality,
\begin{eqnarray*}
&=  & \left(y_{t}-\hat{y}_{t}\right)^{2}-\veti{t-1}D_{t-1}^{-1}\vei{t-1}\\
 &  & +\veti{t-1}\left(\mi+c^{-1}D_{t-1}\right)^{-1}D_{t}^{-1}\left
 (\mi+c^{-1}D_{t-1}\right)^{-1}\vei{t-1}\\
 &  & +2y_{t}\vxti{t}D_{t}^{-1}
 \left(\mi+c^{-1}D_{t-1}\right)^{-1}\vei{t-1}\\
 &  &     +y_{t}^{2}\vxti{t}D_{t}^{-1}\vxi{t}
  +\veti{t-1}\left(c\mi+D_{t-1}\right)^{-1}\vei{t-1}-y_{t}^{2}\\
 & = & \left(y_{t}-\hat{y}_{t}\right)^{2}
+\veti{t-1}
\Bigg(-D_{t-1}^{-1}\\
&&
+\left(\mi+c^{-1}D_{t-1}\right)^{-1}
 D_{t}^{-1}
\Big(\mi +c^{-1}D_{t-1}
\Big)^{-1}\\
&&+c^{-1}\left(\mi+c^{-1}D_{t-1}\right)^{-1}
\Bigg)
 \vei{t-1}\\
 &  & +2y_{t}\vxti{t}D_{t}^{-1}\left(\mi\!+\!c^{-1}D_{t-1}\right)^{-1}\vei{t-1} \!+\!y_{t}^{2}\vxti{t}D_{t}^{-1}\vxi{t}\!-\!y_{t}^{2}\\
 &  =&
 \left(y_{t}-\hat{y}_{t}\right)^{2}+\veti{t-1}
\Bigg(-D_{t-1}^{-1}+
\Big(\mi\vphantom{\left[D_{t}^{-1}\left(\mi+c^{-1}D_{t-1}\right)^{-1}+c^{-1}\mi\right]}
+c^{-1}D_{t-1}
\Big)^{-1}
\\
 &  &\left[D_{t}^{-1}\left(\mi+c^{-1}D_{t-1}\right)^{-1}+c^{-1}\mi\right]
\Bigg)\vei{t-1}\\
 &  & +2y_{t}\vxti{t}D_{t}^{-1}\left(\mi\!+\!c^{-1}D_{t-1}\right)^{-1}\vei{t-1}\!+\!y_{t}^{2}\vxti{t}D_{t}^{-1}\vxi{t}\!-\!y_{t}^{2}~.
\end{eqnarray*}

\subsection{Proof of \corref{cor:main_corollary}}
\label{proof_cor_main}
\begin{proof}
Plugging \lemref{lem:bound_1} in \thmref{thm:basic_bound} we have for
all $(\vui{1} \dots \vui{T})$,
\begin{align}
L_T(\textrm{LASER})\nonumber\\
 \leq~&  b\left\Vert
    \vui{1}\right\Vert ^{2}+c V^{(2)}+L_T(\{\vui{t}\})+Y^{2} \ln\left|\frac{1}{b}D_{T}\right|\nonumber\\
&+c^{-1}Y^2 \sum_{t=1}^{T} \tr\paren{D_{t-1}} \nonumber\\
\leq~ &
  b\left\Vert
     \vui{1}\right\Vert ^{2}+L_T(\{\vui{t}\})+Y^{2}
   \ln\left|\frac{1}{b}D_{T}\right|\nonumber\\
&+c^{-1}Y^2\tr\paren{D_0}+c V^{(2)}\nonumber\\
& +c^{-1}Y^2 T d  \max\braces{ \!\!\frac{3X^2 \!+\!
   \sqrt{X^4\!+\!4X^2 c}}{2},b\!+\!X^2\!\!}~,\nonumber
\end{align}
where the last inequality follows from \lemref{eigen_values_lemma}.
The term $c^{-1}Y^2\tr\paren{D_0}$ does not depend on $T$, because
\[
c^{-1}Y^2\tr\paren{D_0}=c^{-1}Y^2d\frac{bc}{c-b}=\frac{\varepsilon}{1-\varepsilon}Y^{2}d~.
\]
To show \eqref{bound1}, note that
\[
V^{(2)} \leq T \frac{\sqrt{2}Y^2dX}{\mu^{3/2}} \Leftrightarrow \mu \leq  \paren{\frac{\sqrt{2}Y^2dXT}{V^{(2)}}}^{2/3}=c~.
\]
We thus have that the right term of \eqref{final_cor} is upper bounded,
\begin{align*}
&\max\braces{  \frac{ 3X^2 +
   \sqrt{ X^4+4X^2 c } }{ 2 },b+X^2 }\\
 \leq~&  \max\braces{  \frac{ 3X^2 +
   \sqrt{ 8X^2 c } }{ 2 },b+X^2 }\\
    \leq~&
\max\braces{  \sqrt{ 8X^2 c },b+X^2 }
\leq 2X\sqrt{ 2c } ~.
\end{align*}
Using this bound and plugging the value of $c$ from \eqref{c1} we bound
\eqref{final_cor},
\begin{align*}
&\paren{\frac{\sqrt{2}T Y^2 d X}{V^{(2)}}}^{2/3} V^{(2)} + Y^2 T d 2X
\sqrt{2 \paren{\frac{\sqrt{2}T Y^2 d X}{V^{(2)}}}^{-2/3}}\\
& \quad=
3\paren{\sqrt{2}T Y^2 d X}^{2/3} \paren{V^{(2)}}^{1/3} ~,
\end{align*}
which concludes the proof.
\end{proof}

\subsection{Details for the bound \eqref{high_drift}}
\label{details_for_second_bound}
To show the bound \eqref{high_drift}, note that,
\[
V^{(2)} \geq T \frac{Y^2dM}{\mu^{2}} \Leftrightarrow \mu \geq \sqrt{ \frac{TY^2dM}{V^{(2)}}}=c~.
\]
We thus have that the right term of \eqref{final_cor} is upper bounded
as follows,
\begin{align*}
&\max\braces{ \frac{3X^2 +
   \sqrt{X^4+4X^2 c}}{2},b+X^2}\\
\leq~ &
\max\braces{ 3X^2,\sqrt{X^4+4X^2 c},b+X^2} \\
\leq~ &
\max\braces{ 3X^2,\sqrt{2}X^2 ,\sqrt{8X^2 c},b+X^2}\\
=~& \sqrt{8X^2}
\max\braces{ \frac{3X^2}{\sqrt{8X^2}},\sqrt{
    c},\frac{b+X^2}{\sqrt{8X^2}}} \\
=~& \sqrt{8X^2}
\sqrt{\max\braces{ \frac{(3X^2)^2}{{8X^2}},
    c,\frac{\paren{b+X^2}^2}{{8X^2}}}}\\
=~& \sqrt{8X^2}
\sqrt{\max\braces{ \mu,c}}
\leq \sqrt{8X^2} \sqrt{\mu} &= M ~.
\end{align*}
Using this bound and plugging $c= \sqrt{{Y^2dMT}/{V^{(2)}}}$ we bound
\eqref{final_cor},
\(
\sqrt{\frac{Y^2dMT}{V^{(2)}}}V^{(2)} + \frac{1}{\sqrt{\frac{Y^2dMT}{V^{(2)}}}} TdY^2M =
2\sqrt{Y^2dMTV^{(2)}} ~.
\)

{
\bibliography{bib}
\bibliographystyle{plain}}
\end{document}